\def\eqref#1{equation~\ref{#1}}
\def\1{\bm{1}}
\DeclareMathAlphabet{\mathsfit}{\encodingdefault}{\sfdefault}{m}{sl}
\SetMathAlphabet{\mathsfit}{bold}{\encodingdefault}{\sfdefault}{bx}{n}
\newcommand{\R}{\mathbb{R}}
\DeclareMathOperator*{\argmin}{arg\,min}
\newcommand\kf[1]{\textcolor{black}{#1}}
\newcommand\kh[1]{\textcolor{black}{#1}}
\newcommand\km[1]{\textcolor{black}{#1}}
\newcommand\masosugg[1]{\textcolor{black}{#1}}
\newtheorem*{definition*}{Definition}
\newtheorem*{proposition*}{Proposition}
\theoremstyle{plain}
\newtheorem{theorem}{Theorem}[section]
\newtheorem{proposition}[theorem]{Proposition}
\newtheorem{nono-prop}{Proposition}[]
\newtheorem{lemma}[theorem]{Lemma}
\newtheorem{corollary}[theorem]{Corollary}
\theoremstyle{definition}
\newtheorem{definition}[theorem]{Definition}
\theoremstyle{remark}
\definecolor{medGray}{RGB}{230,230,230}
\newcommand{\figcaption}[1]{\def\@captype{figure}\caption{#1}}
\newcommand{\tblcaption}[1]{\def\@captype{table}\caption{#1}}
\newcommand{\bs}{\mathbf{s}}
\let\oldcirc\circ
\renewcommand{\circ}{{\,\scaleobj{0.8}{\oldcirc}\,}}
\setlist{leftmargin=5.5mm}
\newenvironment{tight_itemize}{
\begin{itemize}
  \setlength{\itemsep}{0pt}
  \setlength{\parskip}{0pt}
  \setlength{\topsep}{0pt}
  \setlength{\partopsep}{0pt}
}{\end{itemize}}
\newenvironment{tight_enumerate}{
\begin{enumerate}
  \setlength{\itemsep}{0pt}
  \setlength{\parskip}{0pt}
  \setlength{\topsep}{0pt}
  \setlength{\partopsep}{0pt}
}{\end{enumerate}}
\newcommand{\C}{{\mathbb{C}}}
\newcommand{\cH}{\mathcal{H}}
\newcommand{\cX}{\mathcal{X}}
\newcommand{\cY}{\mathcal{Y}}
\title{Neural Fourier Transform: A General Approach to Equivariant Representation Learning}
\author{%
  Masanori Koyama$^{1}$ 
  Kenji Fukumizu$^{2,1}$ 
  Kohei Hayashi$^{1}$ 
  Takeru Miyato$^{3,1}$ \\
  $^{1}$Preferred Networks, Inc. 
  $^{2}$The Institute of Statistical Mathematics 
  $^{3}$University of T\"ubingen
}
\begin{document}
\maketitle
\begin{abstract}
Symmetry learning has proven to be an effective approach for extracting the hidden structure of data, with the concept of equivariance relation playing the central role. 
However, most of the current studies are built on architectural theory and corresponding assumptions on the form of data. 
We propose Neural Fourier Transform (NFT), a general framework of learning the latent linear action of the group without assuming explicit knowledge of how the group acts on data.
We present the theoretical foundations of NFT and show that 
the existence of a linear equivariant feature, which has been assumed ubiquitously in equivariance learning, is equivalent to the existence of a group invariant kernel on the dataspace. 
We also provide experimental results to demonstrate the application of NFT in typical scenarios with varying levels of knowledge about the acting group.
% We also provide experimental results to demonstrate the usefulness of NFT in three typical scenarios with different levels of knowledge on the symmetry group.
%Symmetry learning has proven to be a powerful tool in improving the generalization ability of machine learning models, with the concept of equivariance relation playing the central role. 
%However, most of today's studies on equivariance relations are built on architectural theory and corresponding assumptions on the form of the dataset. 
%In this paper, we repackage the learning of equivariance into a framework of neural Fourier transform (NFT), a neural generalization of Fourier transform along with the theoretical result stating that the existence of a linear equivariant feature, as has been assumed ubiquitously in symmetry learning, is equivalent to the existence of a group invariant kernel on the dataspace.  
%We will provide theoretical results of NFT along with its several use cases, thereby proposing a new direction in the study of equivariance in ML.
\end{abstract}

\section{Introduction} 
Various types of data %space 
admit symmetric structure explicitly or implicitly, and such symmetry is often formalized with {\em action} of a {\em group}. As a typical example, an RGB image can be regarded as a function defined on the set of 2D coordinates $\mathbb{R}^{2} \to \mathbb{R}^3$, and this image admits the standard shift and rotation on the coordinates.  Data of 3D object/scene accepts SO(3) action \citep{pointcloud, Octrees}, and molecular data accepts permutations \citep{molecule} as well. 
To leverage the symmetrical structure for various tasks, equivariant features are used in many applications 
in hope that such features extract essential information of data.
%because they preserve 
%it preserves 
%the symmetrical structure existing in the observation space.
% \miyatonote{if  accepting this sentence, it'd be better also to replace the example of "2D rendering of a 3D scene" with different examples like point cloud (SO(3)), molecular (SE(3) and permutation), and graph-structured data, whose observation spaces are explicitly structured.}

%An idea that has proven to be useful in exploiting the action-induced symmetry for various generalization task is equivariant mapping, which is a map that preserves the symmetry in the input 

% An idea that has proven to be useful in exploiting the action-induced symmetry is equivariant mapping, which allows one to lift the action in the feature space to the action in the output space. 
% The most classic and pervasive equivariant analysis of symmetry in Science is Fourier Transform~(FT), which is used to study the symmetry induced by the shift action $a \circ f(\cdot):= f(\cdot -a)$ on square integrable function $f \in L_2(\mathbb{R})$.

Fourier transform (FT) is one of the most classic tools in science that utilizes an equivariance relation to investigate the symmetry in data. % \citep{Rudin1991}.  
Originally, FT was developed to study the symmetry induced by the shift action $a \circ f(\cdot):= f(\cdot -a)$ on a square-integrable function $f \in L_2(\mathbb{R})$.
FT maps $f$ invertibly to another function $\Phi(f) = \hat f \in L_2(\mathbb{R})
$.  It is well known that FT
satisfies $(a \circ \hat f)(\omega)= e^{-i a \omega} \hat f(\omega)$ and hence the equivariant relation $\Phi(a \circ f) = e^{i a \omega} \Phi(f)$.
By this equivariant mapping, FT achieves the decomposition of $L_2(\mathbb{R})$ into shift-equivariant subspaces (also called \textit{frequencies}/\textit{irreducible representations}).
%, which are useful for various tasks. 
This idea has been extended %is extendable 
to the actions of general groups, and extensively studied as \textit{harmonic analysis on groups} \citep{Rudin1991}.
In recent studies of deep learning, group convolution (GC) is a popular approach to equivariance \citep{SteerableCNN, PracticalEquiv, cohen2014learning,cohen2016group, weiler2019general}, and the theory of FT also provides its mathematical foundation \citep{kondor_generalized, intertwinersTaco, generalCNN}. 
One practical limitation of FT and GC is that they can be applied only when we know how the group acts on the data.
% the dataset consists of signals defined over a set (i.e. coordinates) on which we know how the group elements act. 
% This common limitation turns out to be the plague that restricts the application of GC as a tool in equivariance learning
Moreover, FT and GC also assume that the group acts linearly on the constituent unit of input (e.g. pixel).
%Unfortunately, 
In many cases, however, the group action on the dataset may not be linear or explicit.  For example, when the observation process involves an unknown nonlinear deformation such as fisheye transformation,
%the observations are deformed 
%at the time of data collection 
%by some unknown nonlinear transformation, 
the effect of the action on the observation is also intractable and nonlinear (Fig.~\ref{fig:panda}, left).
Another such instance may occur for the 2D pictures of a rotating 3D object rendered with some camera pose (Fig.~\ref{fig:panda}, right). 
In both examples, any two consecutive frames are implicitly generated as $(x, g \circ x)$, where $g$ is a group action of \textit{shift}/\textit{rotation}. They are clearly not linear transformations in the 2D pixel space.
%, and they respectively involve nonlinear fisheye transformation and 2D rendering of 3D object}. 
To learn the hidden equivariance relation describing the symmetry of data in wider situations, we must 
%find a way to 
extend the equivariance learning and Fourier analysis to the cases in which the group action on each data may be nonlinear or implicit.

To formally establish %the 
a solution to this problem, we propose Neural Fourier Transform (NFT), 
a nonlinear extension of FT, 
%an extension FT, 
as a general framework for equivariant representation learning.  
We generalize the approach explored in \citep{MSP} and provide a novel theoretical foundation for the nonlinear learning of equivariance.  
As an extension of FT, the goal of NFT is to express the data space as the direct sum of linear equivariant spaces for nonlinear, analytically intractable actions. 
Given a dataset consisting of short tuple/sequences $(x_1,x_2,\ldots)$ that are generated by an unknown group action, 
NFT conducts Fourier analysis %on the dataset 
that is composed of (i) learning of an equivariant latent feature on which the group action is {\em linear}, and (ii) the study of decomposed latent feature as a direct sum of action-equivariant subspaces, which correspond to frequencies.  
Unlike the previous approaches to equivariance learning that rely on model architectures \citep{keller2021topographic, SteerableCNN}, the learning (i) of NFT does not assume any analytically tractable knowledge of the group action in the observation space, and simply uses an autoencoder-type model to infer the actions from the data. 
In addition to the proposed framework of NFT, we detail our theoretical and empirical contributions as follows.
%With NFT, we can conduct harmonic analysis on 
%NFT can be applied to any type of dataset with a group action, and use the extracted equivariant features for inference.

%We intend to propose this framework not as a specific algorithm that implements it, but as a framework that can be applied to equivariance learning in general. 
%In this paper, we therefore present this framework together with the following contributions. 

% As a main contribution, we reveal that the autoencoder-type models with the equivariant constraint of “linear action in the latent space” can achieve a nonlinear version of the Fourier transform. More specific theoretical and empirical contributions are as follows. 

%We propose this framework not as a specific algorithm that implements it, but as a general framework of equivariance learning. We therefore present it together with the following contributions. 

\begin{tight_enumerate}
\item We answer the essential theoretical questions of NFT (Sec \ref{sec:theory}). In particular, 
% We provide the answers to the following several essential theoretical questions

\begin{tight_itemize}
    \item {\em Existence.} We elucidate when we can find linear equivariant latent features and hence when we can conduct spectral %harmonic 
    analysis on a generic dataset.
    \item {\em Uniqueness.} We show that NFT associates a nonlinear group action with a set of irreducible representations, assuring NFT's ability to find the unique set of the equivariant subspaces. 
    \item {\em Finite-dimensional approximation.} We show that the autoencoder-type loss chooses a set of irreducible representations 
    %as the basic building blocks 
    in approximating the group action in the observation space.
    %\item  We will present a stronger result than the above existence by claiming
    %The above existence and uniqueness results are based on our stronger theorem,  claiming 
   %that the set of all equivariant maps into linear latent space has a one-to-one correspondence with the set of group invariant positive definite kernels on the data space. This result also guarantees that the richness of the linearly analyzable symmetry is determined by the richness of the action-invariant kernels,  providing new insights into equivariance learning in general. 
\end{tight_itemize}
%\item 
%Most importantly, by establishing NFT as Fourier transform on the RKHS defined on entire dataset, we will show that the set of all equivariant maps into linear latent space has one to one correspondence with a group invariant positive definite kernel on the dataset. This result guarantees that the richness of the linearly analyzable symmetry is determined by the richness of the action-invariant kernels,  providing new insights into equivariance learning in general. 
\item %\kf{Our experimental results show that:}
We experimentally show that:
%We not only provide an experiment that validates the direct connection between DFT and NFT, we will also provide several use-cases of NFT that extend the scope of \citep{MSP} and verify NFT's useful properties. We particularly show that: 
\begin{tight_itemize}
\item NFT conducts a data-dependent, nonlinear spectral analysis.  It can compress the data under nonlinear deformation and favorably extract the dominant modes of symmetry (Sec \ref{sec:1d}). 
\item By using knowledge 
%introducing inductive bias 
about the group, 
%that acts on the dataset, 
%the NFT framework 
NFT can make inferences even when the action is not invertible in the observation space. 
For example, occlusion can be resolved (Sec \ref{sec:Ggknown}).
%if we know that a given 2D image sequences are rendered from 3D space, occluded parts can be inferred by using 3D motion group in NFT.} 
% For example, occlusion can be recovered: if we know that given data of 2D image sequences are rendered as a projection of 3D space, introducing the 3D motion group in NFT can infer the 3D shape even when some parts are occluded. 
\item By introducing prior knowledge of the irreducible representations, we can improve the out-of-domain (OOD) generalization ability of the features extracted by the encoder (Sec \ref{sec:Ggknown}). %We will show that we can also use NFT to compress the dataset. 
\end{tight_itemize}
\end{tight_enumerate}

\begin{figure}[tb]
\begin{center}
\includegraphics[scale=0.067]{./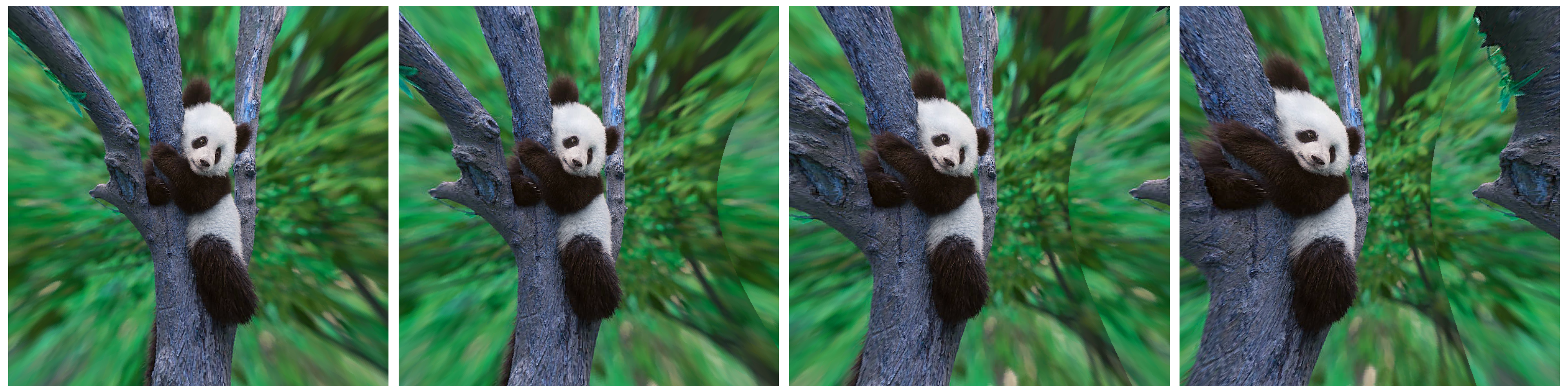}
\includegraphics[scale=0.67]{./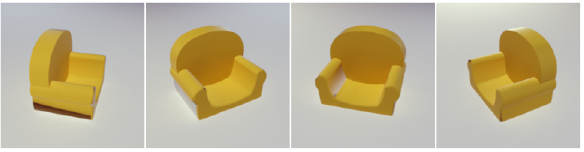}
\end{center}
\caption{
Left: An image sequence produced by applying fisheye transformation after horizontal shifting.
Right: 2D renderings of a spinning chair.
} \label{fig:panda} 
\end{figure}

\begin{figure}[tb]
\begin{center}
\includegraphics[scale=0.5]{./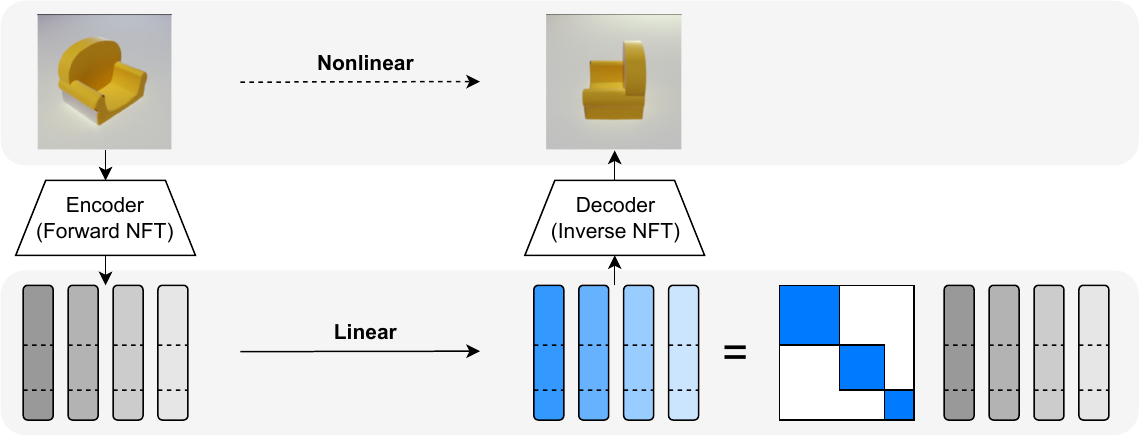}
\end{center}
\caption{NFT framework. Each block corresponds to irreducible representation/frequency.
%In eq.\ref{eq:NFT_P}, The encoder $\Psi P^{-1}$ and decoder $P \Phi$ are respectively the inverse and forward NFT. 
} \label{fig:NFTframework} 
\end{figure}

\section{Preliminaries} 
\label{sec:DFT}
In this paper, $G$ is a group, and $e$ its unit element.  We say $G$ {\em acts} on a set $\cX$ if there is a map $G\times \cX \to \cX$, $(g,x)\mapsto g\circ x$, such that $e \circ x = x$ and $(gh)\circ x = g\circ (h\circ x)$ for any $x\in \cX$ and $g,h\in G$.  When $G$ acts on $\cX$ and $\cY$, a map $\Phi:\cX\to\cY$ is called {\em equivariant} if $\Phi(g\circ x) = g\circ \Phi(x)$ for any $g\in G, x\in \cX$.  
A {\em group representation} is defined as a linear action of group $G$ on some vector space $V$, i.e., a group homomorphism $\rho: G \to GL(V)$. 
See also Appendix \ref{sec:group_basics} for notations.
%See Appendix \ref{sec:group_basics} for the details of the mathematical tools used in this paper.
% The basics on group, action, and group representation are summarized in Section \ref{sec:group_basics}, Appendix.
   
% \subsection{Fourier transform as an equivariant map}\label{sec:DFT}
\textbf{Fourier transform as an equivariant map. ~}
%Because NFT is a framework for equivariance learning, it is critical that all steps of NFT are equivariant.
%Because we intend to present NFT as a generalization of FT and because we base the \textit{invariant subspace analysis part} of NFT on the procedure of FT, we first review 
We first explain the equivariance of 
the classic Discrete Fourier Transform (DFT) to motivate the model and learning of NFT. 
As is well known, DFT on the function space $L_2^T:=\{ f:\mathbb{Z}_T\to \mathbb{C}\}$, where $\mathbb{Z}_T=\{j/T\in[0,1)\mid j=0,\ldots,T-1\}$  ($\text{mod } T$) is the $T$ grid points on the unit interval. 
DFT and its inverse (IDFT) %on $L_2^T$ 
are given by 
%$(k=0,\ldots,T-1)$
\begin{equation}
    \hat{f}_k = {\textstyle \frac{1}{\sqrt{T}} \sum_{j=0}^{T-1} e^{- 2\pi i \frac{kj}{T}} f(j/T)}, \qquad 
    f(j/T) = {\textstyle \frac{1}{\sqrt{T}} \sum_{k=0}^{T-1} e^{2\pi i \frac{kj}{T}} \hat{f}_k}  ~~~~~\forall k,j \in \mathbb{Z}_T.
\end{equation}
%As mentioned briefly in the introduction, 
We can define the group of shifts $G:=\mathbb{Z}_T$  (mod $T$) 
acting on ${\bf f}=(f(j/T))_{j=0}^{T-1}$ by $m \circ {\bf f}:= (f((j-m)/T))_{j=0}^{T-1}$. 
With the notation $\Phi_k({\bf f}) := \hat {f}_k$, it is well known that $\Phi_k (m\circ {\bf f}) = e^{2\pi i \frac{mk}{T}} \Phi_k({\bf f})$ for all $m,k \in \mathbb{Z}_T$, establishing DFT $\Phi$ as an equivariant map; namely  
%Here, $\Phi$ can be written as a \textit{DFT} matrix, and 
\begin{equation}\label{eq:DFT_equiv}
    \Phi (m \circ {\bf f}) =  D(m) \Phi({\bf f}) ~~~~\textrm{or}~~~~
    m \circ {\bf f} = \Phi^{-1} \left(D(m) \Phi({\bf f})\right), 
\end{equation}
where $D(m):={\rm Diag}(e^{2\pi i\frac{mk}{T}})_{k=0}^{T-1}$is a diagonal matrix. 
By definition, $D(m)$ satisfies $D(m+m')=D(m)D(m')$, meaning that $G\ni m\mapsto D(m)\in GL(\C^T)$ is a group representation. 
%{\em group representation}, which is defined as a linear group action (see Section \ref{sec:group_basics}).
Thus, the DFT map $\Phi$ is an equivariant linear encoding of $L_2^T$ into the direct sum of eigen spaces (or the spaces that are invariant with respect to the shift actions), and $\Phi^{-1}$ is the corresponding linear decoder. 

%The group representation is a well-studied field in mathematics, and it is well known that 
In group representation theory, 
the diagonal element $e^{2\pi i\frac{mk}{T}}$ of $D(m)$ is known as an {\em irreducible representation}, which is a general notion of {\em frequency}. \textbf{We shall therefore use the word \textit{frequency} and the word \textit{irreducible representation} interchangeably in this paper}.  A group representation of many groups 
%in some classes 
can be decomposed into a direct sum of irreducible representations, or the finest unit of group invariant vector spaces.
Generally, for a locally compact group $G$, the Fourier transform, the inversion formula, and the frequencies are all analogously defined \citep{Rudin1991}. For the group action $(g \circ f)(x) = f(g^{-1}x)$, an equivariant formula analogous to eq.(\ref{eq:DFT_equiv}) also holds with $D(g)$ being a block diagonal matrix.
%for the Fourier transform with the \textit{block} diagonal matrix $D(g)$. 
%Even more generally, for some non-commutative groups (e.g. compact groups), the Fourier transform can be generalized with a similar equivariant formula.
Thus, the frequencies are not necessarily scalar-valued, but matrix-valued. 

\section{Neural Fourier Transform}   \label{sec:NFT_all}

%NFT extends FT to the case of nonlinear group action on a general set \textcolor{red}{$\mathcal{X}$}. 
In NFT, we assume that a group $G$ acts on a generic set $\mathcal{X}$, and examples of $(x,g\circ x)$ ($x\in\mathcal{X}, g\in G$) can be observed as data.  However, we do not know how the action $\circ$ is given in $\mathcal{X}$, and the action can only be inferred from the data tuples. %and works under the assumption that the action can only be \textit{observed} from the dataset.}
%NFT extends FT and harmonic analysis on dataset to the case of nonlinear group action by forgoing the conventional approach of treating each $x \in \mathcal{X}$ as a function defined on homogeneous space such as \textit{pixel coordinates}. 
As in FT, the basic framework of NFT involves an encoder
and a %corresponding 
decoder, 
%, which we may call INFT. 
%This encoder and decoder map 
which are to be learned from a dataset to best satisfy the relations analogous to eq.(\ref{eq:DFT_equiv}):
\begin{equation}\label{eq:NFT}
    \Phi(g \circ x) =  M(g) \Phi(x)  \quad\text{and}\quad  \Psi\bigl(\Phi(x)\bigr) = x 
    \quad (\forall x\in \cX, \forall g\in G),
\end{equation}
where $M(g)$ is \textit{some} \textbf{linear map} dependent on $g$, which might be either known or unknown. It turns out that realizing eq.(\ref{eq:NFT}) is enough to guarantee that $M(g)$ 
%on the latent space 
is a group representation:
\begin{lemma}\label{lma:Mg_repr}
If $\textrm{span}\{ \Phi(\mathcal{X})\}$ is equal to the entire latent space, then eq.(\ref{eq:NFT}) implies that $M(g)$ is a group representation, that is, $M(e)=Id$ and $M(gh)=M(g)M(h)$. 
\end{lemma}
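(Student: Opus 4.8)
The plan is to verify the two defining properties of a group representation directly from the equivariance relation in \eqref{eq:NFT}, using the spanning hypothesis to promote pointwise identities on $\Phi(\mathcal{X})$ to identities of linear maps on the whole latent space. No properties of the decoder are needed; only the first relation in \eqref{eq:NFT} and the group action axioms will be used.

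First I would establish $M(e) = Id$. Setting $g = e$ in $\Phi(g \circ x) = M(g)\Phi(x)$ and using the action axiom $e \circ x = x$ gives $\Phi(x) = M(e)\Phi(x)$ for every $x \in \mathcal{X}$. Hence $M(e)$ fixes every vector of $\Phi(\mathcal{X})$, and by linearity it fixes every vector of $\textrm{span}\{\Phi(\mathcal{X})\}$; since this span is the entire latent space, $M(e)$ is the identity operator.

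For multiplicativity, I would compute $\Phi\bigl((gh)\circ x\bigr)$ in two ways. On one hand the equivariance relation gives $\Phi\bigl((gh)\circ x\bigr) = M(gh)\Phi(x)$. On the other hand, the action axiom $(gh)\circ x = g\circ(h\circ x)$ followed by two applications of the equivariance relation gives $\Phi\bigl((gh)\circ x\bigr) = \Phi\bigl(g\circ(h\circ x)\bigr) = M(g)\Phi(h\circ x) = M(g)M(h)\Phi(x)$. Therefore $M(gh)\Phi(x) = M(g)M(h)\Phi(x)$ for all $x \in \mathcal{X}$, i.e.\ the linear maps $M(gh)$ and $M(g)M(h)$ agree on $\Phi(\mathcal{X})$, and hence, again by linearity and the spanning hypothesis, on the whole latent space; thus $M(gh) = M(g)M(h)$. (As a by-product, $M(g)M(g^{-1}) = M(g^{-1})M(g) = M(e) = Id$, so each $M(g)$ is invertible, making $g \mapsto M(g)$ a genuine representation into $GL$ of the latent space.)

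There is essentially no deep obstacle here; the only point requiring care is the passage from ``two linear maps coincide on $\Phi(\mathcal{X})$'' to ``they coincide as operators,'' which is exactly where the hypothesis $\textrm{span}\{\Phi(\mathcal{X})\}$ being the full latent space is invoked — a linear map is determined by its restriction to any spanning set. It is worth noting in the write-up that the decoder relation $\Psi(\Phi(x)) = x$ is not used: it is the spanning condition, not invertibility of $\Phi$, that drives the argument, and without that condition $M(g)$ would only be pinned down on a proper subspace and could be altered off it, so the conclusion genuinely needs the hypothesis.
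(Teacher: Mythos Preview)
Your proof is correct and is exactly the standard argument one would expect; the paper itself omits the proof entirely, referring to Lemma~F.2 in \cite{MSP}, so there is no in-paper proof to compare against. Your write-up is fine as is, including the observation that only the equivariance relation and the spanning hypothesis are used.
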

The proof is given in Appendix \ref{sec:proof_lma}.  The encoder $\Phi$ and decoder $\Psi$ may be chosen without any restrictions on the architecture. In fact,  we will experiment NFT with MLP/CNN/Transformer whose architectures have no relation to the ground truth action. 

Given a pair $(\Phi, \Psi)$ that satisfies eq.(\ref{eq:NFT}),  we also seek an invertible linear map $P$ to block-diagonalize $M(g)$ unless we know $M(g)$ {\em a priori} in a block-diagonal form.
% \km{When we do not know the form of $M(g)$,  we seek another invertible linear encoder that decomposes $\Phi(\mathcal{X})$ and its corresponding decoder after obtaining a pair of $\Phi$ and $\Psi$ satisfying eq.(\ref{eq:NFT}. }
This corresponds to \textit{irreducible decomposition} in representation theory. Assuming that the representation is \textit{completely reducible}, we can seek a common change of basis matrix $P$ for which $B(g)= P M(g) P^{-1}$ is block-diagonal for every $g$ so that each block corresponds to an irreducible component of $M(g)$. 
% in which each of the blocks corresponds to an irreducible component of representation $M(g)$. 
See Sec \ref{sec:group_basics} for the details on irreducible decomposition. Putting all together, NFT establishes the relation 
\begin{equation}\label{eq:NFT_P}
    g\circ x = \Psi\bigl( P^{-1} B(g) P \Phi(x) \bigr).
\end{equation}
 We call the framework consisting of eqs.(\ref{eq:NFT}) and (\ref{eq:NFT_P}) {\em Neural Fourier Transform (NFT)}, where $z=P\Phi(x)$ is the Fourier image of $x$, and $\Psi(P^{-1}z)$ is the inverse Fourier image. 
 See also Fig \ref{fig:NFTframework} for the visualization of the NFT framework. 
 The classic DFT described in Sec \ref{sec:DFT} is an instance of NFT; $\cX$ is $\R^T$, which is the function space on $G=\mathbb{Z}_T$ with shift actions, and $P\Phi$ and $\Psi P^{-1}$ are respectively DFT and IDFT (linear). 
\citep{MSP} emerges as an implementation of NFT when $(\Phi, \Psi)$ is to be learned in a completely unsupervised manner with no prior knowledge of $M(g)$ nor $G$ itself.  
As we show next, however, NFT can be conducted in other situations as well.

%The equivariant maps $P\Phi$ and $\Psi P^{-1}$ are the linear maps of DFT and IDFT, respectively.  

\subsection{Three typical scenarios of NFT}%usage scenarios of NFT}  
\label{sec:xNFT}
There can be various setting of data and prior knowledge of $G$, and accordingly various methods for
obtaining a pair $(\Phi, \Psi)$ in eq.(\ref{eq:NFT}). %\textcolor{red}{Because we assume that the action on $\mathcal{X}$ can only be \textit{inferred} from a tuple of observations},  
However, we at least need a dataset consisting of tuples of observations (e.g \kf{$(x, g\circ x)$}) from which we need to  %observe 
infer the effect of group action.  A common strategy is to optimize $\mathbb{E}[\| g \circ X - \Psi (M(g) \Phi(X))\|^2]$
% When $d$ is some distance metric in the observation space, one straightforward way to train $(\Phi, \Psi)$ is to optimize a simple autoencoding loss $\mathbb{E}[ d(g \circ X , \Psi (M(g) \Phi(X)))]$
% \begin{align}\label{eq:AEloss}
% \mathbb{E}[ d(g \circ X , \Psi (M(g) \Phi(X)))], 
% \end{align}
where $M(g)$ is either estimated or provided, depending on the level of prior knowledge on $g$ or $G$. 

% \kf{The framework of eq.(\ref{eq:NFT}) and eq.(\ref{eq:NFT_P}) extends \citep{MSP}, where neither $g$ nor $G$ is known and $M(g)$ is solved as a matrix $M$.  This corresponds to one of the three cases described below.}

\textbf{Unsupervised NFT (U-NFT): Neither $G$ nor $g$ is known.} 
In U-NFT, the dataset consists of tuples of observations $\{ x^{(i)}:=(x_0^{(i)},\dots, x_T^{(i)})\}_{i=1}^N$, where each $x^{(i)}_t$ is implicitly %secretly 
generated as $x^{(i)} = g_i^t \circ x_0^{(i)}$ for some unobserved $g_i$ sampled from $G$ that is unknown. 
Such a dataset may be obtained by collecting short consecutive frames of movies/time-series, for example.
\citep{MSP} is a method for U-NFT. MSP uses a dataset consisting of consecutive triplets ($T=2$), such as any consecutive triplet of images in Fig \ref{fig:panda}.  
Given such a dataset, MSP trains $(\Phi, \Psi)$ by minimizing 
\begin{align}
\mathbb{E}[\|x_2 -  \Psi( M^* (\Phi(x_1)) \|^2], ~~~~~ \textrm{where} ~~ M^* = {\textstyle \argmin_{M}} \| \Phi(x_1) -  M \Phi(x_0)\|^2  \label{eq:MSP}
\end{align} is computed for each triplet $x$ (Fig \ref{fig:U-NFT}, Appendix).
By considering $\Phi$ with matrix output of dimension $\R^{a \times m}$,  $M^* \in \R^{a \times a}$ can be analytically solved as $\Phi(x_1) \Phi(x_0)^\dagger$ where  $A^\dagger$ is the Moore Penrose inverse of $A$ (Inner optimization part).
Thus, $(\Phi, \Psi)$ is trained in an end-to-end manner by minimizing 
$E[\| x_2 -  \Psi(\Phi(x_1) \Phi(x_0)^\dagger (\Phi(x_1)) \|^2]$.  For the U-NFT experiments, we used MSP as a method of choice. After training $(\Phi,\Psi)$, we may obtain $M^*$ for each $x^{(i)}$ (say $M^*_i$), and use \citep{maehara} for example to search for $P$ that simultaneously diagonalizes all $M^*_i$s.

\textbf{$G$-supervised NFT (G-NFT): $G$ is known but not $g$.}
%In G-NFT, the dataset is assumed to take the same form as U-NFT,
G-NFT has the same dataset assumption as U-NFT, but the user is allowed to know the group $G$ from which each $g_i$ is sampled. 
% In G-NFT setting, the dataset again consists of tuples/sequences of observations $x^{(i)}:=(x_0^{(i)},\dots, x_T^{(i)}); i=1…N$, which are secretly generated in the form of $x:=(g_i \circ x_0, \dots,  g^T_i x_0); i=1…N$ for some unobserved set of $g$s.  However, unlike U-NFT, the user is allowed to know the group $G$ from which $g$s are sampled. 
In this case, we can assume that the matrix $M(g)$ in the latent space would be
a direct sum of irreducible representations of $G$.  For example, we may assume some parametric form of irreducible representations $M(\theta) = \oplus_k M_k(\theta)$ and estimate $\theta^{(i)}$ for every tuple of data $x^{(i)}$.  However, estimating $\theta^{(i)}$ for each $i$ may not scale for a large dataset. Alternatively, we may just use the dimensional information of the matrix decomposition and estimate each block in the same manner as U-NFT.
%\kf{a direct product of irreducible representations.  For example, we can introduce some parametric form of irreducible representations $M(\theta) = \oplus_k M_k(\theta_k)$ and estimate the optimal parameters $\theta_k^{(i)}$ for every tuple of data $x^{(i)}:=(g^t\circ x_0^{(i)})_{t=0}^T$.  However, this approach may not scale for large data sets, since we have as many parameters $\theta^{(i)}$ as the number of data. Alternative approach is to use only a decomposed form of the matrix $M$ corresponding to the irreducible decomposition and estimate the blocks in the same manner as U-NFT.}
%a certain form.
For instance, in the context of Fig \ref{fig:panda}, the user may know that the transformation between consecutive frames is “some” periodic action (cyclic shift), for which it is guaranteed that the matrix representation is a direct sum of $2\times 2$ matrices.  When $T=2$, we can minimize the same prediction loss as in eq.(\ref{eq:MSP}) except that we put $M^* = \bigoplus_k^{a/2}  M^*_k \in \R^{a \times a}$ where $M^*_k = \argmin_M\| \Phi_k(x_1) -  M \Phi_k(x_0) \|^2\in \R^{2 \times 2}$ and $\Phi_k$s are the matrices constituting $\Phi$ by vertical concatenation $\Phi=[\Phi_1; \Phi_2; ... ]$.
% \begin{align}
% \mathbb{E}[\| \Phi(x_2)- \Psi( M^* (\Phi(x_1)))\|^2] \textrm{~~where~~} M^* = \bigoplus_k^{a/2}  M^*_k \in R^{a \times a}
% \end{align}
% with $M^*_k = argmin_M\| \Phi_k(x_1) -  M \Phi_k(x_0) \|\in R^{2 \times 2}$ and $\Phi_k$s are the matrices constituting $\Phi$ by vertical concatenation  $\Phi=[\Phi_1; \Phi_2; ... ]$.

% Even when we know the group $G$ acting on the dataset, we may not know the identity of the element $g$ producing each $(g \circ x, x)$. In such a case, we may use the parametric form of $M(g)$ made from a set of irreducible representations, which are well categorized for many groups \citep{clausen, Fulton1991}.  
% We may replace the black-box regression of $M(g)$ in UFT with the parametric regression. The dataset might consist of pairs of pairs, triplets or sequences.

\textbf{$g$-supervised NFT (g-NFT): $g$ is known.} 
In g-NFT, the user can observe a set of $(x, g \circ x)$ for known $g$ so that the data technically consists of $((x, g\circ x), g)$. 
In the context of Fig \ref{fig:panda}, the g-NFT setting not only allows the user to know that $g$ is periodic, but also the velocity of the shift (e.g., the size of the pixel-level shift before applying the fisheye transform). 
Thus, by deciding the irreducible representations to use in our approximation of the action, 
%(analogous to the maximum frequency to use in FT), 
we can predetermine the explicit form of $M(g)$. For g-NFT, we may train $(\Phi, \Psi)$ by minimizing  
$$E[\| g\circ x-\Psi( M(g) \Phi(x)) \|^2] + E[\| \Phi(g \circ x)-M(g) \Phi(x) \|^2]. $$
The matrix $M(g)$ can be derived from the knowledge of representation theory. 
For example, if %$C_N$ 
$\mathbb{Z}_N$ is the group, $M(g)$ corresponding to the frequencies $\{f_1,.. f_n\}$ would be the direct sum of the 2D rotation matrices by angle $2\pi f_k g/N$ (see also Fig.~\ref{fig:g-NFT} in Appendix).

\subsection{Properties of NFT} \label{sec:property}
By realizing eq.(\ref{eq:NFT}) approximately, NFT learns spectral information from the data and actions (Lemma \ref{lma:Mg_repr}).  Here, we emphasize three practically important properties of NFT.  First, by virtue of the nonlinear encoder and decoder, NFT achieves nonlinear spectral analysis for arbitrary data types.  Second, NFT performs data-dependent spectral analysis; it provides decomposed representations only through the frequencies necessary to describe the symmetry in the data. These two properties contrast with the standard FT, where the pre-fixed frequencies are used for expanding functions.  Third, the NFT framework has the flexibility to include spectral knowledge about the group in the latent space, as in G-NFT and g-NFT. This further enables us to extract effective features for various tasks.  
These points will be verified through theory and experiments in the sequel.

\section{Theory}
\label{sec:theory}

\textbf{Existence and uniqueness}: 
Because the goal of NFT is to express the data space in the form of latent linear equivariant subspaces, 
%decompose the data space into latent linear equivariant subspaces, 
it is a fundamental question %in the NFT framework 
to answer whether \kf{this goal is achievable theoretically.} 
%it is possible to express a general (possibly nonlinear) group action with a linear action in the latent space.
Additionally, it is important to ask if the linear action can be expressed by 
%with a set of equivariance subspaces corresponding to 
a unique set of irreducible representations. 
%In the language of FT, we want the same frequencies to appear on the same dataset every time we compute the transformation. 
The following Thm \ref{thm:equivariance} answers these questions in the affirmative using the notion of group invariant kernel.
%\subsection{When is NFT posgroup-invariant existence assumption of {\em linear latent} action in NFT might seem a strong condition to require in applications. 
% We, however, present the theoretical result assuring that the existence of a linear representation space is equivalent to an intuitive and simple condition defined with a positive definite kernel.  
Let group $G$ act on the space $\cX$.  A positive definite kernel $k:\cX \times \cX \to \R$ is called {\em $G$-invariant} if $k(g\circ x, g\circ x') = k(x,x')$ for any $x,x'\in \cX$ and $g \in G$. 
%\kf{The proofs are deferred to Section \ref{sec:proofs}.}
%In this subsection that is dedicated to our main contributions, we will provide informal versions of the claims and leave the formal statements in Appendix.

\begin{theorem}\label{thm:equivariance}
    (Existence, uniqueness, and invariant kernel) Let $G$ be a compact group acting on $\cX$. There exists a vector space $V$ and a non-trivial equivariant map $\Phi:\cX\to V$ if and only if there exists a non-trivial $G$-invariant kernel on $\cX$.  Moreover, the set of $G$-invariant kernels and the set of equivariant maps to a vector space are in one-to-one correspondence up to a G-isomorphism.
    %the isomorphism class. 
\end{theorem}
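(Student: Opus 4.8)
The plan is to prove the two implications constructively and then upgrade them to the stated bijection. For the \emph{only if} direction, given a nontrivial equivariant $\Phi:\cX\to V$ I would first replace $V$ by $V_0:=\overline{\operatorname{span}}\,\Phi(\cX)$, which is $M(g)$-invariant, so that (by the spanning Lemma above, since $\Phi(\cX)$ spans $V_0$) the action $M$ on $V_0$ is a genuine representation. Since $G$ is compact I can average an arbitrary inner product on $V_0$ against the Haar probability measure to obtain a $G$-invariant inner product $\langle\cdot,\cdot\rangle_{V_0}$, and set $k_\Phi(x,x'):=\langle\Phi(x),\Phi(x')\rangle_{V_0}$. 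This is a positive definite (Gram) kernel; it is $G$-invariant because $k_\Phi(g\circ x,g\circ x')=\langle M(g)\Phi(x),M(g)\Phi(x')\rangle_{V_0}=\langle\Phi(x),\Phi(x')\rangle_{V_0}$; and it is nonconstant whenever $\Phi$ is, since $\|\Phi(x)-\Phi(x')\|_{V_0}^2=k_\Phi(x,x)-2k_\Phi(x,x')+k_\Phi(x',x')$.

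For the \emph{if} direction, given a nontrivial $G$-invariant kernel $k$ I would take $V$ to be its RKHS $\cH_k$ with canonical feature map $\varphi(x)=k(\cdot,x)$, so that $\langle\varphi(x),\varphi(x')\rangle=k(x,x')$. The crucial point is that $G$ acts on $\cH_k$ by the operators $U(g)$ determined by $U(g)\varphi(x):=\varphi(g\circ x)$: for any finite combination, $\|\sum_i a_i\varphi(g\circ x_i)\|^2=\sum_{i,j}a_ia_j\,k(g\circ x_i,g\circ x_j)=\sum_{i,j}a_ia_j\,k(x_i,x_j)=\|\sum_i a_i\varphi(x_i)\|^2$ by $G$-invariance of $k$, so $U(g)$ is well-defined and isometric on the dense span and extends to a unitary on $\cH_k$ with inverse $U(g^{-1})$; the identity $U(gh)\varphi(x)=\varphi(gh\circ x)=U(g)U(h)\varphi(x)$ shows $U$ is a representation. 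Then $\varphi(g\circ x)=U(g)\varphi(x)$ exhibits $\varphi$ as a nontrivial equivariant map into $\cH_k$, and the stated existence equivalence follows at once from these two directions.

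To obtain the one-to-one correspondence I would check that the two constructions are mutually inverse up to the natural notion of isomorphism, namely a $G$-equivariant linear isometry of the spanned subspaces (which I read as the meaning of ``up to a linear invertible map'', since a general linear isomorphism merely rescales or reparametrizes the kernel). Starting from a kernel $k$, building $\varphi_k$ and then forming $k_{\varphi_k}$ with the RKHS inner product returns exactly $k$. Starting from $\Phi:\cX\to V_0$ as above, forming $k_\Phi$ and then its feature map $\varphi_{k_\Phi}$, the assignment $\varphi_{k_\Phi}(x)\mapsto\Phi(x)$ is well-defined and isometric by the Moore--Aronszajn argument (both inner products equal $k_\Phi$), surjective onto $V_0$, and intertwines $U(g)$ with $M(g)$ because it does so on feature vectors; hence $\Phi$ and $\varphi_{k_\Phi}$ coincide up to a $G$-equivariant isometric isomorphism.

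I expect the main obstacle to be the extension step in the \emph{if} direction --- verifying that $\varphi(x)\mapsto\varphi(g\circ x)$ is well-defined and isometric on the pre-Hilbert space and then passing to the completion, together with establishing continuity of $g\mapsto U(g)$ if one wants $U$ to be a genuinely continuous unitary representation --- and, on the conceptual side, pinning down exactly what ``up to a linear invertible map'' should mean and handling the case where $V$ is infinite-dimensional, so that the Haar-averaged invariant inner product in the \emph{only if} direction is well-defined (this is automatic when $V$ is finite-dimensional, which is the case of interest for NFT).
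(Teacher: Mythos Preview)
Your proposal is correct and follows essentially the same route as the paper. Both directions are handled identically in spirit: from an invariant kernel you pass to the RKHS feature map and verify that the induced action is a unitary representation (the paper phrases it as the regular action $f\mapsto f(g^{-1}\cdot)$ on $\cH_k$, you phrase it as $U(g)\varphi(x)=\varphi(g\circ x)$, which agree on feature vectors); from an equivariant $\Phi$ you unitarize the action and take the Gram kernel $\langle\Phi(x),\Phi(x')\rangle$. For the bijection the paper builds the explicit map $J:V\to\cH_{k_\Phi}$, $J(u)=[x\mapsto\langle\Phi(x),u\rangle]$, and checks it is a $G$-linear isometric isomorphism with $J\Phi(x)=k_x$; your map $\varphi_{k_\Phi}(x)\mapsto\Phi(x)$ is precisely $J^{-1}$, so the arguments coincide. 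Your reading of ``up to a linear invertible map'' as $G$-equivariant isometry is exactly what the paper formalizes, and your worry about Haar-averaging in infinite dimensions is handled in the paper simply by citing that every representation of a compact group is unitarizable.
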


%\begin{theorem}
%Let $\mathcal{X}$ be a set on which the data is defined, and suppose that $G$ acts on $\mathcal{X}$.  Then there exists a nontrivial equivariant feature map $\Phi$ into a $\mathbb{R}$-valued linear representation space if and only if there exists a $G$-invariant kernel $\mathcal{X} \times \mathcal{X} \to \mathbb{R}$ satisfying $k(x, x') = k(g \circ x, g \circ x')$ for all $x, x' \in \mathcal{X}$. Moreover, the set of $G$-invariant kernels and the set of equivariant features are in one-to-one relation upto isomprhism class. 
%\end{theorem} \label{thm:Equivariance￥ｋｆ｛｝￥ｋｆ｛｝This theorem states that the learning of an equivariant feature map $\Phi$ is equivalent to the learning of a data-wide relative structure (e.g. exponent of a distance) that is preserved by the group action. 
We provide the proof in Appendix \ref{sec:proofs}.
% Thm \ref{thm:equivariance} implies that we can find an equivariant map $\Phi$ to a space with linear action in a wide range of situations. For example, RBF kernels based on squared norm are invariant to rotations and shifts.
% Note also that our NFT framework extends the application of FT to an abstract data space $\mathcal{X}$, while the standard FT and many equivariance learning schemes assume each instance of data to be a function on homogeneous space. 
The implication of Thm \ref{thm:equivariance} is twofold.
First, Thm \ref{thm:equivariance} guarantees the existence of an equivariant map $\Phi$ by the existence of an invariant kernel. \kf{In the proof, $\Phi$ emerges as an embedding into the associated reproducing kernel Hilbert space (RKHS).  Thus, under mild conditions, there is a latent space that is capable of 
linearly representing a sufficiently rich group action, since one can easily construct a $G$-invariant kernel with infinite-dimensional RKHS which is dense in $L^2$ space(Thm \ref{thm:richness}).}

% This Thm \ref{thm:equivariance} implies identifiability of NFT. 
% First, \kf{Thm \ref{thm:equivariance} guarantees the existence of an equivariant map $\Phi$ by the existence of an invariant kernel.  The proof shows that the latent space can be given by the associated reproducing kernel Hilbert space (RKHS).  Combined with Thm \ref{thm:richness} in the appendix, which constructs a $G$-invariant kernel with infinite-dimensional RKHS, Thm \ref{thm:equivariance} means that a sufficiently rich group action can be represented linearly in the latent space under mild conditions. }

Second, Thm \ref{thm:equivariance} implies the identifiability. 
When there are two invertible equivariant maps $\Phi$ and $\tilde \Phi$, Thm \ref{thm:equivariance} guarantees that there is a pair of corresponding kernels, and we can induce from them a $G$-isomorphism (Sec \ref{sec:group_basics}) between the corresponding RKHSs. In other words, $\Phi$ and $\tilde \Phi$ are connected via $G$-isomorphism, corresponding to the same set of irreducible representations. 
% Thus, when there are two injective equivariant encoders $\Phi$ and $\tilde \Phi$ that can \textbf{invertibly linearlize} the group action (that is, $g \circ X = \Psi M_g  \Phi(X)$) , then $\Phi$ and $\tilde \Phi$ correspond to exactly the same set of irreducible representations. In other words, $\Phi$ and $\tilde \Phi$ are connected via G-isomoprhism (See \ref{sec:group_basics}). 
\textit{This way, we may associate any linearlizable group action to a unique set of irreducible representations.}
% This guarantees that the disentangled actions found for example in U-NFT of \citep{MSP} is identifiable as well. 
Similarly, any two solutions for the g-NFT with the \textit{similar} $M(g)$s would also differ only by $G$-isomorphism.

\textbf{Finite dimensional Approximation: } When the output dimension of $\Phi$ is small, 
the space may not fit all members of irreducible representations corresponding to the target group action and 
it may not be possible to invertibly encode the action and observations as done in the standard FT. 
In such a case, the expression $x_1 \to \Psi (M(g) \Phi (x_1))$ in NFT would only provide an approximation of the transition $x_1 \to x_2$.
However, what type of approximation would it be in U-NFT and G-NFT?  Below we will present the claim guaranteeing that NFT conducts a data-dependent filter that selectively extracts the dominant spectral information in the dataset describing the action. 
We present an informal statement here (see Sec \ref{sec:bottleneck_appendix} for the formal version).

\begin{theorem}\label{thm:bottleneck} 
(Nonlinear Fourier Filter) 
Suppose that there exists an invariant kernel $k$ for the group action on $\mathcal{X}$ whose RKHS is large enough to invertibly encode $\mathcal{X}$. 
Suppose further that $(\Phi^*,\Psi^*)$ is the minimizer of
$$\mathbb{E}_{g \in G} [\| g \circ X - \Psi \Phi(g \circ X)\|^2 ]$$
among the set of all equivariant autoencoders of \textbf{a fixed latent dimension} for which the actions are linear in the latent space. Then the frequencies to appear in the latent space of $\Phi^*$ are determined by the signal strength of each irreducible component in the RKHS of $k$. 
% %%% ORIGINAL VERSION 
% Suppose that $k$ is a $G$-invariant
% kernel on $\mathcal{X}$ corresponding to an injective, biLipshitz equivariant feature map $\Phi^*$ from $\mathcal{X}$ into a latent space $V$ with a linear group action. 
% If $(\Phi,\Psi)$ is a pair of encoder and decoder that optimizes the autoencoding loss 
% $$\mathbb{E}_{g \in G} [\| g \circ X - \Psi \Phi(g \circ X)\|^2 ]$$
% among the set of all \masosugg{measurable} decoders and the set of all equivariant encoders whose output components belong to the RKHS of $k$, 
% then the frequencies to appear in the latent space are determined by how much they can describe $E[\Phi^*(X)]$. 
\end{theorem} 
This result claims that, in the application of NFT with small latent dimension,  U-NFT  and G-NFT automatically select the set of irreducible representations that are dominant in describing the action in the observation space, functioning as a data-dependent nonlinear \textit{filter} on the dataset. 
Please also see Sec \ref{sec:proofs} for the theory of NFT. 
As we will validate experimentally in Sec \ref{sec:1d}, NFT \textit{does} choose the major frequencies discretely even in the presence of noise frequencies. 
% because finite-dimensional representation space is always decomposable into a finite set of irreducible representations. \miyatonote{I couldn't figure our the connection between the extraction of major freqs and decomposition into the irreps.}
%However, a word of caution is in order in the implication of Thm \ref{thm:bottleneck}; although NFT may identify the major frequencies in the dataset, 
%Fourier coefficients are not necessarily preserved in NFT because of the nonlinear part of the encoder that is not necessarily isometric on each Fourier component. 
%Namely, Perseval's theorem does not apply for NFT. 

%with the help of Lipschitz constraint, 

\section{Experiments} 

%The codes of experiments are available in the Supplementary Materials. 

% To verify that NFT has analogous functionalities on nonlinear symmetry as DFT does on linear symmetry, we apply U-NFT on the dataset of 1D signals. We also explore the possibilities of NFTs on more sophisticated datasets including 2D images as well as 2D renderings of 3D objects. 

% \miyatonote{I think we need here to put a summary of the experiments?, such as what we would like to verify through the experiments, what type of datasets we used for that, and so on}
% \miyatonote{but the current introduction gives somewhat detailed explanation about the experiments, so it might be one or two sentences are enough?}
\label{sec:experiments}

\subsection{NFT vs DFT}\label{sec:1d}

% The way one might carry out NFT differs from setting to setting, depending on the knowledge of the Symmetry group that we can access during the analysis. 
% We provide experimental results in three knowledge situations: (1) when there is no prior information about the symmetry group acting on the dataset, (2) when the symmetry group acting on the dataset is known but not the identity of group element along with its effect, and (3) when the symmetry group is known and the effect of group elements on the dataset can be observed along with the group element identities.

To verify that NFT performs nonlinear spectral analysis, we conduct experiments on 1D time series with time-warped shifts.  We first prepared a fixed or random set of frequencies $F$ and constructed 
$$ 
\mathbb{Z}_{N} \ni t \mapsto \ell(t) ={\textstyle \sum_{k=1}^K} c_k \cos (2 \pi f_k (t /N)^3 ) := r((t/N)^3), ~~~~~~ F = \{f_1, f_2, ...., f_K\},
$$
which is the time-warped version of the series 
$r(\cdot)=\sum_{k=1:K} c_k \cos (2 \pi f_k \cdot /N )$. The shift $m\in \mathbb{Z}_N$ acts on the latent space as $\ell(\cdot)$ by $(m \circ \ell)(t) = r((t/N)^3 -m/N)$. We used $N=128$. Note that the frequency spectrum obtained by the classical DFT does not reflect the true symmetry structure defined with $F$ (Fig \ref{fig:DFTonDeform}). Codes are in supplementary material, details are in Appendix \ref{sec:experiment_app}.

\begin{wrapfigure}{tl}
{0.3\textwidth}
   \vspace{-7pt} 
  \centering
   \includegraphics[scale=0.25]{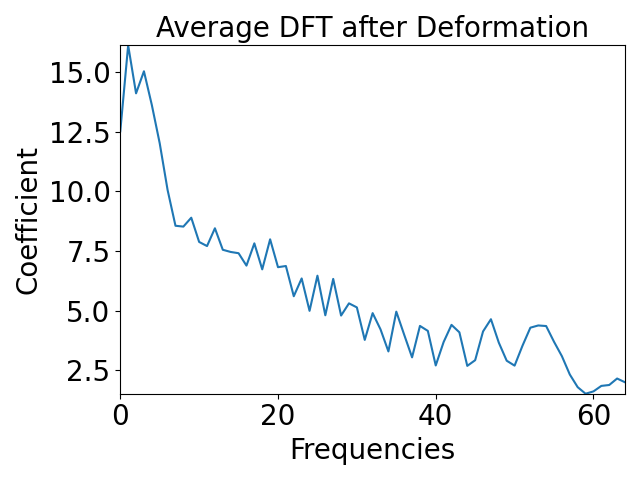}
   \vspace{-5pt}
   %\caption{Fig: DFT on Time-warped signals} 
   \caption{DFT result}
   \vspace{-15pt} 
  \label{fig:DFTonDeform}
\end{wrapfigure}  

\textbf{Spectral analysis of time-warped signals.}  
In this experiment, we verify that NFT recovers the fixed frequencies $F$ from the time-warped signals.  
%To generate the dataset $\mathcal{X} \subset \mathbb{R}^{N}$, we first prepared a fixed set of frequencies $F$ and constructed 
%$$ t \in \mathbb{Z}_{N} \to \ell(t) =\sum_{k=1:K} c_k \cos (2 \pi f_k (t /N)^3 ) := r((t/N)^3), ~~~~~~ F = \{f_1, f_2, ...., f_K\}$$
We generated 30000 sets of $\{c_1,... c_K\}$ with $K=7$, yielding 30000 instances of $\ell$ as dataset $\mathcal{X}\subset \R^{128}$.  The $c_k$s were sampled from the uniform distribution, and $c_5, c_6$ scaled by the factor of $0.1$ to produce two \textit{noise} frequencies. See Fig \ref{fig:examples} for visualization. 
%Indeed, these 1d signals are the time-warped version of the series 
%$r(\cdot) =\sum_{k=1:K} c_k \cos (2 \pi f_k \cdot /N )$, and the shift action $m\in \mathbb{Z}_N$ acts on  $\ell(\cdot)$ by $(m \circ \ell)(t) = r((t/N)^3 -m/N)$. 
To train NFT, we prepared a set of sequences of length-$3$ %in $\mathcal{X}$
($\bs = (\ell_0, \ell_1, \ell_2)$ with $\ell_k = r( (t/N)^3 - k v_\ell/N) \in \mathbb{R}^{128}$) shifting with random velocity $v_\ell$. We then conducted U-NFT as in Sec \ref{sec:xNFT}
%in the way of MSP~\citep{MSP} 
with latent dimension $\mathbb{R}^{10 \times 16}$ so that for each sequence $\bs$, the matrix $M_* \in \mathbb{R}^{10 \times 10}$ 
provides $\ell_{t}\approx \Psi(M_*^t \Phi(\ell_0))$.
%satisfying $\Phi(\ell_{k+m}) = M_*^m\Phi(\ell_k)$ are obtained by regression in the latent space. This way,  $\ell_{t}$ for any $t$ can be predicted as $\Psi(M_*^t \Phi(\ell_0))$.
With this setting, $\Phi$ can express at most $10/2=5$ frequencies.

%We then used the algorithm \citep{maehara} to find a basis $P$ that simultaneously block diagonalizes all 30000 $M_*$s, resulting in the decomposition of $\Phi(\mathcal{X})$ into the direct sum of irreducible spaces.
%Because the dataset is constructed from the signals with frequencies in $F$, the average real Fourier series over the dataset \textbf{before} the transformation will have spikes only at $F$.  
%Note that the frequency spectrum on time warped signals obtained by classical DFT does not reflect the true symmetry structure defined with $F$ (Fig \ref{fig:DFTonDeform}).
%Note that, the application of classic DFT on time warped signals yields frequencies in a wide range~(Fig \ref{fig:DFTonDeform}),
%which is not reflective of the true symmetry structure defined with $F$. 

To check whether the frequencies $F$ obtained by NFT are correct, we use the result from the representation theory stating that, if $\rho_f: \mathbb{Z}_N \to \mathbb{R}^{d_f \times d_f}$ is the irreducible representation corresponding to the frequency $f$, the character inner product \citep{Fulton1991} satisfies
$$ \langle \rho_f | \rho_{f'} \rangle  = {\textstyle \frac{1}{N} }{\textstyle\sum_{g \in \mathbb{Z}_N}} \textrm{trace}(\rho_{f}(g)) \textrm{trace}(\rho_{f'}(g))  = \delta_{f f'} 
\qquad (\delta_{ff'} \text{ is Kronecker's delta)}. $$ 
We can thus validate the frequencies captured in the simultaneously block-diagonalized $M_*$s by taking the character inner product of each identified block $B_i$ with the known frequencies. 
The center plot in Fig \ref{fig:spectral} is the graph of $\mathbb{E}[\langle \rho_f | B_i \rangle]$ plotted agasint $f$. 
We see that the spike only occurs at the major frequencies in $F$ (i.e. $\{8, 15, 22, 40, 45\}$), indicating that our NFT successfully captures the major symmetry modes hidden in the dataset without any supervision. 
When we repeated this experiment with 100 instances of randomly sampled $F$ of the same size, NFT was able to recover the major members of $F$ with $(FN, FP) = (0.035, 0.022)$ by thresholding $\mathbb{E}[\langle \rho_f | M_* \rangle]$ at $0.5$. This score did not change much even when we reduced the number of samples to $5000$ $(0.04, 0.028)$.
This experimental result also validates that the disentangled features %provided 
reported in \citep{MSP} are in fact the set of major frequencies in the sense of classical DFT.
We can also confirm that underlying symmetry can be recovered even when the deformation is applied to 2D images (Fig \ref{fig:fisheye_cifar100}).

\begin{figure}[h!]
  \begin{center}
    \includegraphics[scale=0.22]{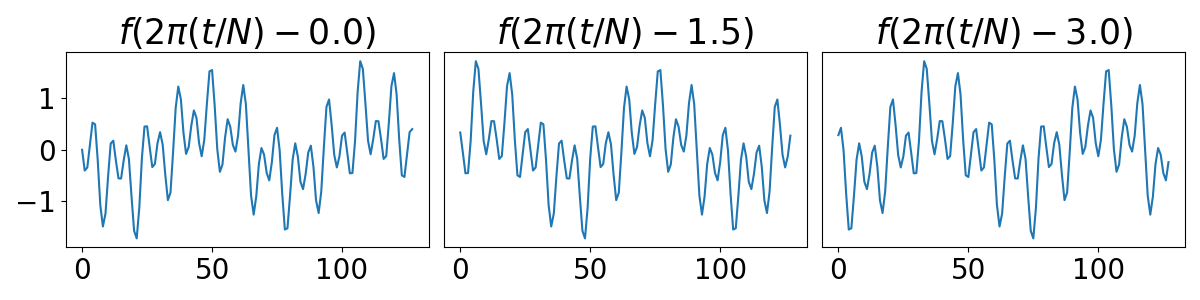}
    \includegraphics[scale=0.22]{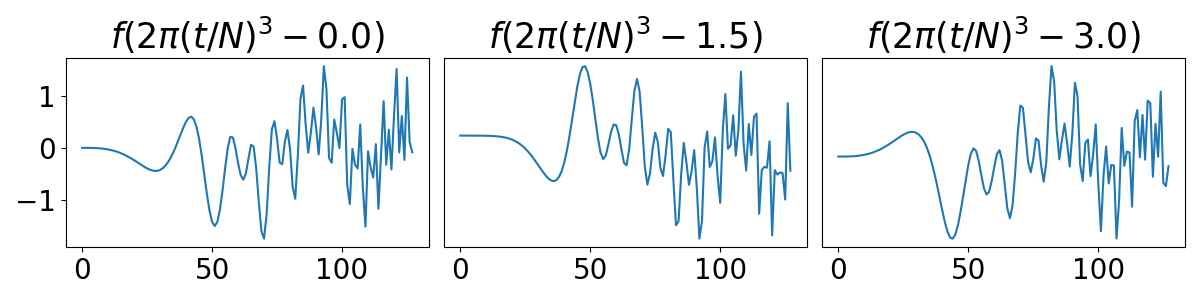}
    \caption{Left:the sequence of length=$128$ signals constructed by applying the shift operation with constant speed. Right: the sequence of the same function with time deformation.  } 
    % \caption{Top row: the sequence of signals constructed by applying the shift operation $[m] \in \mathbb{Z}_{128}$ to $f:\mathbb{Z}_{128}\to \mathbb{R}$. The $m$ th column in this row represents $[m] \cdot r(\cdot) = r(\cdot - m).$ Bottom row: the nonlinear transformation of the top row produced by applying time-warp.}
    \label{fig:examples}
  \end{center}
\end{figure}

% \begin{figure}[h!]
% \begin{center}
%     % \includegraphics[scale=0.22]{1dNFT_figures/rfft_nonlin.png}
%     \includegraphics[scale=0.27]{}
% \end{center}
% \caption{Long future prediction of the sequence of functions. }
% \textrm{gt} is the indicator function at all frequencies on the data \textbf{before the timewarp transformation. 
% \textrm{gt} differs from the learned model only at two weak frequencies. Note that character analsysis does not contain the information of coefficients.}
% \caption{Left: Long future prediction of the sequence of functions. Right: $\mathbbE[\langle \rho_f | M \rangle]$ plotted against $f \in [0, 64]$ for $M$s learned from dataset with 5 frequencies with large coefficients and 2 frequencies with small coefficients (18 and 43) when the $M$ can express at most 5 frequencies. 
% \textrm{gt} is the indicator function at all frequencies on the data \textbf{before the timewarp transformation. 
% \textrm{gt} differs from the learned model only at two weak frequencies. Note that character analsysis does not contain the information of coefficients.} 
% \label{fig:prediction} 
% \end{figure}
% \begin{figure}[h!]

% \end{figure}

\begin{figure}[h!]
\begin{center}
    \includegraphics[scale=0.15]{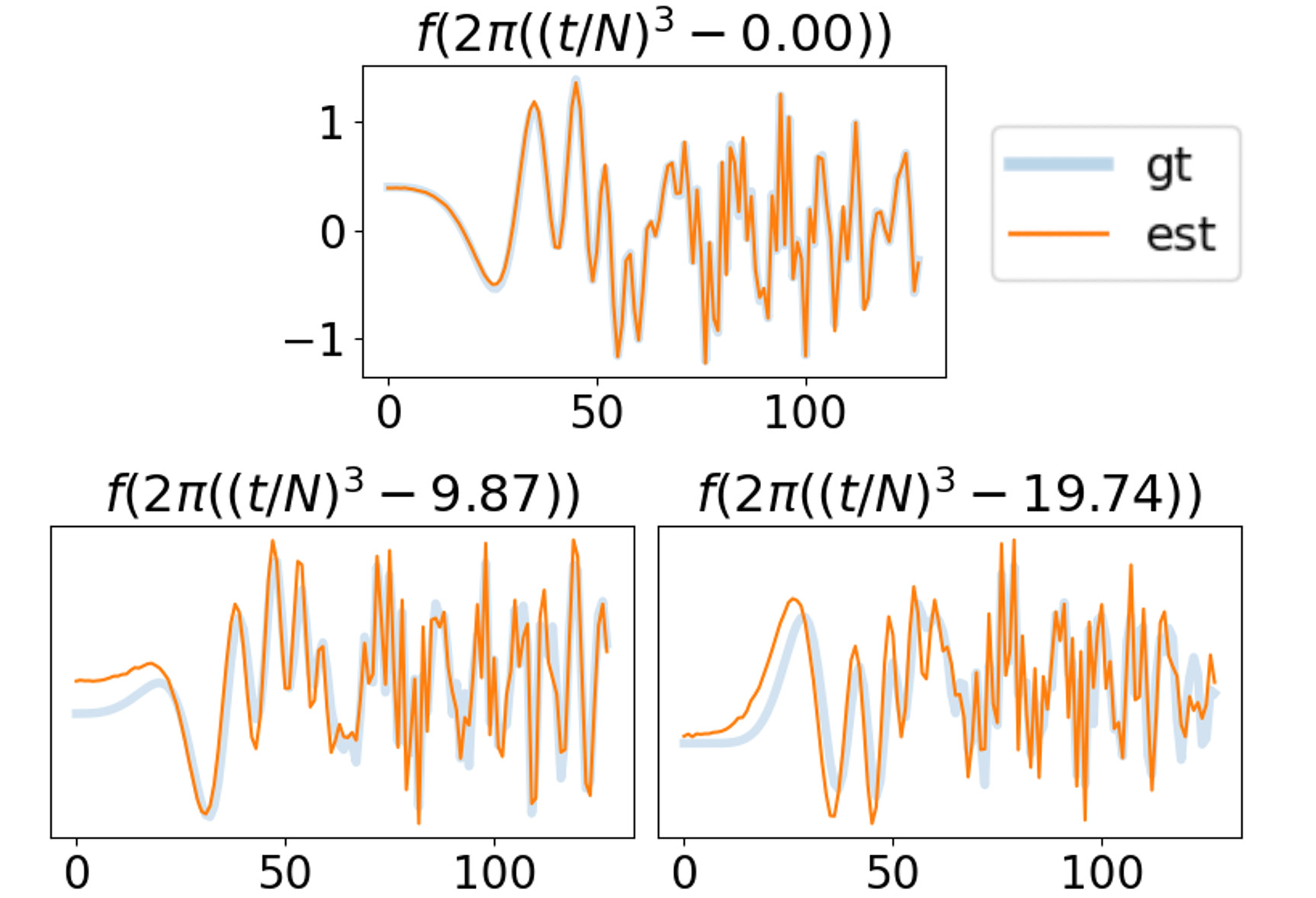}
    \includegraphics[scale=0.28]{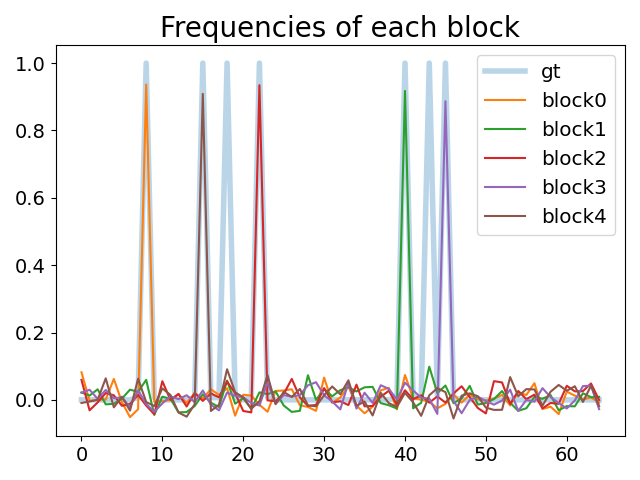}
    \includegraphics[scale=0.27]{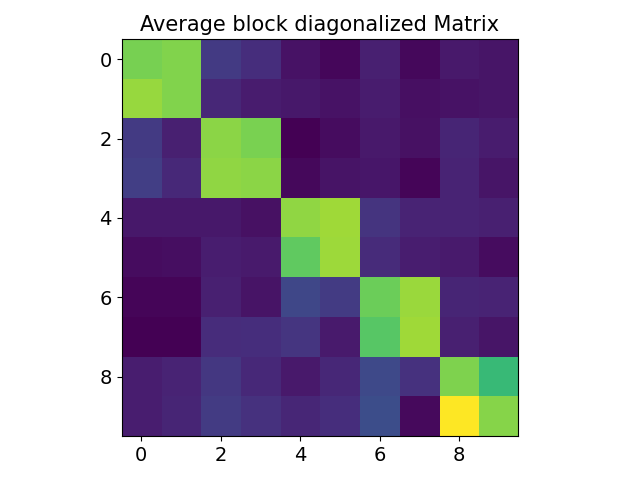}
    \caption{Left: Long horizon future prediction of the sequence of time-warped sigmals. 
    Center: $\mathbb{E}[\langle \rho_f | B \rangle]$ plotted against $f \in [0, 64]$ for each block $B$ in the block diagonalized $M_*$s learned from the dataset with 5 major frequencies~($\{8, 15, 22, 40, 45 \}$) and \textbf{2 noise frequencies with small coefficients ($\{18, 43 \}$) } when $M_*$s can express at most 5 frequencies. Note that $\mathbb{E}[\langle \rho_f | M_* \rangle]$ is linear with respect to $M^*$ (Appendix \ref{sec:char}). Right: Average absolute value of block-diagonalized $M_*$s.  } 
    % \caption{Left : $\mathbb{E}[\langle \rho_f | M \rangle]$ plotted against $f \in [0, 64]$ for $M$s learned from dataset with 5 frequencies with large coefficients and \textbf{2 frequencies with small coefficients (18 and 43)} when the $M$ can express at most 5 frequencies.  \textrm{gt} is the indicator function at all frequencies on the data \textbf{before} the timewarp transformation. 
    %  \textrm{gt} differs from the learned model only at two weak frequencies. Note that character analsysis does not contain the information of coefficients.
    % The character inner product $\mathbb{E}[\langle \rho_f | B \rangle]$ obtained from the block diagonalized matrices used in Fig \ref{fig:prediction}. Each block corresponds to distinct frequencies.  } 
    \label{fig:spectral}
\end{center}
\end{figure}

\begin{figure}[h!]
    \vspace{-10pt}
    \centering
    \includegraphics[scale=1.12]{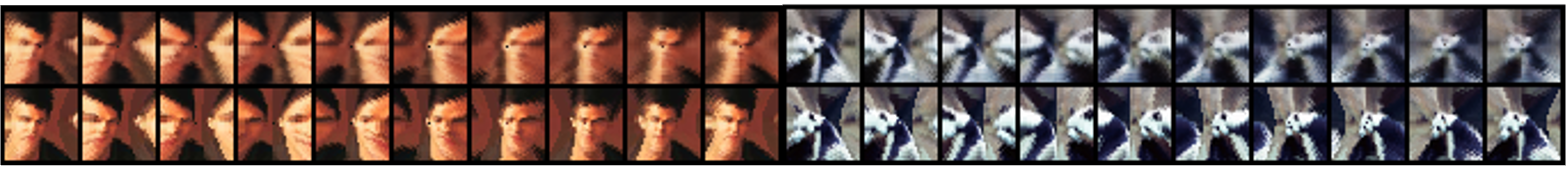}
    \caption{Horizontal shift of unseen objects in fisheye-view predicted from the left-most frame by U-NFT trained on CIFAR100~\citep{krizhevsky2009learning} sequences with $T=4$. (top:pred, bottom:gt).  U-NFT learns the deformed shifts that are not expressible as linear functions on input coordinates. }
    \label{fig:fisheye_cifar100}
\end{figure}

% \begin{figure}[h!]
%     \centering
%     \includegraphics[scale=0.7]{}
%     \caption{Sample efficiency of invariant/equivariant models in xMNIST classification tasks.}
%     \label{fig:my_label}
% \end{figure}

%\subsubsection{Data compression with NFT}
\textbf{Data compression with NFT.}
In signal processing, FT is also used as a tool to compress data based on the shift-equivariance structure of signals. 
We generated 5000 instances of the time-warped signals with $F \subset \{0:16\}$, and applied U-, G-, and g-NFT to compress the dataset using encoder output of dimension $\mathbb{R}^{32 \times 1}$.  We also added an independent Gaussian noise at each time point in the training signals. 
As for G-NFT, we used the direct sum of 16 commutative $2\times 2$ matrices of form $((a, -b), (b, a))$ to parameterize $M$\begin{footnote}{When the scalar field is $\mathbb{R}$, $2\times 2$ would be the smallest irreducible representation instead of $1 \times 1$ in complex numbers. }\end{footnote}.
%\miyatosugg{commutative} $2\times 2$ matrices to parametrize $M$(AbelMSP) \miyatonote{Can we also mention that each matrix is the form of ((a, -b), (b, a))? general 2x2 matrix is not commutative}. 
For g-NFT, we used the $2\times 2$ block diagonal matrix 
$
M(\theta)=\bigoplus_{\ell=0}^{15} \Bigl( \begin{smallmatrix}
    \cos l \theta &-\sin l \theta\\
    \sin l \theta &\cos l \theta
\end{smallmatrix}\Bigr)$  with known $\theta$. 
% $M(\theta)={\rm Diag}\Bigl( \begin{smallmatrix}
%     \cos l \theta &-\sin l \theta\\
%     \sin l \theta &\cos l \theta
% \end{smallmatrix}\Bigr)$ $(l=0,1,\ldots,15)$ with known $\theta$. 
% For g-NFT, we used the $2\times 2$ block diagonal matrix  $M(\theta)=1\bigoplus_{l=1}^{l_{\max}} \begin{pmatrix}
%     \cos l \theta &-\sin l \theta\\
%     \sin l \theta &\cos l \theta
% \end{pmatrix}\in\mathbb{R}^{(2L+1)\times (2L+1)}$
% with $L=16$.
%We also conducted experiments while adding an independent Gaussian noise with $\sigma$ standard deviation at each time point in the training signals. 

We evaluated the reconstruction losses of these NFT variants on the test signals and compared them against DFT over frequencies of range $\{0:16\}$ (Fig \ref{fig:1D-compression}, Appendix). 
%To see the ability of NFT to adapt to nonlinear deformations, we make experiments of nonlinear data compression with NFT.  
% We show that we NFT can extend this functionality to nonlinear symmetry. 
% We generate data of signals on the $1$-dimensional interval $[0,1)$ observed with nonlinear deformation of time variable $t$. For each signal, the standard shifts are applied for several steps, and observed with the fixed nonlinear deformation. 5000 training data are generated from the model \kf{the same model as Sec.~4.2}, and 128 points are observed on $[0,1)$. The NFT with $g$ known (\kf{how to cite the method?}) and AbelMSP ($g$ unknown) are trained. 
% For the latent variable $\Phi(x)$, a 32-dimensional vector value is used. Independent 5000 samples were used for measuring the reconstruction errors for testing.  
% As a comparison, the standard DFT with 16 complex-valued frequencies ($\R^{32}$) is also applied for comparison.  For the training data sets, we also add additive independent Gaussian noise (standard deviation $\sigma$) for each dimension of every shifted data.   
The mean squared errors together with standard deviations in parentheses are shown in Table \ref{tbl:compression}, which clearly demonstrates that NFT learns the nonlinearity of observation and compresses the data appropriately according to the hidden frequencies. It is also reasonable that g-NFT, which uses the identity of group element $g$ acting on each signal, achieves more accurate compression than G-NFT. DFT fails to compress the data effectively, which is obvious from Fig \ref{fig:DFTonDeform}.
%: it processes the observed data linearly and uses the frequency that does not reflect the true shift. 
%virtual distribution of frequencies (See Fig \ref{fig:1D-compression}). 

\begin{table}[t]
\centering
{\small 
\begin{tabular}{lccc}
\hline
 & $g$-NFT & $G$-NFT  & DFT ($N_f = 16$) \\
\hline
Noiseless & $3.59$ ($\pm 1.29$) $\times 10^{-5}$ & $1.98$  ($\pm 1.89$) $\times 10^{-2}$ & $8.10$ 
\\
$\sigma = 0.01$ & $2.62$ ($\pm0.26$) $\times 10^{-4}$  & $2.42$ ($\pm1.19$) $\times 10^{-2}$  & -- 
\\
$\sigma = 0.05$ & $1.42$ ($\pm 0.14$) $ \times 10^{-3}$  & $5.82$ ($\pm1.15$) $\times 10^{-2}$  & -- 
\\
$\sigma = 0.1$ & $2.53$ ($\pm 0.09$) $ \times 10^{-3}$  & $1.16$ ($\pm 0.22$) $\times 10^{-1}$ & -- \\
\hline
\end{tabular}
}
\caption{Reconstruction error of data compression for time-warped 1d signals}
\label{tbl:compression}
\vspace{-15pt}
\end{table}

\subsection{Application to Images} \label{sec:Ggknown}

% When both $G$ and $g$ are known, we do not necessarily have to learn all parts of the NFT.  For example, when we know that the observation is a rendering of a spinning turntable, we know that the action on the dataset is $SO(2)$, however it is affecting the image pixels. 
% If we further know the angle of rotation yielding each image, then we can replace $M_*$ in eq.(\ref{eq:pred_loss} for each sequence of angle velocity $\theta$ with the direct sum of rotation matrices $\rho_i(\theta)$ with frequencies $i$. 
% Likewise, if the action is known to be $SO(3)$, we may use the direct sum of Wigner D matrix $D_i(g)$ \citep{}.   
% We were able to confirm that, just as anticipated by the theory of Fourier analysis, we can improve the generalization performance of the future prediction by prereserving a dimensions for high-frequencies that do not appear much in the training set. 

We verify the representation ability of NFT by applying it to challenging tasks which involve out-of-domain (OOD) generalization and recovery of occlusion. 
Codes are in supplementary material.

% \subsubsection{Rotated MNIST} \label{sec:so2}
 \textbf{Rotated MNIST.} We applied g-NFT to MNIST~\citep{lecun1998gradient} dataset with SO(2) rotation action and used it as a tool in unsupervised representation learning for OOD generalization. 
 To this end, we trained $(\Phi, \Psi)$ \masosugg{on the in-domain dataset(rotated MNIST)}, and applied logistic regression on the output of $\Phi$ for the downstream task of classification on the rotated MNIST (in-domain) as well as on rotated Fashion-MNIST~\citep{xiao2017fashionmnist} and rotated Kuzushiji-MNIST~\citep{clanuwat2018deep} (two out-domains). 
 \masosugg{Following the standard procedure as in \citep{chen2020simple}, we trained the regression classifier for the downstream task with fixed $\Phi$.}
 We used data consisting of triplets $(g_\theta \circ x, g_{\theta'} \circ x, \theta'-\theta)$ with $x$ being a digit image  and  $\theta,\theta'\sim \mathrm{Uniform}(0,2\pi)$ being random rotation angles.
 We used the same transition matrix $M(\theta)$ used in the data compression experiment (Sec \ref{sec:1d}) with the max frequency $l_{\max}=2$ plus one-dimensional trivial representation.
%
% We then trained $\Phi$ and $\Psi$ to minimize the squared loss $\|\Psi(M(\theta'-\theta)\Phi(g_\theta \circ x)) - g_{\theta'} \circ x\|^2$ where the encoder 
% $\Phi$ has the latent dimension $\mathbb{R}^{(2l_{\max}+1) \times d_m}$ with $d_m=180$.
%We evaluated the obtained latent features
%representation 
% by applying logistic regression on the output of $\Phi$ for the downstream task of classification on the rotated MNIST as well as rotated Fashion-MNIST~\citep{xiao2017fashionmnist} and rotated Kuzushiji-MNIST~\citep{clanuwat2018deep}. 
%\kh{Note that the input images were rotated as well as the autoencoding task, but the class labels were invariant to the rotation. }
% Thus, \textcolor{red}{the difference between in-distribution (RotMNIST) and OOD datasets (RotFMNIST/ RotKuzushiji-MNIST) is the types of objects illustrated therein.} 
Details are in Appendix \ref{sec:experiment_app}.
Because the representation learned with NFT is decomposed into frequencies, we can make a feature 
%can also conduct an even closer analog of DFT's coefficient analysis 
by taking the absolute value of each frequency component; that is, by interpreting the latent variable $\mathbb{R}^{(2l_{\max}+1) \times d_m}$ as $l_{\max} + 1$ frequencies of $d_m$ multiplicity each (trivial representation is 1-dim), we may take the norm of each frequency component to produce 
%$\mathbb{R}^{(l_{\max}+1) \times d_m}$ 
$(l_{\max}+1) \times d_m$-
dimensional feature. 
This is analogous to taking the absolute value of each coefficient in DFT.
%\kh{and the norm features are rotation invariant in their nature}. 
We designate this approach as \textbf{norm} in Fig \ref{fig:rotmnist}.  
% $(I prefer using \textbf{norm} over \textbf{inv}.)} 
% We denote the approach presented in this section as \textcolor{red}{equiv}, and the approach of the previous section without $SO(2)$ information as \textit{msp}. 

As we can see in the right panel of Fig \ref{fig:rotmnist}, both g-NFT and g-NFT-norm perform competitively compared to conventional methods. In particular, g-NFT norm consistently eclipses all competitors on OOD. % including \textrm{MSP} (U-NFT).  
In the left panel, although a larger $l_{\max}$ generally benefits the OOD performance, too large a value of $l_{\max}$ seems to result in overfit, just like in the analysis of FT. 
We also compared NFT against SteerableCNN \citep{SteerableCNN}, which assumes that acting $G$ is a rotation group.
We gave SteerableCNN a competitive advantage by training the model with classification labels on rotMNIST, and fine-tuned the final logit layer on all three datasets, consisting of the in-domain dataset (rotMNIST) and two OOD datasets (rotFMNIST, rotKuzushiji-MNIST).
SteerableCNN with supervision outperforms all our baselines in the in-domain dataset, but not on the OOD datasets.  
We believe that this is because, as pointed out in \citep{intertwinersTaco},  SteerableCNN functions as a composition of filters that preferentially choose the frequencies that are relevant to the task used in the training, so that the model learns G-linear maps that are overfitted to classify the rotMNIST dataset. 
Also see  Appendix \ref{sec:MNIST} for rotMNIST with more challenging condition involving occlusion.

\begin{figure}[tb]
    \begin{minipage}[b]{.45\textwidth}
        \centering
        \includegraphics[scale=0.6]{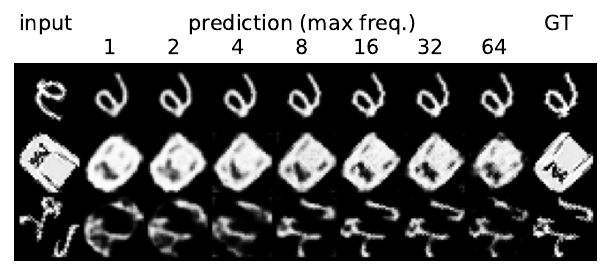}
        \includegraphics[scale=0.6]{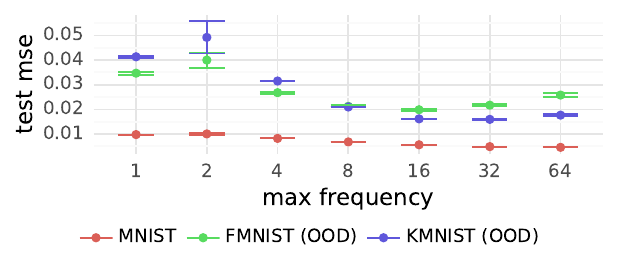}
    \end{minipage}
    %\hfill
    \begin{minipage}[b]{.45\textwidth}
        \centering
        \includegraphics[scale=0.6]{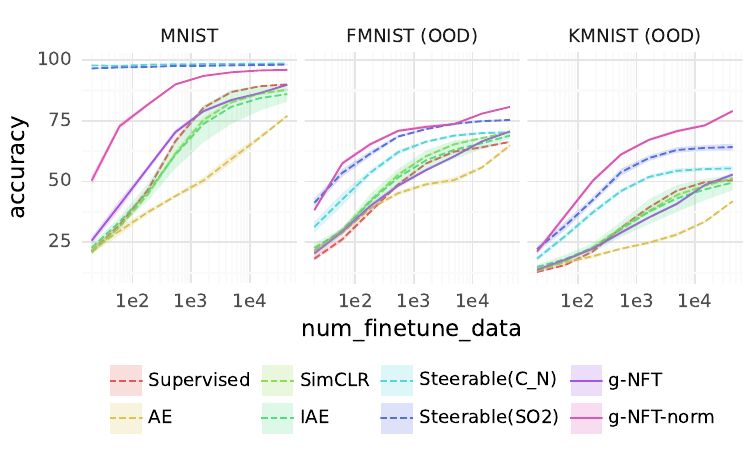}
        \end{minipage}
    \caption{OOD generalization ability of NFT trained on rotMNIST. Top Left: Prediction by g-NFT with various maximum frequencies $l_{\max}$. Bottom Left: Prediction errors of g-NFT. Right: Classification accuracy of linear probe, compared against autoencoder (AE), SimCLR~\citep{chen2020simple}, the invariant autoencoder (IAE)~\citep{winter2022unsupervised} and supervised model including $C_n$ steerable CNN~\citep{SteerableCNN} and $SO(2)$ steerable CNN~\citep{weiler2018learning}. Each line is the mean of the accuracy over 10 seeds, with the shaded area being the standard deviation.}
    \label{fig:rotmnist}
        \vspace{-10pt} 
\end{figure}

% \begin{figure}[h!]
%     \centering
%     \includegraphics[scale=0.7]{}
%     \caption{Angle estimation error v.s. learning steps (expected).}
%     \label{fig:my_label}
% \end{figure}

% \subsubsection{Learning 3D Structure from 2D Rendering}\label{sec:3d-nvs}
\textbf{Learning 3D Structure from 2D Rendering.}
We also applied g-NFT to the novel view synthesis from a single 2D rendering of a 3D object. This is a challenging task because it cannot be formulated as pixel-based transformation --- in all renderings, the rear side of the object is always occluded. 
We used three datasets: ModelNet10-SO3~\citep{liao2019spherical} in $64\times 64$ resolution, BRDFs~\citep{greff2021kubric} ($224\times 224$), and ABO-Material~\citep{collins2022abo} ($224\times 224$). Each dataset contains multiple 2D images rendered from different camera positions.
We trained the autoencoder with the same procedure as for MNIST, except that we used Wigner D matrix representations of $SO(3)$ with $l_{\max}=8$\footnote{We used $SO(2)$ representation for BRDFs however, since its camera positions have a fixed elevation.}.
%We used ModelNet10-SO3~\citep{liao2019spherical}, which contained object centered images of 4,899 CAD models. Each image was rendered with the uniformly random camera position $g\in SO(3)$. The CAD models were disjointly separated into a training set (100 images for each CAD model) and a test set (20 images for each). We downscaled the image resolution to $64\times 64$.
We used Vision Transformer~\citep{dosovitskiy2020image} to model $\Phi$ and $\Psi$. 

The prediction results (Fig \ref{fig:NVS}) demonstrate that g-NFT accurately predicts the 2D rendering of 3D rotation. 
We also studied each frequency component by masking out all other frequencies before decoding the latent. Please also see the rendered movie in the Supplementary material. 
Note that $0$-th frequency (F0) captures the features  \emph{invariant} to rotation, such as color. F1 (second row) captures the orientation information, and higher frequencies extract symmetries of the object shapes.
For example, F3 depicts triangle-like shapes having rotational symmetry of degree 3, similar to the spherical harmonics decomposition done in 3D space (Fig \ref{fig:spherical}). See Appendix \ref{sec:experiment_app} for details.

\begin{figure}[t]
    \centering
    \includegraphics[scale=0.68]{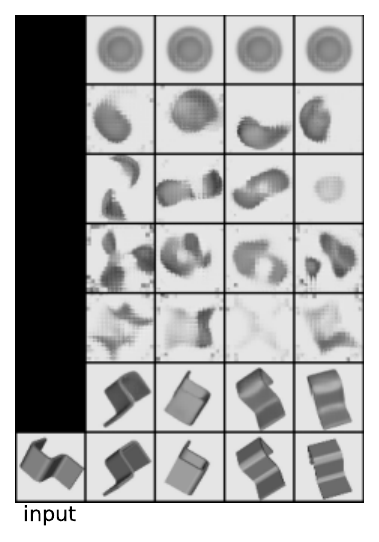}
    \includegraphics[scale=0.68]{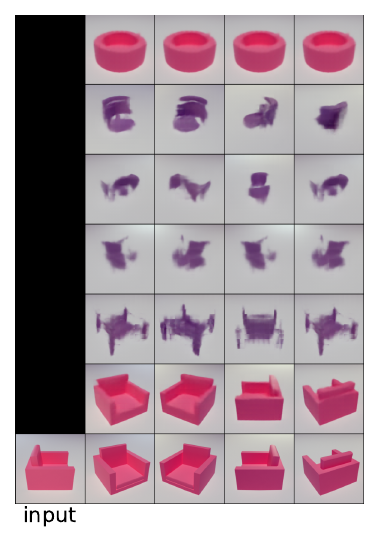}
    \includegraphics[scale=0.68]{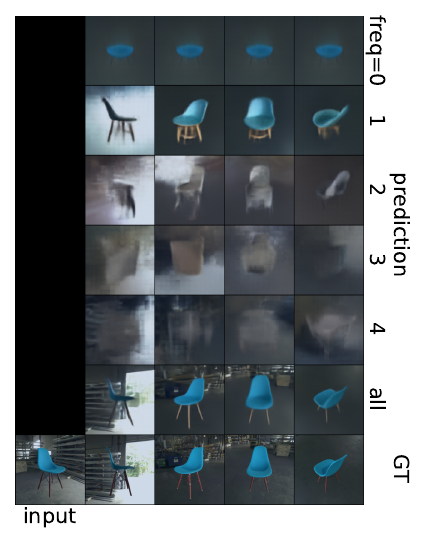}
    \caption{
    Novel view synthesis from a single image of a test object that is unseen during training (from left to right: ModelNet, BRDFs, ABO). 
    From bottom to top, we show ground truth, g-NFT prediction, and the decoder output with $k$-th frequency only. 
    % \kh{As for ABO, while the object was unseen, the same backgrounds were repeatedly used during training.}
    As for ABO, the same backgrounds were repeatedly used for both training and test datasets.
    %We provide the movie rendition of these results in Supplementary material. 
    % As well as the normal prediction results (second from the bottom), the frequency-specific predictions from frequency zero to four are shown. Note that the same backgrounds are used for both training and test datasets of ABO, so that g-NFT seems to memorize them.
    }
    \label{fig:NVS}
    \vspace{-10pt}
\end{figure}

\section{Related works and discussions}

%NFT is a framework of equivariance learning that can be used with different levels of prior knowledge about the symmetry.
As a tool in equivariant deep learning, group convolution (GC) has been extensively studied \citep{SteerableCNN, generalCNN, CNN}.
NFT differs from GC as well as FT in that it does not assume $g$ to act linearly on the input.
% NFT differs from GC as well as FT in that it not only assume the input to be of form $x = \sum c_k v_k$ for which we need to know how a given $g \in G$ acts linearly on each $v_k$ (for instance, $v_k \in \R^3$ for $k$ -th pixel in CV application), it does not require to know $g$ acts on $x$ at all. However,  this comes at the cost of having to learn $P\Phi$, which is pre-fixed in FT. 
In the words of \citep{kondor_generalized},  FT and GC assume that each instance $x \in \mathcal{X}$ is a \text{function} defined on homogeneous space, or the copy of the acting group modulo the stabilizers.
%NFT is particularly different from GC type methods \citep{SteerableCNN, generalCNN, CNN, kondor_generalized} and classical FT in that it does not require the input to be of form $x = \sum c_k v_k$ for which we know how a given $g \in G$ acts on each $v_k$. This requirement is a also a part of more sophisticated variants like \citep{weiler2019general} as well as the method that tries to learn the symmetry from a set of tasks \citep{zhou2020meta}. 
These methods must install the explicit knowledge of the group structure into the model architecture \citep{PracticalEquiv}.
\citep{kondorThesis, Reisert}  have used group invariant kernel, but limited their scope to situations similar to FT.
When the action is linear, our philosophy of G-NFT also shares much in common with \citep{cohen2014learning}.
%For NFT, the input space $\mathcal{X}$ can be any general set on which the group action is defined anyhow. 
%This does not mean that $\mathcal{X}$ is a homogeneous space, but is a union of homogeneous spaces. 

As other efforts to find the symmetry under nonlinear actions, \citep{nRendering} took the approach of mapping the input to the latent space of volumetric form and applying the linear rotation operation on the voxels, yielding an instance of g-NFT.
\citep{NEURIPS2022_dcd29769} uses different types of invariants (polynomial) that are specific to group/family of groups, instead of linearlized group actions in the form of representations.
\citep{falorsi2018explorations} maps the observations to the matrix group of $SO(3)$ itself. \citep{SEN} first maps the input to an intermediate latent space with a blackbox function and then applies the convolution of known symmetry for contrastive learning. Finally, Koopman operator~\citep{Brunton2022} is closely related to the philosophy of NFT in that it linearizes a \textit{single} nonlinear dynamic, but NFT differs in that it seeks the latent linear transitions with structures (e.g frequencies) that are specific to {\em group}.
% \citep{MSP} on the other hand, is more directly related to our framework.

Most relevant to this work,
\citep{MSP} presents an implementation of U-NFT and uses it to disentangle the modes of actions. 
However, they do not present other versions of NFT (g-NFT, G-NFT) and, most importantly, neither provide the theoretical foundations that guarantee the identifiability nor establish the "learning of linearlized equivariance" as an extension of Fourier Transform.
By formally connecting the Fourier analysis with their results, the current work has shown that the contextual disentanglement that was often analyzed in the framework of probabilistic generative model \citep{zhang2020learning, kim2018disentangling} or the direct product of groups \citep{higgins2018towards, yang2021groupifyvae} may be described in terms of Fourier frequency as well. 
% However, \citep{MSP} does not provide sufficient justification behind the assumption of linear latent space nor does it make a formal connection to Fourier transform. 
To our knowledge, we are the first of a kind in providing a formal theory for seeking the linear representations on the dataset with group symmetries defined with nonlinear actions.

\section{Limitations and Future work}
As stated in Sec \ref{sec:xNFT}, NFT requires a dataset consisting of (short) tuples because it needs to observe the transition in order to learn the equivariance hidden in nonlinear actions; \kf{this might be a practical limitation. }
% It is important to note that, although NFT can discover the symmetry of nonlinear actions on the dataset, NFT always requires the user to be able to at least observe the effect of the action in the form of a set of pairs $(g \circ x, x)$. 
Also, although NFT can identify the major frequencies of data-specific symmetry, it does not identifiably preserve the norms in the latent space and hence the size of the coefficient in each frequency, because the scale of $\Phi$ is undecided.
%there are simply infinite ways to define norms in the latent space. 
Finally, while NFT is a framework for which we have provided theoretical guarantees of existence and identifiability in general cases, the work remains to establish an algorithm that is guaranteed to find the solution to eq.(\ref{eq:NFT}).
As stated in \citep{MSP}, the proof has not been completed to assure that the AE loss of Sec \ref{sec:xNFT} can find the optimal solution, and resolving this problem is a future work of great practical importance.

\appendix

\bibliography{references}

\newpage
\appendix 

% \section{Broader impact}
% This paper proposes a general framework of learning the symmetry on data space by analyzing the relations among pairs/triplets in the dataset. As such, in the application of this framework to the images, for example, the occluded part will be inferred by bringing the knowledge of other members of the dataset, which might not be reflecting the Truth in some cases. This is a special case of oversmoothing whose nature is dependent on the geometry of the entire dataset, and a user must be wary that such overgeneralization may occur in applications and that the outliers may not be considered.

\section{Elements of group, action, and group representation}
\label{sec:group_basics}
In this first section of the Appendix, we provide a set of preliminary definitions and terminologies used throughout the main paper and the Appendix section (Sec \ref{sec:proofs}) dedicated to the proofs of our main results. For a general reference on the topics discussed here, see \citep{Fulton1991}, for example.

\subsection{Group and Group representation} 
A {\em group} is a set $G$ with a binary operation $G\times G\ni (g,h)\mapsto gh\in G$ such that (i) the operation is associative $(gh)r = g(hr)$, (ii) there is a unit element $e\in G$ so that $eg = ge = g$ for any $g\in G$, and (iii) for any $g\in G$ there is an inverse $g^{-1}$ so that $gg^{-1}=g^{-1}g = e$.

Let $G$ and $H$ be groups.  A map $\varphi: G\to H$ is called {\em homomorphism} if $\varphi(st)=\varphi(s)\varphi(t)$ for any $s,t\in G$. If a homomorphism $\varphi: G\to H$ is a bijection, it is called {\em isomorphism}.

Let $V$ be a vector space.  The set of invertible linear transforms of $V$, denoted by $GL(V)$, is a group where the multiplication is defined by composition. When $V=\R^n$, $GL(\R^n)$ is identified with the set of invertible $n\times n$ matrices as a multiplicative group.  

For group $G$ with unit element $e$ and set $\cX$, a  (left) action $L$ of $G$ on $\cX$ is a map $L:G\times \cX\to \cX$, denoted by $L_g(x):=L(g,x)$, such that $L_e(x) = x$ and $L_{gh} = L_g(L_h(x))$. If there is no confusion, $L_g(x)$ is often written by $g\circ x$.

For group $G$ and vector space $V$, a {\em group representation} $(\rho,V)$ is a group homomorphism $\rho:G \to GL(V)$, that is, $\rho(e) = Id_{V}$ and $\rho(gh)=\rho(g)\rho(h)$.   A group representation is a special type of action; it consists of linear maps on a vector space. 
Of particular importance, we say $(\rho, V)$ is \textit{unitarizable} if there exists a change of basis on $V$ that renders $\rho(g)$ unitary for all $g\in G$.  
It is known that any representation of a compact group is unitarizable \citep{Folland1994}. 
Through group representations, we can analyze various properties of groups, and there is an extensive mathematical theory on group representations.  
See \citep{Fulton1991}, for example.  

Let $(\rho, V)$ and $(\tau, W)$ be group representations of $G$.  A linear map $f:V\to W$ is called {\em $G$-linear map} (or {\em $G$-map}) if $f(\rho(g)v)=\tau(g)f(v)$ for any $v\in V$ and $g\in G$, that is, if the diagram in Fig \ref{fig:Gmap} commutes.  

\begin{figure}[h!]
    \centering
\[
 \xymatrix{
     V \ar[d]_{f}  \ar[r]^{\rho(g)} &  V \ar[d]^{f} \\
     W  \ar[r]^{\tau(g)} & W 
  }
\]
    \caption{$G$-linear map}
    \label{fig:Gmap}
\end{figure}

A $G$-map is a homomorphism of the representations of $G$.  If there is a bijective $G$-map between two representations of $G$, they are called {\em G-isomorphic}, or {\em isomorphic} for short.

Related to this paper, an important device in representation theory is irreducible decomposition.  A group representation $(\rho, V)$ is {\em reducible} if there is a non-trivial, proper subspace $W$ of $V$ such that $W$ is invariant to $G$, that is, $\rho(g) W\subset W$ holds for any $g\in G$. If $\rho$ is not reducible, it is called {\em irreducible}.

There is a famous, important lemma about a $G$-linear map between two {\em irreducible}  representations. 

\begin{theorem}[Schur's lemma]\label{lma:Schur}
Let $(V, \rho)$ and $(W,\tau)$ be two irreducible representations of a group $G$. If $f : V\to  W$ is $G$-linear, then $f$ is either an isomorphism or the zero map.
\end{theorem}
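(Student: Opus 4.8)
The plan is to run the classical kernel--image argument: the sole hypothesis that $f$ is $G$-linear already forces both $\ker f \subseteq V$ and $\operatorname{im} f \subseteq W$ to be $G$-invariant subspaces, and irreducibility of $V$ and of $W$ then leaves only a short list of possibilities, all but one of which collapse to the zero map.

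First I would verify that $\ker f$ is invariant under $\rho$: if $v \in \ker f$ and $g \in G$, then $f(\rho(g)v) = \tau(g) f(v) = \tau(g)\cdot 0 = 0$, so $\rho(g) v \in \ker f$; hence $\rho(g)\ker f \subseteq \ker f$ for every $g$. Since $(V,\rho)$ is irreducible --- it has no invariant subspace other than $\{0\}$ and $V$ --- this gives $\ker f = \{0\}$ or $\ker f = V$. Next I would carry out the mirror-image computation: every element of $\operatorname{im} f$ has the form $f(v)$ for some $v \in V$, and $\tau(g) f(v) = f(\rho(g)v) \in \operatorname{im} f$, so $\operatorname{im} f$ is $\tau$-invariant, and irreducibility of $(W,\tau)$ forces $\operatorname{im} f = \{0\}$ or $\operatorname{im} f = W$.

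It then remains to combine the two dichotomies. If $f$ is not the zero map, then $\operatorname{im} f \neq \{0\}$ rules out $\operatorname{im} f = \{0\}$, so $\operatorname{im} f = W$ and $f$ is surjective; likewise $\ker f \neq V$ rules out $\ker f = V$, so $\ker f = \{0\}$ and $f$ is injective. A bijective $G$-linear map is, by definition, an isomorphism of representations, so in this case $f$ is an isomorphism. The only remaining alternative is $f = 0$, the zero map, which exhausts the cases.

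There is essentially no hard step here; the one point to handle carefully is the bookkeeping around the notion of irreducibility as stated in the paper (a representation is irreducible precisely when it admits no non-trivial proper invariant subspace, which in particular presupposes the underlying space is nonzero), so that the dichotomies $\ker f \in \{\{0\},V\}$ and $\operatorname{im} f \in \{\{0\},W\}$ are being applied to genuinely irreducible spaces. No hypothesis on the scalar field is needed for this form of the statement; the sharper fact that the $G$-endomorphism algebra of an irreducible complex representation consists only of scalar multiples of the identity is not asserted here and would additionally require algebraic closedness.
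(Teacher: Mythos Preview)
Your argument is correct and is precisely the classical kernel--image proof of Schur's lemma. The paper itself does not supply a proof of this statement; it merely records it as a well-known result from representation theory, so there is nothing to compare against beyond noting that your proof is the standard one.
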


\subsection{Irreducible decomposition}

A representation $(\rho, V)$ is called {\em decomposable} if it is isomorphic to a direct sum of two non-trivial representations $(\rho_U, U)$ and $(\rho_W, W)$, that is 
\begin{equation}
    \rho \cong \rho_U \oplus \rho_W,
\end{equation}
If a representation is not decomposable, we say that it is {\em indecomposable}.

It is obvious that if a group representation is decomposable, it is reducible. In other words, an irreducible representation is indecomposable.  An indecomposable representation may not be irreducible in general.  It is known (Maschke's theorem) that for a finite group, this is true; for a non-trivial subrepresentation $(\tau,U)$ of $(\rho,V)$ we can always find a complementary subspace $W$ such that $V=U\oplus W$ and $W$ is $G$-invariant.  

It is desirable if we can express a representation $\rho$ as a direct sum of irreducible representations. We say that $(\rho,V)$ is {\em completely reducible} if there is a finite number of irreducible representations $\{(\rho_i,V_i)\}_{i=1}^m$ of $G$ such that
\begin{equation}
    \rho \cong \rho_1 \oplus \cdots \oplus \rho_m.
\end{equation}
A completely reducible representation is also called {\em semi-simple}.  
If the irreducible components are identified with a subrepresentation of the original $(\rho,V)$, the component $\rho_i$ can be uniquely obtained by a restriction of $\rho$ to the subspace $V_i$.  The irreducible decomposition is thus often expressed by a decomposition of $V$, such as 
\begin{equation}
    V = V_1\oplus \cdots \oplus V_m
\end{equation}
In the irreducible decomposition, there may be some isomorphic components among $V_i$. By summarizing the isomorphic classes, we can have the {\em isotypic decomposition} 
\begin{equation}
    V \cong W_1^{n_1}\oplus \cdots \oplus W_k^{n_k},
\end{equation}
where $W_1,\ldots,W_k$ are mutually non-isomorphic irreducible representations of $G$, and $n_j$ is the multiplicity of the irreducible representation $W_j$. The isotypic decomposition is unique if an irreducible decomposition exists. 

A group representation may not be completely reducible in general. However, some classes are known to be completely reducible.  For example, 
\begin{tight_itemize}
    \item a representation of a finite group
    \item a finite-dimensional representation of a compact Lie group
    \item a finite-dimensional representation of a locally compact Abelian group
\end{tight_itemize}
are completely reducible \citep{Fulton1991}.  

If a representation $(\rho,V)$ is completely reducible, we can find a basis of $V$ such that the representation can be expressed with the basis in the form of block diagonal components, where each block corresponds to an irreducible component.  Note that the basis does not depend on group element $g$, and thus we can realize simultaneous block diagonalization (see Fig \ref{fig:sbd}).

\begin{figure}
    \centering
    \includegraphics[scale=0.45]{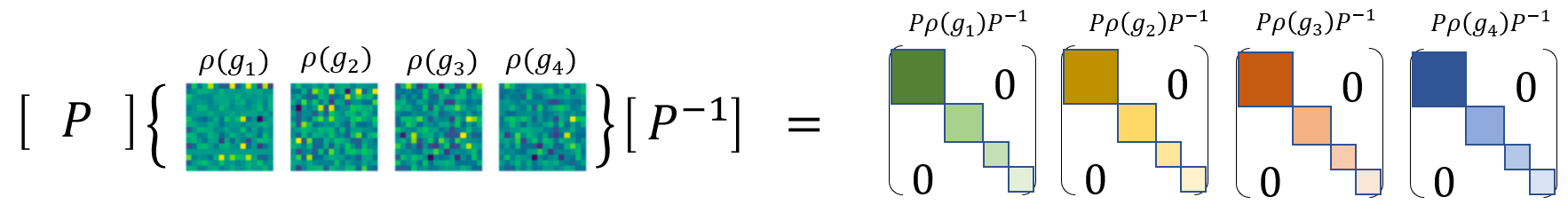}
    \caption{Simultraneous block diagonalization. Existence of such $P$ is guaranteed if the representation is completely reducible. Note that the matrix $P$ for the change of basis is common for all $g\in G$.}
    \label{fig:sbd}
\end{figure}

When complex vector spaces are considered, any irreducible representation of a locally compact Abelian group is one-dimensional.  This is the basis of Fourier analysis.  For the additive group $[0,1)$, they are the Fourier basis functions $e^{2\pi k i x}$ ($x\in [0,1)$, $k=0,1,\ldots$).  See \citep{Rudin1991}, for example, for Fourier analysis on locally compact Abelian groups.

\subsection{Characters} \label{sec:char}
In this section, we introduce the idea of characters which is used in Sec \ref{sec:1d}. 
For a representation $(\rho, V)$, the character  $\chi_\rho: G \to \mathbb{C}$ of $\rho$ is defined as its trace
$$ \chi_\rho(g) =  \textrm{trace}(\rho(g))$$
This function has extremely useful properties in harmonic analysis, and we list two of them that we exploit in this paper. 

\noindent \textbf{Property 1 (Invariance to change of basis)}  By its definition, character is invariant to change of basis. That is, if $(\rho, V)$ and $(\tilde \rho, W)$ are two representations that are isomorphic so that there exists a linear invertible map $P: V \to W$, then 
\begin{align}
\chi_{\tilde \rho(g)} &= \textrm{trace}(\tilde \rho(g)) \\
&= \textrm{trace}(P^{-1} \rho(g) P) \\
&= \textrm{trace}(\rho(g)) = \chi_\rho(g).
\end{align}
Indeed, if $V=W$, $P$ is just the change of basis.

\noindent \textbf{Property 2 (Character orthogonality)}
Suppose that $G$ is a compact group. Then group invariant measure on $G$ is a measure satisfying that 
$$\mu(A) = \mu(gA):= \mu(\{g \circ a ; a \in A\} ).$$
If $d\mu$ is a group invariant measure with $\int_G d\mu(g) =1$, then for all irreducible representations $\rho$ and $\tilde \rho$, 

\begin{align}
\langle \rho | \tilde{\rho} \rangle = \int_G \overline{\chi_\rho(g)} \chi_{\tilde{\rho}}(g) d\mu(g) = \begin{cases}
1 & \textrm{if $\rho$ and $\tilde \rho$ are isomorpohic} \\
0 & \textrm{otherwise}.
\end{cases}
\end{align}
Thus, if $M$ is a representation that is isomorphic to the direct sum of $\{\rho_k\}$, then by the linear property of the trace, $\langle \rho| M \rangle$ is the number of components of the direct sum that are isomorphic to $\rho$, and this is called \textit{multiplicity of $\rho$ in $M$}. For more detailed theory of characters, see \citep{Fulton1991}.

\section{Proof of Lemma \ref{lma:Mg_repr}}
\label{sec:proof_lma}

The following is a rephrase of Lemma \ref{lma:Mg_repr}.
\begin{lemma}
Suppose that 
\begin{equation}\label{eq:NFT_app}
    \Phi(g \circ x) =  M(g) \Phi(x)  
\end{equation}
holds for any $x\in \cX$ and $g\in G$.  If $\textrm{span}\{ \Phi(\mathcal{X})\}$ equals the entire latent space, then $M(g)$ is a group representation, i.e., $M(e)=Id$ and $M(gh)=M(g)M(h)$. 
\end{lemma}

\begin{proof}
Take $g=e$ in eq.(\ref{eq:NFT_app}).  Then, $\Phi(x)=\Phi(e\circ x)=M(e)\Phi(x)$ for any $x$.  It follows from the assumption of $\{\Phi(\mathcal{X})\}$ that $M(e)=Id$ holds.  
Next, by replacing $x$ with $h\circ x$ ($h\in G$) in eq.(\ref{eq:NFT_app}), we have 
\[
\Phi(g\circ (h\circ x)) = M(g)\Phi(h\circ x). 
\]
This implies 
\[
M(gh) \Phi(x) = M(g)M(h)\Phi(x),
\]
from which we have $M(gh) = M(g)M(h)$.  
\end{proof}

\section{Proofs of Theorems in Sec \ref{sec:theory}} 
\label{sec:proofs}

This section gives the proofs of the theorems in Sec \ref{sec:theory}.  Throughout this section, we use $\mathcal{X}$ to denote a data space, and assume that a group $G$ acts on $\cX$.

In this paper, we discuss an extension of Fourier transform to any data. Unlike the standard Fourier transform, it does not assume each data instance to be a function on some homogeneous space.  To this end, we embed each point $x\in \cX$ to a function space on $\cX$, on which a linear action or group representation is easily defined.  Recall that we can define a {\em regular representation} on a function space $\cH$ on $\cX$ by 
\[
L_g: f(\cdot) \mapsto f(g^{-1}\cdot) \qquad (g\in G).
\]
It is easy to see that $g\mapsto L_g$ is a group representation of $G$ on the vector space $\cH$. 
In order to discuss the existence of $\Phi$ into a latent space with linear action,  we want to introduce an embedding $\Phi:\cX\to \cH$ and make use of this regular representation.
The reproducing kernel Hilbert space is a natural device to discuss such an embedding, because, as is well known to the machine learning community, we can easily introduce the so-called {\em feature map} for such a $\Phi$, which is a mapping from space $\cX$ to the reproducing kernel Hilbert space.

\subsection{Existence}
\label{subsec:Kernel}
In this subsection, we will prove the existence part of Thm \ref{thm:equivariance}; that is,  whenever there is a feature space with $G$-invariant inner product structure, we can find a linear representation space for the data space, which means that we can find an encoding space where the group action is linear.

\textbf{RKHS with $G$-invariant kernel:}

%Suppose that $K$ is a continuous kernel on the topological space $\mathcal{X}$, and $G$ acts continuously on $\mathcal{X}$.
\begin{proposition}\label{thm:rkhs1}
Suppose that $K$ is a positive definite kernel on a topological space on $\cX$ and $G$ acts continuously on $\cX$.  Define $K_g(x, y) := K(g^{-1} x,  ~  g^{-1} y)$ for $g\in G$. Then $H(K_g) := \{f(g^{-1}\cdot) ; f \in H(K) \}$ is the RKHS corresponding to $K_g$ for any $g$, and 
$H(K) \to H(K_g)$ defined by $\rho(g) :  f(\cdot) \mapsto  f(g^{-1}  \cdot)$ is an isomorphism of Hilbert spaces.
\end{proposition}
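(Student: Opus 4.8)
The plan is to lean on the Moore--Aronszajn theorem, which guarantees that a positive definite kernel determines its RKHS \emph{uniquely}: once we exhibit any Hilbert space of functions on $\cX$ in which $K_g$ reproduces, that space must be $H(K_g)$. So the first step is the trivial observation that $K_g$ is positive definite --- for any $x_1,\dots,x_n\in\cX$ and $c_1,\dots,c_n\in\R$ we have $\sum_{i,j} c_i c_j K_g(x_i,x_j) = \sum_{i,j} c_i c_j K(g^{-1}x_i, g^{-1}x_j)\ge 0$ --- so the RKHS $H(K_g)$ exists and it suffices to identify it with the set in the statement, under an appropriate inner product.

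Second, I would set up $\rho(g)$ carefully. Since $x\mapsto g^{-1}x$ is a bijection of $\cX$ (with inverse $x\mapsto gx$), the linear map $\rho(g):f\mapsto f(g^{-1}\cdot)$ is injective on $H(K)$: if $\rho(g)f\equiv 0$ then $f\equiv 0$ because $x\mapsto g^{-1}x$ is onto. Let $H(K_g)$ denote its image, equipped with the transported inner product $\langle \rho(g)f,\rho(g)h\rangle_{H(K_g)} := \langle f,h\rangle_{H(K)}$, which is well defined precisely because $\rho(g)$ is injective. By construction $\rho(g):H(K)\to H(K_g)$ is a linear isometric bijection, hence a Hilbert-space isomorphism, and $H(K_g)$ is complete as the isometric image of a complete space. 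This already yields the ``isomorphism'' half of the statement, modulo identifying $H(K_g)$ as the RKHS of $K_g$.

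Third, I would check that $K_g$ reproduces in $(H(K_g),\langle\cdot,\cdot\rangle_{H(K_g)})$. For fixed $x\in\cX$ the section $K_g(\cdot,x)=K(g^{-1}\cdot,g^{-1}x)$ is exactly $\rho(g)\bigl(K(\cdot,g^{-1}x)\bigr)$, and $K(\cdot,g^{-1}x)\in H(K)$ by the reproducing property of $K$; hence $K_g(\cdot,x)\in H(K_g)$. Writing a generic element of $H(K_g)$ as $\rho(g)f$ with $f\in H(K)$, the transport identity gives
\[
\langle \rho(g)f,\, K_g(\cdot,x)\rangle_{H(K_g)} = \bigl\langle f,\, K(\cdot,g^{-1}x)\bigr\rangle_{H(K)} = f(g^{-1}x) = (\rho(g)f)(x).
\]
Thus $K_g$ is a reproducing kernel for $H(K_g)$, and by the uniqueness clause of Moore--Aronszajn, $H(K_g)$ with this inner product is precisely the RKHS of $K_g$.

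Finally, a word on hypotheses and on where care is needed. Continuity of the action plays no role in the Hilbert-space argument above; it only keeps the construction in the intended topological category --- e.g.\ if the functions in $H(K)$ are continuous, so are those in $H(K_g)$, being compositions of continuous maps. The only genuine subtlety, and the main thing to get right, is the logical order: the inner product on $H(K_g)$ is \emph{defined} by transport along $\rho(g)$ and only \emph{afterwards} shown to coincide with the RKHS inner product of $K_g$ via the reproducing identity --- one must resist assuming the two agree a priori. Once that identity is in hand, uniqueness does the rest, and no completion or density argument is required.
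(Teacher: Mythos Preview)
Your proof is correct and follows essentially the same route as the paper: both hinge on Moore--Aronszajn and the identity $K_g(\cdot,x)=\rho(g)\bigl(K(\cdot,g^{-1}x)\bigr)$. The only difference is organizational --- the paper starts from the Moore inner product on the span of $\{K_g(\cdot,x)\}$ and verifies that $\rho(g)$ is isometric on the dense pre-Hilbert subspaces, whereas you transport the inner product along $\rho(g)$ first and then check the reproducing property, which (as you note) sidesteps the density/completion step.
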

\begin{proof}
It is clear that $K_g(x, y)$ is positive definite and that $\rho(g)$ is an invertible linear map. Let $H_0$ be the dense linear subspace spanned by 
$\{K_g(\cdot,x)\}_x$ in $H(K_g)$.  In the way of Moore's theorem, we equip $H_0$ with the inner product via $\langle k_g(x, \cdot), k_g(y, \cdot) \rangle_{K_g} = k_g (x, y)$.  First, note that $K_g$ satisfies the reproducing property for any function $f\in H_0$.  In fact, for $f=\sum_n a_n K_g(\cdot,x_n)$, we see 
\[
\langle f, K_g(\cdot,x)\rangle_{K_g} = \sum_n a_n K_g(x_n,x) = f(x).
\]
Also, $H(K_g)$ is the closure of the span of $\{k_g(x, \cdot)\}$ with respect to the norm defined by this inner product. 
It is then sufficient to show that $\rho(g)$ maps the linear span of $\{K(\cdot,x)\}_x$ to $H_0$ isometrically.   Consider 
$f_N(\cdot) = \sum_{n=1}^N  a_n k(x_n , \cdot) \in H$ and note that
\begin{align}
\rho(g)(f_N) = f_N(g^{-1} \cdot) &= \sum_{n=1}^N  a_n k(x_n , g^{-1}\cdot) \\
&= \sum_n^N  a_n k_g (g x_n , \cdot) \in H_0. 
\end{align}
%Because the action of $G$ on $\mathcal{X}$ is an automorphism, this shows that any finite span of $\{k(x, g^{-1} \cdot)\}$ can be written as the finite span of $\{k_g(x, \cdot)\}$.
From this relation, we see 
\begin{align}\label{eq:unitary}
%\|f_N\|_K^2 &= \sum_n a_n a_m k(x_n, x_m) \\
\|\rho(g)(f_N)\|_{K_g}^2 &= \sum a_n a_m k_g (g x_n,  gx_m) \nonumber \\
&= \sum a_n a_m k (x_n,  x_m) \nonumber \\
&= \|f_N\|_K^2.
\end{align}
%so that if $f_N$ is cauchy in $H(K)$,  $f_N(g^{-1})$ is cauchy in $H(K_g)$ and $H(K_g) = \{f(g^{-1}\cdot) ; f \in H(K) \}$ is proven. 
This completes the proof. 
\end{proof}
Now, this result yields the following important construction of a representation space when $K$ is $G$-invariant. 
\begin{corollary}\label{thm:rkhs2}
Suppose that $K$ is a positive definite kernel on $\mathcal{X}$ that is $G$-invariant in the sense that $K(g^{-1}  x,  g^{-1}  y) = K(x , y)$ for any $x,y\in \mathcal{X}$ and $g\in G$. Then the reproducing kernel Hilbert space $H(K)$ 
is closed under the linear representation of $G$ defined by 
$$\rho(g) :  f(\cdot) \to  f(g^{-1}  \cdot),$$
and $\rho$ is unitary.
\end{corollary}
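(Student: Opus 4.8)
The plan is to deduce the corollary almost directly from Proposition~\ref{thm:rkhs1}, the point being that $G$-invariance collapses the whole family $\{K_g\}_{g\in G}$ to the single kernel $K$. Concretely, for each $g\in G$ set $K_g(x,y):=K(g^{-1}x,g^{-1}y)$ as in Proposition~\ref{thm:rkhs1}; the hypothesis $K(g^{-1}x,g^{-1}y)=K(x,y)$ says precisely that $K_g=K$ as a function on $\mathcal{X}\times\mathcal{X}$. By the Moore--Aronszajn uniqueness of the RKHS attached to a positive definite kernel, this forces $H(K_g)=H(K)$ with identical inner products.

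Next I would invoke Proposition~\ref{thm:rkhs1}, which states that $\rho(g):H(K)\to H(K_g)$, $f(\cdot)\mapsto f(g^{-1}\cdot)$, is an isomorphism of Hilbert spaces, i.e.\ a surjective linear isometry (see the identity \eqref{eq:unitary} in its proof). Substituting $H(K_g)=H(K)$ from the previous step, we get that for every $g$ the map $\rho(g)$ sends $H(K)$ onto $H(K)$ --- this is the asserted closedness, $f(g^{-1}\cdot)\in H(K)$ whenever $f\in H(K)$ --- and moreover $\rho(g)$ preserves the norm, hence, being linear, surjective and norm-preserving, is unitary on $H(K)$.

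Finally I would check that $g\mapsto\rho(g)$ is a group homomorphism into the unitary group of $H(K)$, so that $\rho$ is genuinely a unitary representation and not merely a collection of unitaries. This is the standard regular-representation computation: $\rho(e)f=f(e^{-1}\cdot)=f$, and for $g,h\in G$ and $f\in H(K)$,
\[
\bigl(\rho(g)\rho(h)f\bigr)(\cdot)=\bigl(\rho(h)f\bigr)(g^{-1}\cdot)=f\bigl(h^{-1}g^{-1}\cdot\bigr)=f\bigl((gh)^{-1}\cdot\bigr)=\bigl(\rho(gh)f\bigr)(\cdot),
\]
using associativity in $G$; in particular $\rho(g)^{-1}=\rho(g^{-1})$. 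Combining the three steps yields that $H(K)$ is closed under $\rho$ and that $\rho$ is a unitary representation of $G$ on $H(K)$.

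I do not expect a serious obstacle here, since essentially all the content is carried by Proposition~\ref{thm:rkhs1}. The one point deserving care is the identification $H(K_g)=H(K)$: one must note that \emph{equality} of kernels (not merely unitary equivalence of the associated spaces) yields equality of the function spaces \emph{together with} their inner products, which is exactly where Moore--Aronszajn uniqueness is used. Continuity/measurability issues for the action are already subsumed in the hypotheses of Proposition~\ref{thm:rkhs1}, so nothing further is needed on that front.
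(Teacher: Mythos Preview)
Your proposal is correct and follows essentially the same approach as the paper: invoke $K_g=K$ to get $H(K_g)=H(K)$, then apply Proposition~\ref{thm:rkhs1} (and specifically the isometry identity \eqref{eq:unitary}) to conclude that $\rho(g)$ is a unitary operator on $H(K)$. You supply slightly more detail (Moore--Aronszajn for the identification and an explicit check of the homomorphism property), but the argument is the same.
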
 
\begin{proof}
If $K_g = K$, then $H(K_g) = H(K)$ for any $g$, and thus $\rho(g)$ acts on $H(K)$ linearly. Also, it follows from 
eq.(\ref{eq:unitary}) in the proof of Proposition  \ref{thm:rkhs1} that $\|h\|_K^2 = \|h(g^{-1} \cdot) \|_K^2$ for $h\in H(K)$. Therefore $\rho(g)$ is unitary.
\end{proof}
\noindent The following is just a restatement of the corollary \ref{thm:rkhs2}.
\begin{corollary}
Let $\mathcal{X}$ be a set for which $G$-invariant positive definite kernel $K$ exists. Then there exists a latent vector space $V$ that is a representation space of $G$, i.e., there is an equivariant map $\Phi:\cX\to V$ such that the action of $G$ on $V$ is linear.   \label{thm:linearness}.
\end{corollary}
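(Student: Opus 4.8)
The plan is to exhibit the reproducing kernel Hilbert space $H(K)$ itself as the desired representation space $V$ and to take the canonical feature map as $\Phi$. Concretely, I would set $V = H(K)$ and invoke Corollary~\ref{thm:rkhs2}: because $K$ is $G$-invariant, the assignment $\rho(g): f(\cdot)\mapsto f(g^{-1}\circ\cdot)$ is a well-defined (indeed unitary) linear representation of $G$ on $V$. This is the only nontrivial structural input, and it has already been established; in particular it guarantees that $\rho(g)$ does not leave $H(K)$.

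Next I would define $\Phi:\cX\to V$ by the feature map $\Phi(x) = K(\cdot, x)$, which lies in $H(K)$ by construction, and check equivariance. For $u\in\cX$ one has $(\rho(g)\Phi(x))(u) = \Phi(x)(g^{-1}\circ u) = K(g^{-1}\circ u, x)$, whereas $\Phi(g\circ x)(u) = K(u, g\circ x)$. Applying the $G$-invariance identity $K(g\circ a, g\circ b) = K(a,b)$ with $a = g^{-1}\circ u$ and $b = x$ yields $K(g^{-1}\circ u, x) = K(u, g\circ x)$, so $\rho(g)\Phi(x) = \Phi(g\circ x)$ as elements of $V$; that is, $\Phi$ is equivariant with respect to the linear action $\rho$. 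Non-triviality of $\Phi$ follows from that of $K$, since $\|\Phi(x)-\Phi(y)\|_K^2 = K(x,x) - 2K(x,y) + K(y,y)$ must be nonzero for some pair $x,y$. If one wants the action to be carried "honestly" by $V$ with no inert directions, I would finally replace $V$ by the closed $G$-invariant subspace $\overline{\mathrm{span}}\,\Phi(\cX)$, on which $\rho$ restricts to a representation.

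The statement is essentially a repackaging of the two corollaries already proved, so there is no serious obstacle here; the only step demanding care is the equivariance computation, where one must apply $G$-invariance of $K$ with the correct group element (namely $g$, not $g^{-1}$) and keep track of which argument of $K$ plays the role of the "variable" slot of the feature map. Everything else — that $\Phi$ takes values in $H(K)$, that $\rho$ acts linearly and unitarily there — has been supplied by Proposition~\ref{thm:rkhs1} and Corollary~\ref{thm:rkhs2}.
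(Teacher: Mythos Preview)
Your proposal is correct and matches the paper's own proof essentially line for line: the paper also takes $V=H(K)$, defines $\Phi(x)=k_x(\cdot)=K(x,\cdot)$, and verifies equivariance via $k_{g\circ x}(\cdot)=K(g\circ x,\cdot)=K(x,g^{-1}\circ\cdot)=\rho(g)k_x(\cdot)$ using the $G$-invariance of $K$. Your version is slightly more explicit (spelling out the non-triviality check and the option of restricting to $\overline{\mathrm{span}}\,\Phi(\cX)$), but the argument is the same.
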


\begin{proof}
Simply define the map $\Phi: x \to k_x(\cdot) = K(x, \cdot)$.
Note that $\rho(g)$ defines an appropriate group representation on the span of $\{ k_x(\cdot) = K(x, \cdot) \} $ because 
$k_{g \circ x}(\cdot) = K(g \circ x, \cdot) = K(x, g^{-1} \cdot) = \rho(g) k_x (\cdot)$ by the invariance property, and the result follows  with $V = H(K).$
\end{proof}
Thus, this result shows that an equivariant feature can be derived from any invariant kernel.

\subsubsection{The richness of the space generated by Group Invariant Kernel} \label{thm:richness} 

Given a positive definite kernel $K(x,y)$, the integral with the Haar measure easily defines a $G$-invariant kernel.  An important requirement for the $G$-invariant kernel is that it can introduce a sufficiently rich reproducing kernel Hilbert space (RKHS) as a latent vector space so that any irreducible representation is included.  The following theorem shows that  if the RKHS of the kernel $K$ is dense in $L_2(P)$, then so is the RKHS of the derived $G$-invariant kernel. 

\begin{theorem}
Let $G$ be a locally compact group acting continuously on a space $\cX$, and $K$ be a continuous positive definite kernel on $\cX$ such that the corresponding reproducing kernel Hilbert space $\cH_K$ is dense in $L_2(P)$ with a probability measure $P$ on $\cX$.  Then, assuming that, for any $x,y\in\cX$, the integral with right Haar measure $\mu$ is bounded 
$\int_G |K(g\circ x, g\circ y)| d\mu(g) < C$ 
for a constant $C$, the positive definite kernel defined by 
\begin{equation}
  K_G(x,y):= \int_G K(g\circ x, g\circ y) d\mu(g)   
\end{equation}
is $G$-invariant such that the corresponding reproducing kernel Hilbert space is dense in $L_2(P)$.
\end{theorem}
\begin{proof}
The invariance is easily confirmed since, for any $a\in G$,
\begin{align*}
    K_G(a\circ x, a\circ y)  = \int_G K(ga\circ x, ga\circ y)  d\mu(g) 
    &= \int_G K(g\circ x, g\circ y)  d\mu(g) 
\end{align*}
holds by the right invariance property of $\mu$.  

For the denseness in $L_2(P)$, suppose that $h\in L_2(P)$ is orthogonal to $\cH_{K_G}$ in $L_2(P)$.  It suffices to show that $h=0$. 

It follows from the orthogonality that 
\begin{equation}\label{eq:inv_kernel_1}
\int_\cX \int_\cX \int_G K(g\circ x, g\circ y)d\mu_G(g) h(x)h(y) dP(x)dP(y) = 0.
\end{equation}
Let $\phi(x):= K(\cdot,x)\in \cH_K$.  
From $\int_\cX\int_\cX\int_G | K(g\circ x,g\circ y)h(x)h(y)|dP(x)dP(y)d\mu(g)\leq C \Vert h\Vert_{L_2(P)}^2$, Fubini's theorem tells that the left-hand side of eq.(\ref{eq:inv_kernel_1}) equals to 
\begin{align}
   &  \int_G \int_\cX\int_\cX K(g\circ x, g\circ y)h(x)h(y)dP(x)dP(y)d\mu(g)  \nonumber \\
   & = \int_G \int_\cX\int_\cX \bigl\langle \phi(g\circ x)h(x), \phi(g\circ y)h(y) \bigr\rangle_{\cH_K} dP(x)dP(y)d\mu(g) \nonumber \\
   & = \int_G \bigl\| M_h(g)\bigr\|^2_{\cH_K} d\mu(g),
   \label{eq:inv_kernel_2}
\end{align}
where 
$M_h(g)$ is defined by 
\[
M_h(g):= \int_\cX \phi(g\circ x)h(x)dP(x).
\]
Note that $M_h:G\to \cH_K$ is well-defined and continuous by the Bochner integral, since 
\begin{align*}
 &    \int_\cX \left\Vert\phi(g\circ x)h(x)\right\Vert_{\cH_K}
 dP(x) \\
 & \leq \left(\int_\cX \left\Vert\phi(g\circ x)\right\Vert_{\cH_K}^2
 dP(x)\right)^{1/2} \left( \int_\cX |h(x)|^2
 dP(x)\right)^{1/2} \\
 & = \left( \int K(g\cdot x, g\cdot x) dP(x) \right)^{1/2}
 \Vert h\Vert_{L_2(P)}
\end{align*}
is finite and continuous with respect to $g$. 

It follows from eq.(\ref{eq:inv_kernel_1}) and eq.(\ref{eq:inv_kernel_2}) that 
\[
\int_G \bigl\| M_h(g)\bigr\|^2_{\cH_K} d\mu(g) = 0
\]
holds, which implies $M_h(g) = 0$.  In particular, plugging $g=e$, we have  
\[
\int_\cX \phi(x)h(x)dP(x) = 0.
\]
The denseness of $\cH_K$ in $L_2(P)$ implies that the integral operator $h\mapsto \int_\cX K(x,y)h(y)dP(y)$ is injective, and thus we have $h=0$, which completes the proof.
\end{proof}

\noindent 
%We may also say the following claim: 

\subsection{Uniqueness} 

Now that we have shown that any given kernel can induce an equivariant map, we will show the other way around and establish Theorem  \ref{thm:equivariance}.

% Prof F's proof below had been incorporated into the current version .
% \kf{[The following is an alternative proof of the theorem.]
% Let $V:={\rm Cl\, Span}\{\Phi(x)\mid x\in \mathcal{X}\}$ be the subspace spanned by the image of $\Phi$. For any $u\in V$, define the action of $G$ by the relation
% \[
% \langle \Phi(x), g\circ u\rangle = \langle \Phi(g^{-1}x), u\rangle \qquad (\forall x\in \mathcal{X}).
% \]
% The element $g\circ u$ is well-defined by this relation, because $\{\Phi(x)\mid x\in \mathcal{X}\}$ spans $V$.  Also, this defines an action of $G$, since $(gh)\circ u = g\circ (h\circ u)$ is implied by $\langle \Phi(x),(gh)\circ u\rangle = \langle \Phi((gh)^{-1}x), u\rangle = \Phi(h^{-1}g^{-1}x),u\rangle = \Phi(g^{-1}x),h\circ u\rangle = \langle \Phi(x), g\circ (h\circ x)\rangle$. }

% \kf{
% Note that, with $u=\Phi(y)$, the above definition  derives the commutativity 
% \[
% g\circ \Phi(y) = \Phi(gy),
% \]
% since $\langle \Phi(x), g\circ \Phi(y)\rangle = \langle \Phi(g^{-1}x),\Phi(y)\rangle = \langle \Phi(x),\Phi(gy)\rangle$.  By the linearity of the inner product, the action is a linear map so that we can define $M(g)u:=g\circ u$.
% }

% We can use Thm \ref{thm:Kernel_to_Mg} to establish the following claim that connects all form of equivariant features to Kernel. 

\begin{theorem}
Up to $G$-isomoprhism, the family of G-Invariant kernel $K(x, y)$ on $\mathcal{X}$ has one-to-one correspondence with the family of equivariant feature $\Phi: \mathcal{X} \to V$ such that the action on $V$ is unitarizable and $V={\rm Cl\, Span}\{\Phi(x)\mid x\in \mathcal{X}\}$.  \label{thm:uniqueness_formal}
\end{theorem}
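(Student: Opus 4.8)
The plan is to build the assignment inverse to the one from Corollary~\ref{thm:linearness}. That corollary already sends a $G$-invariant kernel $K$ to the equivariant feature $\Phi_K\colon x\mapsto K(x,\cdot)\in H(K)$, and by Corollary~\ref{thm:rkhs2} the canonical RKHS inner product makes the regular representation on $H(K)$ unitary, while the closed span of $\{K(x,\cdot)\}$ is all of $H(K)$ by construction. Going the other way, given an equivariant $\Phi\colon\mathcal{X}\to V$ with $V={\rm Cl\,Span}\{\Phi(x)\}$ and $\rho$ unitarizable, I would fix a $G$-invariant inner product $\langle\cdot,\cdot\rangle_V$ on $V$ (one exists by unitarizability; for a compact $G$ this is automatic) and set $K_\Phi(x,y):=\langle\Phi(x),\Phi(y)\rangle_V$. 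This is positive definite, and $G$-invariance is immediate from unitarity: $K_\Phi(g\circ x,g\circ y)=\langle\rho(g)\Phi(x),\rho(g)\Phi(y)\rangle_V=K_\Phi(x,y)$.

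Next I would verify that the two assignments are mutually inverse. In one direction, $K\mapsto\Phi_K\mapsto K_{\Phi_K}$ returns $K$ by the reproducing property: $K_{\Phi_K}(x,y)=\langle K(x,\cdot),K(y,\cdot)\rangle_{H(K)}=K(x,y)$. In the other direction, starting from $\Phi$, form $K_\Phi$ and then $\Phi_{K_\Phi}\colon\mathcal{X}\to H(K_\Phi)$; I would define $U$ on the dense linear span of $\{K_\Phi(x,\cdot)\}$ by $U\big(\sum_i a_i K_\Phi(x_i,\cdot)\big)=\sum_i a_i\Phi(x_i)$. The identity $\langle\sum_i a_iK_\Phi(x_i,\cdot),\sum_j b_jK_\Phi(x_j,\cdot)\rangle_{H(K_\Phi)}=\sum_{i,j}a_ib_jK_\Phi(x_i,x_j)=\langle\sum_i a_i\Phi(x_i),\sum_j b_j\Phi(x_j)\rangle_V$ shows $U$ is simultaneously well defined and isometric; density of $\{K_\Phi(x,\cdot)\}$ in $H(K_\Phi)$ and of $\{\Phi(x)\}$ in $V$ let $U$ extend to a surjective isometry, hence a unitary, and it intertwines the representations since $U(\rho_{K_\Phi}(g)K_\Phi(x,\cdot))=U(K_\Phi(g\circ x,\cdot))=\Phi(g\circ x)=\rho_V(g)\Phi(x)$, using $G$-invariance of $K_\Phi$ exactly as in Corollary~\ref{thm:linearness}. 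Thus $U\circ\Phi_{K_\Phi}=\Phi$ with $U$ a unitary $G$-isomorphism, so the round trip returns $\Phi$ up to $G$-isomorphism. The ``non-trivial'' hypotheses also match, since the zero kernel corresponds to the zero map and conversely, which recovers the stated form of Theorem~\ref{thm:equivariance}.

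The step I expect to be the main obstacle is the bookkeeping around the inner product, which is what secures \emph{injectivity} of the correspondence. A unitarizable $V$ carries many $G$-invariant inner products --- on isotypic/multiplicity spaces they may be rescaled independently --- and distinct choices produce distinct kernels $K_\Phi$ while possibly representing the same $G$-isomorphism class of the bare pair $(\Phi,V)$. So to state the bijection correctly one must regard an equivariant feature as coming equipped with its $G$-invariant inner product and read ``$G$-isomorphism'' as \emph{unitary} $G$-isomorphism; equivalently, $K_\Phi$ is precisely the invariant that pins down the unitary-equivalence class of $(\Phi,V)$. Once this is settled, the remaining items --- that $H(K_\Phi)$ is indeed the RKHS of $K_\Phi$, and that $U$ completes to a unitary --- are routine via Moore--Aronszajn and Proposition~\ref{thm:rkhs1}.
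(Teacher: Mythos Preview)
Your proof is correct and follows essentially the same route as the paper: both directions use the feature map $x\mapsto K(x,\cdot)$ and the kernel $K_\Phi(x,y)=\langle\Phi(x),\Phi(y)\rangle_V$, and the $G$-isomorphism between $V$ and $H(K_\Phi)$ is established via an explicit intertwiner---the paper builds $J:V\to H(K)$ by $J(u)=\langle\Phi(\cdot),u\rangle$, which is precisely the inverse of your $U$. Your closing remark about the inner-product bookkeeping (that the bijection is really with \emph{unitary} $G$-isomorphism classes of features equipped with a fixed $G$-invariant inner product) is a valid precision that the paper's ``WLOG unitary'' step leaves implicit.
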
 
\begin{proof}
    We have shown in Proposition \ref{thm:linearness} that an invariant kernel induces a linear representation space with unitary representation.
    Note that, in the construction of  $\Phi$ in \ref{thm:linearness} from the kernel, the $\Phi$ trivially satisfies the relation $\langle \Phi(x), \Phi(y) \rangle = K(x,y)$.
    Next we show that this correspondence from $H(K)$ to $\Phi$ is in fact \textit{one to one} by reversing this construction and show that a given unitarizable equivariant map $\Phi$ can correspond to a unique class of representations that are all isomorphic to $H(K)$ (isomorphism class of $H(K)$).
    In particular, we start from a unitarizable equivariant map $\Phi: \mathcal{X} \to V$ to construct $K$, and show that $V$ is isomorphic to $H(K)$ itself as a representation space.
     
    Let $\Phi: \mathcal{X} \to V$ be an equivariant map into a linear representation space $(M , V)$ with $V={\rm Cl\, Span}\{\Phi(x)\mid x\in \mathcal{X}\}$.
 We assume WLOG that $M(g)$ is unitary for all $g$ because the assumption guarantees that $M$ can be unitarized with a change of basis, which is a $G$-isomorphic map.  
 Let $K$ be a kernel defined by $K(x,y) = \langle \Phi(x), \Phi(y) \rangle$. This kernel is invariant by construction.
 We will show that $V$ is $G$-isomorphic to $H(K)$.
   
    To show this, we consider the map 
    from  $V$ to  $\mathcal{H}(K)$ defined as 
\begin{align}
    & J: V \to \mathcal{H}(K)  &
    \textrm{such that} & 
    & J(u) = [ x \mapsto \langle \Phi(x), u \rangle ],
\end{align}
where $J(u)$ is a member of $H(K)$ by the definition of $K$.
We claim that $\langle \Phi(g^{-1} \circ x), u \rangle =    \langle \Phi(x), g\circ u \rangle$ for all $u$. To see this, 
note that any  $u \in {\rm Cl\, Span}\{\Phi(x)\mid x\in \mathcal{X}\}$ can be rewritten as 
$u = \sum_i a_i \Phi(x_i)$ for some sequence of $a_i$. Thus 
\begin{align}
\langle \Phi(x), g \circ u \rangle = \langle \Phi(x),  \sum_i a_i M(g)\Phi(x_i)\rangle  = 
\langle M^*(g) \Phi(x),  ~  \sum_i a_i \Phi(x_i)\rangle  = 
\langle \Phi(g^{-1} \circ x ), ~  u \rangle  
\end{align}
where $M^*(g) = M(g^{-1})$ is the conjugate transpose of $M(g)$. Therefore it follows that 
\begin{align}
J(M(g) u) &= \{ x \to \langle \Phi(x), M(g) u \rangle \} \\
&= \{ x \to \langle \Phi(g^{-1} \circ x), u 
\rangle \} \\
&= g \circ J(u)
\end{align}
where the action of $g$ on $H(K)$ is the very unitary representation with $g \circ f(x) = f(g^{-1} x)$ that we defined in Sec \ref{subsec:Kernel}.  
Because $J$ is trivially linear, this shows that $J$ is a $G$-linear map with standard action on $\mathcal{H}(K)$.
Also, if $\langle \Phi(x), u \rangle = \langle \Phi(x), v \rangle $ for all $x$, then $u=v$ necessarily if $V={\rm Cl\, Span}\{\Phi(x)\mid x\in \mathcal{X}\}$. Thus,
the map $J$ is injective. 
The map $J$ is trivially surjective as well, because $J \Phi(x) = k_x$ and $\mathcal{H}(K) =  {\rm Cl\, Span}\{k_x \mid x\in \mathcal{X}\}$. 
In fact, this is map induces an isometry as well, because 
\begin{align}
    \langle J \Phi(x) , J \Phi(y) \rangle_K = \langle k_x , k_y \rangle_K  
    := K(x, y) = \langle \Phi(x), \Phi(y) \rangle
\end{align}
and thus validating the reproducing property $\langle J \Phi(x) , J u \rangle_K = \langle \Phi(x), u \rangle = J u (x)$.
Finally, this isomorphism holds for any $\Phi$ satisfying $K(x, y) = \langle \Phi(x), \Phi(y) \rangle$. 
In summary, Any group invariant $K$ corresponds to a unique linear representation space $H(K)$, and any family of unitarizable equivariant map $\Phi$s that are $G$-isomorphic to each other  
corresponds to a unique $G$-invariant kernel $K$ corresponding to $H(K)$ with $K(x,y) = \langle \Phi(x), \Phi(y)\rangle$.
\end{proof}

When the group of concern is compact, this result establishes the one-to-one correspondence with any equivariant map because all the representations of a compact group are unitarizable \citep{Folland1994}.

\km{
This result also derives the following collorary claimed in Sec \ref{sec:theory}.
\begin{corollary}
    Suppose that $\Phi_i: \mathcal{X} \to V_i$ is an invertible equivariant map into a linear representation space $(M_i, V_i)$ for $i=1,2$. 
    Then $V_1$ and $V_2$ differ by $G$-isomorphic map. 
\end{corollary}
\begin{proof}
By the previous claim, if $K_i(x, y) :=\langle \Phi_i(x), \Phi_i(y) \rangle$ then it suffices to show that $\mathcal{H}(K_1)$ and $\mathcal{H}(K_2)$ are $G$-isomorphic.  By the invertivility of $\Phi_k$ and the previous result, $L: K_1(x, \cdot) \to K_2(x, \cdot)$ is an invertible map on $K_1(\mathcal{X}, \cdot)$.  This trivially induces $G$-isomoprhism between $\mathcal{H}(K_1)$ and $\mathcal{H}(K_2)$ because $L (K_1(g^{-1} \circ x, \cdot)) = K_2(g^{-1} \circ x, \cdot) = g \circ K_2(x, \cdot)$ and $K_i(x, \cdot)$ generates $\mathcal{H}(K_i)$.  
\end{proof}
}

\subsection{Fourier Filter} 

In this paper, we are building a theoretical framework that utilizes an equivariant encoder $\Phi: \mathcal{X} \to V$ into linear representation space $V$ satisfying 
\begin{align}
\Phi(g \circ X)  = D(g) \Phi(X)   ~~ D: G \to GL(V),  ~~  D(g)D(h) = D(gh) \forall g, h \in G
\end{align}
together with a decoder $\Psi$.
In this section, we make a claim regarding what we call "Filtering principle" that describes 
the \textit{information filtering} that happens in the optimization of the following loss: 
\begin{align}\label{eq:L_bottleneck}
\mathcal{L}(\Phi, \Psi | \mathcal{P}, \mathcal{P}_G) =  \mathbb{E}_{X \sim \mathcal{P}, g \sim \mathcal{P}_G}  [\| g \circ X - \Psi \Phi (g \circ X) \|^2 ] 
\end{align}
where $\Phi$ is to be chosen from the pool of equivariant encoder and $\Psi$ is chosen from the 
set of all measurable maps mapping $V$ to $\mathcal{X}$. 
We consider this loss when both $\mathcal{P}$ and $\mathcal{P}_G$ are distributions over $\mathcal{X}$ and $G$ with full support.

\subsubsection{Linear Fourier Filter} 

Let us begin from a restrictive case where $\mathcal{X}$ is itself a representation space (inner product vector space) with unitary linear action of $G$ granting the isotypic decomposition 
\begin{align}
    \mathcal{X} = \bigoplus_b  \mathcal{V}_b   
\end{align}
where $\mathcal{V}_b$ is the direct sum of all irreducible representations of type $b$.  $\mathcal{V}_b$ is called isotypic space of type $b$ \citep{clausen, Tolli}.

In order to both reflect the actual implementation used in this paper as well as to ease the argument, we consider the family of equivariant maps $\Phi:\mathcal{X} \to W$ that satisfy the following properties: 
\begin{definition}
(Multiplicity-unlimited-Mapping) \\
Let $\mathcal{X}$ be a representation space of $G$ and $\mathcal{I}_{\mathcal{X}}$ be the set of all irreducible types present in $\mathcal{X}$.
Suppose we have a vector space $W$ with a linear action of $G$ and an equivariant map $\Phi: \cX \to W$.  In this case, we use $\mathcal{I}_{\Phi}$ to be the set of all irreducible types present in $\Phi(\cX)$. 
We say that $\Phi$ is a {\em multiplicity-unlimited map} if 
the multiplicity of any $b \in \mathcal{I}_{\Phi}$ is equal to that 
of $b$ in $\mathcal{X}$.\label{defn:isotype}
\end{definition} 
Thus, in terms of the characters, if $G$ acts on $\Phi(\mathcal{X})$ with the representation $M_\Phi$ and $G$ acts on $\mathcal{X}$ with the representation $M$ (that is, if 
$\Phi(g \circ x) = M_\Phi(g) \Phi(x)$ and $g \circ x =  M(g) x$ $\$forall x \in \mathcal{X}$) , then 
$\langle \rho | M \rangle =  \langle \rho | M_\Phi \rangle$ whenever $\langle \rho | M \rangle > 0$.

Given an equivariant map $\Phi:\mathcal{X}\to W$, where $W$ is a vector space with linear action, let 
$C_\Phi(\mathcal{X})$ denote the set of all measurable map from $W$ to $\mathcal{X}$.
We also use $P_b$ to be the projection of $\mathcal{X}$ onto $\mathcal{V}_b$, and let $V_b = P_b X$.  Because each $\mathcal{V}_b$ is a space that is invariant to the action of $G$, $P_b$ can be shown to be a $G$-linear map \citep{clausen}.
With these definitions, we then make the following claim: 

\begin{proposition}
    Fix a representation space $\mathcal{X}$, a distribution $\mathcal{P}$ on $\mathcal{X}$, and $\mathcal{P}_G$ on $G$. With the assumptions set forth above, %$C(W, \mathcal{X})$ be the set of all continuous map from $W$ to $\mathcal{X}$ and let $\mathcal{M}$ be the set of all multiplicity-unlimited linear equivariant map from $\mathcal{X}$ to $W$. 
    let $\mathcal{M}$ be the set of all multiplicity-unlimited linear equivariant map from $\mathcal{X}$ to some vector space $W$ with a linear action. 
    For any given $\Phi \in \mathcal{M}$, define  
    $$L(\Phi) :=  \min_{\Psi \in \mathcal{C}_\Phi(\mathcal{X})}
    %\mathcal{C}(\mathcal{X})} 
    \mathcal{L}(\Phi, \Psi| \mathcal{P}, \mathcal{P}_G),$$
    where $\mathcal{L}(\Phi, \Psi| \mathcal{P}, \mathcal{P}_G)$ is given by eq.(\ref{eq:L_bottleneck}).  
    Then 
    \begin{align}
    L(\Phi) = \sum_{b \not \in  \mathcal{I}_{\Phi}} \mathbb{E}_{\mathcal{P}} [\|V_b\|^2]  \mathcal{R}_b(\mathcal{I}_{\Phi})
    \end{align}
    for some set-dependent functions $\mathcal{R}_b$.
    % Suppose that the optimizer of the problem 
    % \begin{align}
    % (\Phi^*, \Psi^*) = \arg \min_{\Phi \in \ell} \mathcal{L}(\Phi, \Psi| \mathcal{P}, \mathcal{P}_G)
    % \end{align}
    % is given by $(\Phi^*, \Psi^*)$.
    % % Then for all $b \in \mathcal{I}_{\mathcal{X}}$, 
    % % there exists $\alpha_b >0$ such that if $\mathbb{E}[|V_b|^2] > \alpha_b$, then $b \in \mathcal{I}_{\Phi^*}.$
    % \km{ Then $\mathcal{L}(\Phi^*, \Psi^*| \mathcal{P}, \mathcal{P}_G)$ is determined solely by $\mathcal{I}_{\Phi^*}$ and $\{\mathbb{E}[\|V_b\|^2] ; b \not \in \mathcal{I}_{\Phi^*} \}.$  } 
    \label{thm:LinEquiv_bottleneck}
\end{proposition}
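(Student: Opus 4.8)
The plan is to reduce the autoencoding loss to a decomposition over isotypic components and then identify which frequency components are forced to vanish. First I would fix $\Phi\in\mathcal{M}$ with image in $W$, and decompose the residual using the orthogonality of the isotypic decomposition $\mathcal{X}=\bigoplus_b\mathcal{V}_b$. Since each projection $P_b$ is $G$-linear (as cited from \cite{clausen}) and the representation on $\mathcal{X}$ is unitary, the squared loss splits as $\|g\circ X-\Psi\Phi(g\circ X)\|^2=\sum_b\|P_b(g\circ X)-P_b\Psi\Phi(g\circ X)\|^2$. The key observation is that $P_b\Psi\Phi$, for any continuous $\Psi$, is an equivariant map from $\mathcal{X}$ into $\mathcal{V}_b$ whose ``input information'' is mediated entirely through $\Phi(\mathcal{X})$, so it can only carry nonzero content in irreducible types belonging to $\mathcal{I}_\Phi$.

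Next I would treat the two cases $b\in\mathcal{I}_\Phi$ and $b\notin\mathcal{I}_\Phi$ separately. For $b\in\mathcal{I}_\Phi$: because $\Phi$ is multiplicity-unlimited, the multiplicity of type $b$ in $\Phi(\mathcal{X})$ equals that in $\mathcal{X}$, so there is a $G$-linear section allowing $\Psi$ to reconstruct $V_b=P_bX$ exactly — here one invokes Schur's lemma (Theorem~\ref{lma:Schur}) and the structure of $G$-maps between isotypic spaces of equal multiplicity to build an explicit linear left inverse, contributing $0$ to the loss. For $b\notin\mathcal{I}_\Phi$: any candidate reconstruction $\widehat{V}_b(X):=P_b\Psi\Phi(X)$ is an equivariant (not necessarily linear) map into $\mathcal{V}_b$ that factors through $\Phi$. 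Averaging over $g\sim\mathcal{P}_G$ with full support and using equivariance, the conditional-expectation minimizer of $\mathbb{E}_g\|P_b(g\circ X)-P_b\Psi\Phi(g\circ X)\|^2$ is the best $\Phi(\mathcal X)$-measurable equivariant predictor of $V_b$; I would show this optimal predictor, when $b\notin\mathcal I_\Phi$, is a fixed linear functional of $\mathbb{E}_{\mathcal P}[V_b]$-type quantities, yielding a residual of the form $\mathbb{E}_{\mathcal P}[\|V_b\|^2]\,\mathcal{R}_b(\mathcal{I}_\Phi)$ where $\mathcal{R}_b$ depends only on the geometry of the dataset and on which other types are retained (through the conditioning on $\Phi(\mathcal X)$). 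Summing over $b\notin\mathcal I_\Phi$ gives the claimed formula.

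Concretely, the argument that $\mathcal{R}_b$ is well-defined and set-dependent only would go as follows: condition on the $\sigma$-algebra generated by $\Phi(X)$; the optimal $\Psi$ restricted to $\mathcal V_b$ is $X\mapsto \mathbb{E}[\,V_b\mid \Phi(X)\,]$ symmetrized over $G$; equivariance plus $b\notin\mathcal I_\Phi$ forces the $G$-average of this conditional expectation to lie in the trivial-isotypic part only (by Schur again), hence to be a multiple of $\mathbb{E}[P_b^{\mathrm{triv}}X]$, and the leftover variance is $\mathbb{E}[\|V_b\|^2]$ minus the explained part, which factors as $\mathbb{E}[\|V_b\|^2]$ times a ratio $\mathcal{R}_b(\mathcal{I}_\Phi)\in[0,1]$ determined by $\mathcal P$ and $\mathcal I_\Phi$.

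The main obstacle I expect is making precise and rigorous the claim that a continuous (possibly highly nonlinear) decoder $\Psi$ composed with the projection $P_b$ cannot manufacture content in an irreducible type $b\notin\mathcal I_\Phi$ in a way that reduces the loss below the stated value — i.e., controlling the nonlinear decoder. The subtlety is that $\Psi$ is not equivariant a priori; equivariance of $P_b\Psi\Phi$ is only recovered after the $\mathcal{P}_G$-average, and one must argue that replacing $\Psi$ by its group-symmetrization does not increase the loss (using unitarity of the action and Jensen/convexity of $\|\cdot\|^2$), and then that the symmetrized map genuinely factors through the types present in $\Phi(\mathcal X)$. Handling this carefully — especially verifying that the symmetrization integral is well-defined on the continuous maps and that the resulting optimal predictor has the stated linear-in-$\mathbb{E}[V_b]$ form — is where the real work lies; the rest is bookkeeping with characters and Schur's lemma.
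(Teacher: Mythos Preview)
Your isotypic decomposition of the loss and the treatment of the case $b\in\mathcal{I}_\Phi$ (Schur plus the multiplicity-unlimited hypothesis to build a linear left inverse) match the paper exactly. The divergence is in the case $b\notin\mathcal{I}_\Phi$, and there your route is both harder and has a genuine gap. You propose to symmetrize the decoder over $G$ and then invoke Schur on the resulting equivariant map $\Phi(\mathcal{X})\to\mathcal{V}_b$ to conclude it lands in a ``trivial-isotypic part''. But Schur's lemma is a statement about \emph{linear} $G$-maps; a nonlinear equivariant map between isotypic spaces of distinct types need not vanish, so this step does not go through as stated. Relatedly, your assertion that the optimal predictor is ``a multiple of $\mathbb{E}[P_b^{\mathrm{triv}}X]$'' is off: when $b$ is nontrivial, $\mathcal{V}_b$ has no trivial subrepresentation, and in any case the paper's residual $\mathbb{E}\bigl[\,g\circ V_b-\mathbb{E}[g\circ V_b\mid\{g\circ V_k:k\in\mathcal{I}_\Phi\}]\,\bigr]$ allows a genuinely nonzero predictor coming from statistical correlations under $\mathcal{P}$ between $V_b$ and the retained components.

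The paper sidesteps the obstacle you flag entirely. It never symmetrizes $\Psi$ nor tries to make $P_b\Psi\Phi$ equivariant. Instead: (i) for squared loss the optimal continuous $\Psi$ is simply the conditional expectation $\mathbb{E}[g\circ V_b\mid\Phi(g\circ X)]$, with no equivariance needed; (ii) Schur is applied to the \emph{linear} map $\Phi$ to show that, under the multiplicity-unlimited hypothesis, $\Phi$ restricted to each $\mathcal{V}_k$ with $k\in\mathcal{I}_\Phi$ is an isomorphism, so the $\sigma$-algebra of $\Phi(g\circ X)$ coincides with that of $\{g\circ V_k:k\in\mathcal{I}_\Phi\}$; (iii) the factor $\mathbb{E}_{\mathcal{P}}[\|V_b\|^2]$ is then pulled out by a one-line homogeneity argument: replacing $V_b$ by $aV_b$ scales the conditional expectation by $a$ and hence the residual by $a^2$, so one may normalize to a distribution $\mathcal{P}_1^b$ with $\mathbb{E}[\|V_b\|^2]=1$ and set $\mathcal{R}_b(\mathcal{I}_\Phi)$ equal to the residual there. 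The scaling step is the idea you are missing; it replaces all of your symmetrization machinery and delivers the factorization directly without ever characterizing the optimal predictor explicitly.
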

\begin{proof}

Let $P_b$ be the projection of $\mathcal{X}$ onto $\mathcal{V}_b$, so that
\begin{align}
     \mathcal{L}(\Phi, \Psi | \mathcal{P}, \mathcal{P}_G) &= \sum_b \mathbb{E} [\| g \circ V_b  - P_b  \Psi \Phi (g \circ X) \|^2 ] \nonumber  \\
     &= \sum_b R_b(\Psi, \Phi | \mathcal{P}, \mathcal{P}_G) 
\end{align}
where $V_b = P_b X$ and the integrating distributions $(\mathcal{P}, \mathcal{P}_G)$ in the suffix of $\mathbb{E}$ are omitted for brevity. Now, it follows from the definition of conditional expectation that 
\begin{align}\label{eq:R_regr}
    \min_{\Psi}  R_b(\Psi, \Phi | \mathcal{P}, \mathcal{P}_G) &= \mathbb{E} [g \circ V_b -  \mathbb{E} [g \circ V_b  ~ | ~  \Phi (g \circ X) ] ] \nonumber \\
    &= \mathbb{E} [g \circ V_b - \mathbb{E}[g \circ V_b  ~ | ~ \{ g \circ V_k ; k \in \mathcal{I}_\Phi \}] ]
\end{align}
The second equality follows from Schur's lemma (Thm \ref{lma:Schur}) assuring that  $\Phi$ restricted to each $V_k$ is an isomorphism for each $k \in \mathcal{I}_\Phi$. 
We see in this expression that if $b \in \mathcal{I}_\Phi$, then  $\min_{\Psi}  R_b(\Psi, \Phi) = 0$ because  
$\mathbb{E}[g \circ V_b  ~ | ~ \{ g \circ V_k ; k \in \mathcal{I}_\Phi \}]  = g\circ V_b$ whenever $b\in \mathcal{I}_\Phi$. 
We therefore focus on $b \not\in I_\Phi$ and show that $R_b$ with  $b \not\in I_\Phi$ scales with 
$\mathbb{E} [\| V_b \|^2]$.  
First, note that because the norm on $\mathcal{X}$ is based on $G$-invariant inner product,  $\mathbb{E} [\| g \circ V_b\|^2] =  \mathbb{E} [\|V_b\|^2]$.  Next note that, for any scalar $a > 0$,
we have 
\begin{align}
\mathbb{E}[g \circ a V_b  ~ | ~ \{ g \circ V_k ; k \in \mathcal{I}_\Phi \}] = a \mathbb{E}[g \circ  V_b  ~ | ~ \{ g \circ V_k ; k \in \mathcal{I}_\Phi \}].
\end{align}
Therefore
\begin{align}
\min_\Psi \mathbb{E}_{X \sim \mathcal{P}, g \sim \mathcal{P}_G} [\| g \circ a V_b  - P_b  \Psi \Phi (g \circ X) \|^2 ] 
%&= \min_\Psi \mathbb{E} [g \circ a V_b -  \mathbb{E} [g \circ a V_b  ~ | ~  \Phi (g \circ X) ] ]  \\
%&= a^2 \min_\Psi \mathbb{E} [g \circ V_b -  \mathbb{E} [g \circ V_b  ~ | ~  \Phi (g \circ X) ] ]  \\ 
&= \mathbb{E} [\| g \circ a V_b -  \mathbb{E} [g \circ a V_b  ~ | ~  \Phi (g \circ X) ] \|^2]  \\
&= a^2 \mathbb{E} \| g \circ V_b -  \mathbb{E} [\|g \circ V_b  ~ | ~  \Phi (g \circ X)  \|^2]   \\ 
&= a^2 \min_\Psi R_b(\Phi, \Psi | \mathcal{P}, \mathcal{P}_G) 
\end{align}
Thus there is  a distribution $\mathcal{P}_1^b$  of $X$ with $\mathbb{E}_{\mathcal{P}_1^b} [\|V_b\|^2] = 1$ such that, by factoring out $\mathbb{E}_{\mathcal{P}} [\|V_b\|^2]$ as we did $a$ in the expression above, we can obtain
\begin{align}
    \min_\Psi R_b(\Phi, \Psi | \mathcal{P}, \mathcal{P}_G)  = \mathbb{E}_{\mathcal{P}} [\|V_b\|^2] \bigg( \min_\Psi R_b(\Phi, \Psi | \mathcal{P}_1^b, \mathcal{P}_G)  \bigg) \label{eq:scaling}
\end{align}
indicating that $\min_\Psi R_b(\Phi, \Psi | \mathcal{P}, \mathcal{P}_G)$ scales with $\mathbb{E}_{\mathcal{P}} [\|V_b\|^2]$ for every choice of $\mathcal{I}_\Phi$. 
Finally, see from eq.(\ref{eq:R_regr}) that $\min_\Psi R_b(\Phi, \Psi | \mathcal{P}_1, \mathcal{P}_G)$ depends only on $\mathcal
{I}_\Phi$ , so define  $\mathcal{R}_b(\mathcal{I}_\Phi):=\min_\Psi R_b(\Phi, \Psi | \mathcal{P}_1, \mathcal{P}_G)$.
Because $\Psi$ can be chosen on each $V_b$ separately for each $b$, we see that 
\begin{align}
L(\Phi) = \sum_{b \not \in  \mathcal{I}_{\Phi}} \mathbb{E}_{\mathcal{P}} [\|V_b\|^2] \mathcal{R}_b(\mathcal{I}_{\Phi}).
\end{align}
\end{proof}

From this result we see that the optimal solution the optimal $\Phi^*, \Psi^*$ in the linear case can be found by minimizing $ \sum_{b \not \in  \mathcal{I}_{\Phi}}  \mathbb{E}_{\mathcal{P}} [\|V_b\|^2]R_b(\mathcal{I}_\Phi)$ about $\mathcal{I}_{\Phi}$ with $\Phi \in F$  so that $\sum_{b \in \mathcal{I}_{\Phi} } \textrm{dim}(V_b) \leq \textrm{dim}(W)$  and find the corresponding $\Psi$ by setting $\Psi_b(V) =  P_b \Psi(V) = \mathbb{E}\left[V_b | \{V_k; k \in \mathcal{I}_{\Phi} \}\right].$ This is indeed an optimization problem about the discrete set $\mathcal{I}_{\Phi}$. We also claim the following about the dependency of the solution to this problem on the norms of $V_b$.

\begin{corollary}
For $J \subset \mathcal{I}_\Phi$, define
$\ell(J) = \sum_{b \in J }  \mathbb{E}_{X \sim \mathcal{P}}E[\|V_b\|^2]$. For any $\Phi_1$ and $\Phi_2$, we have  $L(\Phi_1) > L(\Phi_2) $  if (1) $\mathcal{I}_{\Phi_1} \subset \mathcal{I}_{\Phi_2}$
or (2) $\ell(\mathcal{I}_{\Phi_1^c})$ is sufficiently larger than $\ell(\mathcal{I}_{\Phi_2^c})$.
\end{corollary}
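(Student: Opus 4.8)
The plan is to derive the corollary purely from the closed form $L(\Phi)=\sum_{b\notin\mathcal{I}_{\Phi}}\mathbb{E}_{\mathcal{P}}[\|V_b\|^2]\,\mathcal{R}_b(\mathcal{I}_{\Phi})$ of Proposition~\ref{thm:LinEquiv_bottleneck}, combined with two elementary facts about the coefficients $\mathcal{R}_b$. By \eqref{eq:R_regr}, $\mathcal{R}_b(\mathcal{I}_{\Phi})$ is, after normalizing to $\mathbb{E}_{\mathcal{P}_1^b}[\|V_b\|^2]=1$, the residual $L^2$-error of predicting $g\circ V_b$ from $\{g\circ V_k : k\in\mathcal{I}_{\Phi}\}$, i.e.\ a conditional-expectation residual; moreover, since the $G$-action on $\mathcal{X}$ is unitary and $g$ is a deterministic linear map, this residual is independent of $g$ and equals $\mathbb{E}_{\mathcal{P}_1^b}\|V_b-\mathbb{E}[V_b\mid\{V_k: k\in\mathcal{I}_{\Phi}\}]\|^2$. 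From this description I would record: (a) \emph{monotonicity} --- enlarging the conditioning set only lowers the optimal mean-squared residual, so $\mathcal{I}_{\Phi_1}\subseteq\mathcal{I}_{\Phi_2}$ forces $\mathcal{R}_b(\mathcal{I}_{\Phi_1})\ge\mathcal{R}_b(\mathcal{I}_{\Phi_2})$ for every $b$; (b) \emph{boundedness} --- by the $L^2$ Pythagorean identity $\mathcal{R}_b(\mathcal{I}_{\Phi})\le 1$ always, and $\mathcal{R}_b(\mathcal{I}_{\Phi})>0$ whenever $b\notin\mathcal{I}_{\Phi}$. I would also note that $\mathcal{I}_{\Phi_i}\subseteq\mathcal{I}_{\mathcal{X}}$ is a finite set, that $\mathbb{E}_{\mathcal{P}}[\|V_b\|^2]>0$ for every $b\in\mathcal{I}_{\mathcal{X}}$ because $\mathcal{P}$ has full support, and interpret $\mathcal{I}_{\Phi}^{c}$ as the complement inside $\mathcal{I}_{\mathcal{X}}$, so that $\ell(\mathcal{I}_{\Phi}^{c})$ is exactly the total weight of the types missing from $\Phi$.

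For part (1), read $\subset$ as a proper inclusion $\mathcal{I}_{\Phi_1}\subsetneq\mathcal{I}_{\Phi_2}$. Split the index set $\mathcal{I}_{\Phi_1}^{c}$ of $L(\Phi_1)$ as $\mathcal{I}_{\Phi_2}^{c}\ \sqcup\ (\mathcal{I}_{\Phi_2}\setminus\mathcal{I}_{\Phi_1})$. Over $b\in\mathcal{I}_{\Phi_2}^{c}$, monotonicity (a) gives $\mathbb{E}[\|V_b\|^2]\mathcal{R}_b(\mathcal{I}_{\Phi_1})\ge\mathbb{E}[\|V_b\|^2]\mathcal{R}_b(\mathcal{I}_{\Phi_2})$ term by term, so this partial sum already dominates all of $L(\Phi_2)$; the remaining terms, indexed by the nonempty set $\mathcal{I}_{\Phi_2}\setminus\mathcal{I}_{\Phi_1}$, are each strictly positive by (b) together with $\mathbb{E}[\|V_b\|^2]>0$. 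Adding up yields $L(\Phi_1)>L(\Phi_2)$.

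For part (2), use the two crude bounds coming from (b): $L(\Phi_2)\le\sum_{b\in\mathcal{I}_{\Phi_2}^{c}}\mathbb{E}[\|V_b\|^2]=\ell(\mathcal{I}_{\Phi_2}^{c})$ since $\mathcal{R}_b\le 1$, and $L(\Phi_1)\ge c\sum_{b\in\mathcal{I}_{\Phi_1}^{c}}\mathbb{E}[\|V_b\|^2]=c\,\ell(\mathcal{I}_{\Phi_1}^{c})$, where $c:=\min\{\mathcal{R}_b(J):J\subsetneq\mathcal{I}_{\mathcal{X}},\ b\in\mathcal{I}_{\mathcal{X}}\setminus J\}$ is a finite minimum of strictly positive numbers and hence $c>0$ (this is the one place the finiteness of $\mathcal{I}_{\mathcal{X}}$ matters). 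Consequently, whenever $\ell(\mathcal{I}_{\Phi_1}^{c})>\ell(\mathcal{I}_{\Phi_2}^{c})/c$ --- which is the precise meaning I would attach to ``$\ell(\mathcal{I}_{\Phi_1^{c}})$ is sufficiently larger than $\ell(\mathcal{I}_{\Phi_2^{c}})$'' --- we get $L(\Phi_1)\ge c\,\ell(\mathcal{I}_{\Phi_1}^{c})>\ell(\mathcal{I}_{\Phi_2}^{c})\ge L(\Phi_2)$.

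The only nontrivial ingredient, and the step I expect to require care, is the strict positivity in (b): $\mathcal{R}_b(\mathcal{I}_{\Phi})>0$ whenever $b\notin\mathcal{I}_{\Phi}$. This is precisely where the full-support assumption on $\mathcal{P}$ (already standing in this subsection) is essential --- it guarantees that the conditional law of the omitted isotypic component $V_b$ given any strict subcollection $\{V_k:k\in\mathcal{I}_{\Phi}\}$ is non-degenerate (its conditional support is all of $\mathcal{V}_b$), so $V_b$ cannot be reconstructed with zero error from those components, and by unitarity the same residual is seen after acting by $g$. If this failed --- e.g.\ if $\mathcal{P}$ were supported on a graph $V_b=\phi(\{V_k\}_{k\in\mathcal{I}_\Phi})$ --- the corresponding $\mathcal{R}_b$ could vanish and the conclusion would break, so the hypothesis cannot be removed. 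Everything else is bookkeeping with the explicit formula for $L(\Phi)$.
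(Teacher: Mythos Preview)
Your argument is correct and follows essentially the same route as the paper: part (1) rests on the monotonicity of conditional-expectation residuals under enlargement of the conditioning set, and part (2) on sandwiching $L(\Phi)$ between constant multiples of $\ell(\mathcal{I}_\Phi^{c})$ via upper and lower bounds on the $\mathcal{R}_b$. The paper phrases (1) as a single $\sigma$-field inclusion applied to the undecomposed residual for $g\circ X$, whereas you split the sum and compare termwise; and for (2) the paper leaves the constants $c_u,c_l$ abstract while you pin them down as $1$ and $c=\min_{J,b}\mathcal{R}_b(J)>0$. Your treatment is actually more careful about the strict inequality in (1) and about why the lower constant in (2) is positive---points the paper passes over---but the substance of the two proofs is the same.
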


\begin{proof}
If $\mathcal{I}_{\Phi_1} \subset \mathcal{I}_{\Phi_2}$, then note that 
\begin{align}
\begin{split}
L(\Phi) =  \min_{\Psi} \mathcal{L}(\Phi, \Psi |  \mathcal{P}, \mathcal{P}_G) &=  \mathbb{E}[ \|g \circ X - \Psi \Phi( g \circ X)  \|^2 ]  \\ 
&= \mathbb{E}[ \|g \circ X - E[g \circ X  |  \Phi( g \circ X) \|^2 ] \\
&= \mathbb{E} [g \circ V- \mathbb{E}[g \circ V  ~ | ~ \{ g \circ V_k ; k \in \mathcal{I}_\Phi \}] ], 
\end{split} \label{eq:LinearLPhi}
\end{align}

Because $\sigma( \{ g \circ V_k ; k \in \mathcal{I}_{\Phi_2} \}) \supset  \sigma(\{ g \circ V_k ; k \in \mathcal{I}_{\Phi_1} \})$ where $\sigma$ is the sigma field, it trivially follows that $L(\Phi_1) > L(\Phi_2)$ by the definition of the conditional expectation.

Next, if $\mathcal{I}_{\Phi_1} \not\subset \mathcal{I}_{\Phi_2}$,  note that $L(\Phi) =  \min_{\Psi \in \mathcal{C}_\Phi(\mathcal{X})} \mathcal{L}(\Phi, \Psi| \mathcal{P}, \mathcal{P}_G)$ satisfies 
\begin{align}
c_{u}(\mathcal{I}_{\Phi}) \sum_{b \not \in  \mathcal{I}_{\Phi}}  \mathbb{E}_{\mathcal{P}} [\|V_b\|^2]    \geq L(\Phi) = \sum_{b \not \in  \mathcal{I}_{\Phi}} \mathbb{E}_{\mathcal{P}} [\|V_b\|^2]  R_b(\mathcal{I}_{\Phi}) \geq c_{l}(\mathcal{I}_{\Phi}) \sum_{b \not \in  \mathcal{I}_{\Phi}}  \mathbb{E}_{\mathcal{P}} [\|V_b\|^2]   \label{eq:ulbound} 
\end{align} 
for some $c_u$ and $c_l$. Thus, if 
\begin{align}
\sum_{b \not \in  \mathcal{I}_{\Phi_1}}  \mathbb{E}_{\mathcal{P}} [\|V_b\|^2]  > \frac{c_u (\mathcal{I}_{\Phi_2}) }{c_l(\mathcal{I}_{\Phi_1})} \sum_{b \not \in  \mathcal{I}_{\Phi_2}}  \mathbb{E}_{\mathcal{P}} [\|V_b\|^2]
\end{align}
Then 
\begin{align}
L(\Phi_1) \geq  c_l (\mathcal{I}_{\Phi_1})  
\sum_{b \not \in  \mathcal{I}_{\Phi_1}}  \mathbb{E}_{\mathcal{P}} [\|V_b\|^2]   \geq
c_u (\mathcal{I}_{\Phi_2})  \sum_{b \not \in  \mathcal{I}_{\Phi_2}}  \mathbb{E}_{\mathcal{P}} [\|V_b\|^2]  > L(\Phi_2)
\end{align}
Then $L(\Phi_1) > L(\Phi_2)$ is guaranteed. In other words, 
We have $L(\Phi_1) > L(\Phi_2)$ when 
$\ell(\mathcal{I}_{\Phi_1^c})$ is sufficiently larger than $\ell(\mathcal{I}_{\Phi_2^c})$.

\end{proof}

% \km{Thus, if $\mathbb{E}_{\mathcal{P}} [\|V_b\|^2]$ is sufficiently large,  then $R_b(\Phi, \Psi | \mathcal{P}, \mathcal{P}_G)$ relative to other $R_{b'}$s would be so large that not including $b$ into $\mathcal{I}_\Phi$ would result in suboptimal $\Phi$. Because the  $R$ depends on $\Phi$ only through $\mathcal{I}_\Phi$, this proves the claim.}
%, proving the claim.

\subsubsection{Nonlinear Fourier Filter} 
\label{sec:bottleneck_appendix}
In this section, we extend the argument in the previous section to a more general situation in which $\mathcal{X}$ is not necessarily a representation 
space itself. Before we proceed, a word of caution is in order regarding the space on which the encoder $\Phi$ is to be searched in the optimization process. 

As is the case in many applications, we set our goal to find $\Phi$ from a set of functions with certain smoothness, and we use this smoothness property to relate the harmonic information captured in the latent space.
% Indeed, in order to optimize an objective we would like to search for a model on as large a pool of functions as possible.
% In general, however, there is no maximal set of smooth equivariant functions.
% For instance, while the RKHS generated by the shift invariant kernels $k(x,y | \sigma) = exp(-\|x - y\|^2/\sigma)$ on $\mathbb{R}$ grows with $\sigma \to 0$,  and in the limit of $\sigma \to 0$, the RKHS will include the entire $L_2$, which are in general not shift invariant.  
In analogy to this argument, we thus restrict the search space of our encoder $\Phi$ to a space of functions with a certain level of smoothness in order to make the claims meaningful. 
Therefore, in this section, we assume that there exists a $G$-invariant kernel $K(\cdot, \cdot)$, and that we are searching the encoder from the space of equivariant functions from $\mathcal{X}$ to some vector space $V$ over $\mathbb{R}$ satisfying 
the following properties: 
\begin{definition}
Let $K$ be a $G$-invariant positive definite kernel on $\cX$, and $\cH(K)$ be the RKHS corresponding to $K$. 
A map $\Phi:\cX\to V$ for some Hilbert space $V$ is called a {\em $\mathcal{H}(K)$-continuous equivariant map} if the following two conditions are satisfied;
\begin{enumerate}
    \item The action of $g$ on $\Phi(x)$ is linear; that is, there exits a representation $D_\Phi: G \to GL_\mathbb{R}(V)$ such that $D_\Phi(g) \Phi(x) = \Phi(g \circ x)$.
    \item For all $v \in V$, the map $x \mapsto \langle v, \phi(x) \rangle_V $ belongs to $\mathcal{H}(K)$. 
\end{enumerate}
\label{def:continuous}
\end{definition}

We first show that any $\mathcal{H}(K)$-continuous equivariant map factors through the map from $\mathcal{X}$ to $\mathcal{H}(K)$ (Fig \ref{fig:factor_out}). 
\begin{lemma}
Suppose that $K$ is a $G$-invariant positive definite kernel on $\mathcal{X}$ and $\Phi$ is a $\mathcal{H}(K)$-continuous equivariant map from $\mathcal{X}$ to $V$. For the canonical embedding $F: \mathcal{X}\to \mathcal{H}(K)$, $x \mapsto k_x$, there exists a \textit{linear} equivariant map $\mu_\Phi:\cH(K)\to V$ such that $\Phi = \mu_\Phi \circ F$, where $\cH(K)$ admits the linear regular action $\rho(g):f\mapsto f(g^{-1}\cdot) $.  \label{thm:factor_out}
 \end{lemma}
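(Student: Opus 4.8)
The plan is to construct $\mu_\Phi$ explicitly using the reproducing property of $\mathcal{H}(K)$, and then verify linearity, equivariance, and the factorization $\Phi = \mu_\Phi \circ F$ separately. First I would define $\mu_\Phi$ on the dense subspace $\mathcal{H}_0 := \mathrm{Span}\{k_x \mid x \in \mathcal{X}\}$ by setting $\mu_\Phi(k_x) := \Phi(x)$ and extending linearly; this is forced by the desired factorization $\Phi = \mu_\Phi \circ F$ since $F(x) = k_x$. The first thing to check is that this is \emph{well-defined}: if $\sum_i a_i k_{x_i} = \sum_j b_j k_{y_j}$ in $\mathcal{H}(K)$, we need $\sum_i a_i \Phi(x_i) = \sum_j b_j \Phi(y_j)$ in $V$. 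This is where condition 2 of Definition \ref{def:continuous} enters: for any $v \in V$, the function $h_v : x \mapsto \langle v, \Phi(x)\rangle_V$ lies in $\mathcal{H}(K)$, so by the reproducing property $\langle v, \Phi(x)\rangle_V = \langle h_v, k_x\rangle_K$. Hence $\langle v, \sum_i a_i \Phi(x_i)\rangle_V = \langle h_v, \sum_i a_i k_{x_i}\rangle_K = \langle h_v, \sum_j b_j k_{y_j}\rangle_K = \langle v, \sum_j b_j \Phi(y_j)\rangle_V$ for all $v$, giving equality in $V$ (assuming $V$'s inner product separates points, which holds for a Hilbert space).

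Next I would address boundedness/continuity of $\mu_\Phi$ so it extends from $\mathcal{H}_0$ to all of $\mathcal{H}(K)$. The same identity $\langle v, \mu_\Phi(f)\rangle_V = \langle h_v, f\rangle_K$ (valid for $f \in \mathcal{H}_0$, and the natural definition for general $f$) shows $|\langle v, \mu_\Phi(f)\rangle_V| \le \|h_v\|_K \|f\|_K$; so for fixed $f$ the linear functional $v \mapsto \langle v, \mu_\Phi(f)\rangle_V$ is well-defined, and one can take this as the \emph{definition} of $\mu_\Phi(f) \in V$ on all of $\mathcal{H}(K)$ provided $v \mapsto \langle h_v, f\rangle_K$ is a bounded functional of $v$ — which requires a uniform bound $\|h_v\|_K \le C\|v\|_V$. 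Such a bound follows from the closed graph theorem applied to $v \mapsto h_v$ (linear from $V$ to $\mathcal{H}(K)$, closed because $\mathcal{H}(K)$-convergence implies pointwise convergence and $\langle v_n, \Phi(x)\rangle \to \langle v, \Phi(x)\rangle$). Alternatively, if one is content with $V$ finite-dimensional (as in all experiments) boundedness is automatic and this step is trivial; I would state the general argument but note this simplification.

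Then the two remaining properties are routine. For linearity: $\mu_\Phi$ is linear on $\mathcal{H}_0$ by construction and the extension preserves linearity. For equivariance, I would compute on generators: $\mu_\Phi(\rho(g) k_x) = \mu_\Phi(k_{g \circ x})$ — using that $\rho(g)k_x = k_x(g^{-1}\cdot) = K(x, g^{-1}\cdot) = K(g\circ x, \cdot) = k_{g\circ x}$ by $G$-invariance of $K$ (exactly the computation in the proof of Corollary \ref{thm:linearness}) — and this equals $\Phi(g \circ x) = D_\Phi(g)\Phi(x) = D_\Phi(g)\mu_\Phi(k_x)$, using condition 1. So $\mu_\Phi \circ \rho(g) = D_\Phi(g) \circ \mu_\Phi$ on $\mathcal{H}_0$, hence everywhere by density and continuity of both sides. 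Finally $\Phi = \mu_\Phi \circ F$ holds on the nose since $\mu_\Phi(F(x)) = \mu_\Phi(k_x) = \Phi(x)$.

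The main obstacle I anticipate is the well-definedness step combined with the boundedness needed for the extension to the full Hilbert space $\mathcal{H}(K)$: the construction $\mu_\Phi(k_x) := \Phi(x)$ is only consistent because of condition 2, and packaging this cleanly — showing the map $v \mapsto h_v$ is bounded so that $\mu_\Phi$ is a genuine bounded linear operator rather than merely a linear map on a dense subspace — is the one place requiring care (closed graph theorem, or restriction to finite-dimensional $V$). Everything else (linearity, equivariance via $G$-invariance of $K$, the factorization identity) is a short verification on the spanning set $\{k_x\}$ followed by a density argument.
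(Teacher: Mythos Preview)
Your proof is correct and arrives at the same conclusion as the paper, but the construction is organized differently. The paper defines $\mu_\Phi$ directly on all of $\mathcal{H}(K)$ by the formula $\mu_\Phi(h) := \sum_i e_i \langle f_{e_i}^\Phi, h\rangle_H$, where $\{e_i\}$ is an orthonormal basis of $V$ and $f_{e_i}^\Phi \in \mathcal{H}(K)$ is the Riesz representer of $x \mapsto \langle e_i, \Phi(x)\rangle_V$ (your $h_{e_i}$). This global formula sidesteps both the well-definedness check on $\mathcal{H}_0$ and the density-extension step: linearity is immediate from the formula, and $\mu_\Phi(k_x) = \Phi(x)$ and equivariance are then verified exactly as you do, via $\rho(g)k_x = k_{g\circ x}$ and condition~1. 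Your route---define on the generators $k_x$, check consistency using condition~2, then extend by continuity---is equivalent but more explicitly functional-analytic, and your closed-graph argument for the boundedness of $v\mapsto h_v$ actually addresses a point the paper leaves implicit (the paper's sum $\sum_i e_i \langle f_i^\Phi, h\rangle$ requires a convergence justification when $\dim V = \infty$, which reduces to the same bound). In the finite-dimensional case relevant to the paper's applications, both arguments collapse to the same short computation.
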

\begin{proof}
   Because $\langle v, \Phi(\cdot) \rangle_V \in \mathcal{H}(K)$ for any $v$ by assumption, there is $f_v^{\Phi}\in \mathcal{H}(K)$ such that 
   $\langle v, \Phi(x) \rangle = \langle f_v^{\Phi},  k_x \rangle_H$, where $\langle ~ ,~  \rangle_H$ is the inner product of $\mathcal{H}(K)$.
   This being said, if $\{ e_i \}$ is the orthonormal basis of $V$, we have 
\begin{align*}
    \Phi(x) &=  \sum_i e_i \langle e_i, \Phi(x) \rangle_V \\
    &= \sum_i e_i \langle f_i^{\Phi}, k_x \rangle_H  
\end{align*}
Motivated by this expression, let us define the linear map $\mu_\Phi: \mathcal{H}(K) \to V$ by  
\begin{align}
\mu_\Phi :  h  \mapsto   \sum_i e_i \langle f_i^{\Phi}, h \rangle_H.  \label{eq:mu_Phi}
\end{align}
We claim that the diagram Fig \ref{fig:factor_out} commutes and $\mu_\Phi$ is equivariant.

First, it follows from eq.(\ref{eq:mu_Phi}) that 
\begin{equation}
    \mu_\Phi(k_x)  = \sum_i e_i \langle f_i^\Phi, k_x\rangle_H 
    = \sum_i e_i \langle e_i, \Phi(x) \rangle_V 
     = \Phi(x), \label{eq:commute}
\end{equation}
which means $\mu_\Phi\circ F(x) = \Phi(x)$.  
Recall that the linear action $\rho$ is defined by $\rho(g): h(\cdot)\mapsto h(g^{-1}\cdot)$.  Then, we have $\rho(g)k_x = K(x, g^{-1}\circ \cdot) = K(g \circ x, \cdot) = k_{g\circ x}$, and thus, using  eq.(\ref{eq:commute}), 
\begin{align*} 
\mu_\Phi(\rho(g)k_x) & = \mu_\Phi(k_{g\circ x}) \\
& = \Phi(g\circ x) \\
& = D(g) \Phi(x).
\end{align*}
Because the span of $\{k_x\mid x\in \cX\}$ is dense in $\cH(K)$, the above equality means the equivariance of $\mu_\Phi$.  
%homomorphism between representations when indeed $\Phi(g\circ x) = D(g) \Phi(x)$, because $g \circ k_x := g \circ k(x, \cdot) = k(x, g^{-1} \cdot) = g \circ k_x$ and 
%\begin{align}
%\mu_\Phi (g \circ k_x)  = \mu_\Phi k_{g \circ x} = \Phi(g \circ x) &= D(g) \Phi(x) = D(g) \mu_\Phi k_x.   
%\end{align}
%In other words, $\Phi(x) = \mu_\Phi \circ  F(x)$ where $\mu_\Phi$ is a homomorphism between two representation space.
\end{proof}

\begin{figure}
\[
 \xymatrix{
     \mathcal{X}  \ar[dr]_{\Phi}  \ar[r]^{F} &  \mathcal{H}(K) \ar[d]_{\mu_\Phi}  \\
      & V 
  }
\]
\caption{The commuting relation established by Lemma \ref{thm:factor_out}} 
\label{fig:factor_out}
\end{figure}
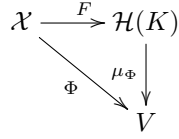

% \begin{figure}[h!]
%     \centering
% \[
%  \xymatrix{
%      V \ar[d]_{f}  \ar[r]^{\rho(g)} &  V \ar[d]^{f} \\
%      W  \ar[r]^{\tau(g)} & W 
%   }
% \]
%     \caption{$G$-linear map}
%     \label{fig:Gmap}
% \end{figure}

Now, with Lemma \ref{thm:factor_out},  we may extend the definition of multiplicity free map $\Phi$ that is $\mathcal{H}(K)$-continuous equivariant map. 

\begin{definition} (Multiplicity-unlimited mapping, General)    
We say that a 
$\mathcal{H}(K)$-continuous equivariant map $\Phi:\mathcal{X} \to V$ is multiplicity unlimited if $\mu_\Phi$ constructed in the way of \ref{thm:factor_out} is a multiplicity unlimited map.
Let us denote by $\mathcal{I}_\Phi$ the set of all irreducible types present in $V$. 
Also, let us use $P_b$ to denote  the projection of $F(\mathcal{X})$ onto the $b$th isotypic component $\mathcal{V}_b$ of $F(\mathcal{X})$, and let $V_b = P_b F(X).$
\end{definition}
%\km{When $\mathcal{X}$ is itself linear, then $F$ will be equivalent to the isormorphism afforded in \ref{thm:uniqueness_formal}. }
With this definition in mind, we can extend the argument in Proposition \ref{thm:factor_out} to the nonlinear case if the $F$ part of the decomposition $\Phi = \mu_\Phi \circ F$ is a $c_F$-BiLipschitz injective map.

\begin{theorem}
Suppose that there exists a $G$-invariant kernel $K$ on $\mathcal{X}$ for which the map $F: x \to K(x, \cdot) $ is a $c_F$-biLipschitz injective map, and define $\mathcal{M}(K)$ to be the set of all $\mathcal{H}(K)$-continuous equivariant, multiplicity unlimited maps $\Phi$ into a vector space $V$ with a linear action. 
%to \kf{some} vector space $V$ 
We use $F_b$ to denote the map of $\mathcal{X}$ into $b$-th isotypic component of $\mathcal{H}(K)$.
     For any given $\Phi \in \mathcal{M}(K)$, define  
    $L(\Phi) =  \min_{\Psi \in \mathcal{C}(\mathcal{V})} \mathcal{L}(\Phi, \Psi| \mathcal{P}, \mathcal{P}_G)$ , where $\mathcal{C}(\mathcal{X})$ is the set of all measurable maps from $V$ to $\mathcal{X}$. 
    Also, define
    $\ell(J) = \sum_{b \in J}  \mathbb{E}_{X \sim \mathcal{P}}E[\|V_b\|^2]$ with $V_b$ being the $b$th isotypic component of $F(X)$ and assume that there is some $\delta$ such that $E[\|V_b\|^2 ] > \delta$.
    Then for any $\Phi_1$ and $\Phi_2$ with $\mathcal{I}_{\Phi_1} \neq \mathcal{I}_{\Phi_2}$
    we have  $L(\Phi_1) > L(\Phi_2) $  if (1) $\mathcal{I}_{\Phi_1} \subset \mathcal{I}_{\Phi_2}$
    or (2) $\ell(\mathcal{I}_{\Phi_1}^c)$ is sufficiently larger than $\ell(\mathcal{I}_{\Phi_2}^c)$.

% Consider the solution of the optimization problem 
%     \begin{align}
%     (\Phi^*, \Psi^*) = \arg \min_{\Phi \in \mathcal{F}(K), \Psi\in C(V, \mathcal{X})} \mathcal{L}(\Phi, \Psi)
%     \end{align}
%     Then for all $b \in \mathcal{I}_{\mathcal{H}(K)}$, there exists $\alpha_b >0$ such that $\mathbb{E}[|F_b(X)|^2] > \alpha_b$ implies $b \in \mathcal{I}_{\Phi^*}.$
 %By lemma \ref{thm:factor_out}, we may write $\Phi = \mu_\Phi \circ F$ for all $\Phi \in \mathcal{F}(K)$.
% If $\mathcal{I}_{\Phi_1} \supset \mathcal{I}_{\Phi_2}$, 
% recall that  
% \begin{align}
% L(\Phi) =  \min_{\Psi} \mathcal{L}(\Phi, \Psi) &=  \mathbb{E}[ \|g \circ X - \Psi \Phi( g \circ X)  \|^2 ]  \\ 
% &= \mathbb{E}[ \|g \circ X - E[g \circ X  |  \Phi( g \circ X) \|^2 ] 
% \end{align}
% Because $\sigma(\Phi_2( g\circ X)) \supset  \sigma(\Phi_1(g \circ X))$ where $\sigma$ is the sigma field,  it trivially follows that $L(\Phi_1) > L(\Phi_2)$
% by the defition of the conditional expectation.

\end{theorem}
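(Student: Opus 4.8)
The plan is to reduce this nonlinear statement to the linear Harmonic Bottleneck (Proposition~\ref{thm:LinEquiv_bottleneck} and the corollary following it) by using the factorization of Lemma~\ref{thm:factor_out} together with the biLipschitz hypothesis on $F$, and then to re-run the two-case argument of that corollary with constants degraded by powers of $c_F$. Concretely, I would first use Lemma~\ref{thm:factor_out} to write every $\Phi\in\mathcal{M}(K)$ as $\Phi=\mu_\Phi\circ F$ with $\mu_\Phi:\mathcal{H}(K)\to V$ linear and equivariant; since $\Phi$ is multiplicity-unlimited, so is $\mu_\Phi$, and Schur's lemma (Theorem~\ref{lma:Schur}) then forces $\mu_\Phi$ to vanish on the isotypic component $\mathcal{V}_b$ for every $b\notin\mathcal{I}_\Phi$ and to restrict to an injection on $\mathcal{V}_b$ for every $b\in\mathcal{I}_\Phi$. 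Hence $P_{\mathcal{I}_\Phi}F(x):=\sum_{b\in\mathcal{I}_\Phi}V_b(x)$ is a linear bijective function of $\Phi(x)$, and $\Phi(g\circ x)=\mu_\Phi(\rho(g)F(x))$ for the regular representation $\rho$ on $\mathcal{H}(K)$, which is unitary by Corollary~\ref{thm:rkhs2}. Writing $\tilde\Psi:=F\circ\Psi$ and using that the biLipschitz $F$ is a homeomorphism onto $F(\mathcal{X})$, optimizing over continuous $\Psi:V\to\mathcal{X}$ is the same as optimizing over continuous maps $V\to F(\mathcal{X})$, and for every $(g,X)$
\[
 c_F^{-2}\bigl\|\rho(g)F(X)-\tilde\Psi\Phi(g\circ X)\bigr\|_{\mathcal{H}(K)}^2 \;\le\; \bigl\|g\circ X-\Psi\Phi(g\circ X)\bigr\|_{\mathcal{X}}^2 \;\le\; c_F^{2}\bigl\|\rho(g)F(X)-\tilde\Psi\Phi(g\circ X)\bigr\|_{\mathcal{H}(K)}^2 .
\]

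Next I would bracket $L(\Phi)$ by two multiples of $\ell(\mathcal{I}_\Phi^c)$. For the lower bound, drop the requirement that $\tilde\Psi$ land in $F(\mathcal{X})$ and apply Proposition~\ref{thm:LinEquiv_bottleneck} to the linear equivariant map $\mu_\Phi$ on the representation space $\mathcal{H}(K)$ with pushforward data distribution $F_{*}\mathcal{P}$; this yields $\min_{\tilde\Psi}\mathbb{E}\|\rho(g)F(X)-\tilde\Psi\mu_\Phi\rho(g)F(X)\|^2=\sum_{b\notin\mathcal{I}_\Phi}\mathbb{E}_{\mathcal{P}}[\|V_b\|^2]\,\mathcal{R}_b(\mathcal{I}_\Phi)$, hence $L(\Phi)\ge c_F^{-2}\bigl(\min_{b\notin\mathcal{I}_\Phi}\mathcal{R}_b(\mathcal{I}_\Phi)\bigr)\,\ell(\mathcal{I}_\Phi^c)=:\tilde c_l(\mathcal{I}_\Phi)\,\ell(\mathcal{I}_\Phi^c)$. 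For the upper bound, take the decoder sending $v=\Phi(g\circ X)$ to (a point within $\epsilon$ of) a nearest point of $\mathcal{X}$ to $\widehat F(v):=P_{\mathcal{I}_\Phi}\rho(g)F(X)$, which is a well-defined linear function of $v$ through the injection above; since $\widehat F(v)$ differs from $\rho(g)F(X)$ only in the $\mathcal{I}_\Phi^c$-isotypic part, the triangle inequality plus the nearest-point property bound $\|g\circ X-\Psi\Phi(g\circ X)\|_{\mathcal{X}}$ by $2c_F\|P_{\mathcal{I}_\Phi^c}\rho(g)F(X)\|+O(\epsilon)$, whose square has expectation $4c_F^2\,\ell(\mathcal{I}_\Phi^c)+O(\epsilon)$ because $\rho(g)$ is unitary and the isotypic components are orthogonal; letting $\epsilon\to0$ gives $L(\Phi)\le 4c_F^2\,\ell(\mathcal{I}_\Phi^c)$. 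The lower bound is strictly positive because $\mathcal{P}$, hence $F_{*}\mathcal{P}$, has full support, so no proper family of isotypic components of $\mathcal{H}(K)$ determines the remaining ones — the same non-degeneracy already implicit in the corollary to Proposition~\ref{thm:LinEquiv_bottleneck}.

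Finally I would run the two cases. For (2), when $\mathcal{I}_{\Phi_1}\not\subset\mathcal{I}_{\Phi_2}$, the two brackets give $L(\Phi_1)\ge\tilde c_l(\mathcal{I}_{\Phi_1})\,\ell(\mathcal{I}_{\Phi_1}^c)$ and $L(\Phi_2)\le 4c_F^2\,\ell(\mathcal{I}_{\Phi_2}^c)$, so $L(\Phi_1)>L(\Phi_2)$ once $\ell(\mathcal{I}_{\Phi_1}^c)>\tfrac{4c_F^2}{\tilde c_l(\mathcal{I}_{\Phi_1})}\,\ell(\mathcal{I}_{\Phi_2}^c)$, i.e. once $\ell(\mathcal{I}_{\Phi_1}^c)$ is sufficiently larger than $\ell(\mathcal{I}_{\Phi_2}^c)$. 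For (1), when $\mathcal{I}_{\Phi_1}\subsetneq\mathcal{I}_{\Phi_2}$, the first step shows $\Phi_1$ is a function of $P_{\mathcal{I}_{\Phi_1}}F$, which is a function of $P_{\mathcal{I}_{\Phi_2}}F$, which is a function of $\Phi_2$; thus $\Phi_1=\alpha\circ\Phi_2$ for a continuous (indeed linear) $\alpha$, and composing an optimal decoder for $\Phi_1$ with $\alpha$ yields a decoder for $\Phi_2$ of the same loss, so $L(\Phi_2)\le L(\Phi_1)$, with strictness because $\Phi_2$ also determines $V_k$ for some $k\in\mathcal{I}_{\Phi_2}\setminus\mathcal{I}_{\Phi_1}$ (and $\mathbb{E}_{\mathcal{P}}[\|V_k\|^2]>0$ as $k$ is genuinely present), which by the same full-support non-degeneracy lowers the optimal reconstruction error by a positive amount. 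The step I expect to be the main obstacle is this upper bound under the constrained decoder: unlike the linear bottleneck, $\Psi$ must take values in $\mathcal{X}$, whose image $F(\mathcal{X})$ is a curved subset of $\mathcal{H}(K)$ rather than a $G$-invariant subspace, so the per-isotypic splitting of the optimal decoder used in Proposition~\ref{thm:LinEquiv_bottleneck} is unavailable, and the nearest-point decoder together with the biLipschitz inequality is precisely what restores a two-sided bound of the common order $\ell(\mathcal{I}_\Phi^c)$; the remaining delicate point, positivity of the $\mathcal{R}_b$'s, is inherited from the linear corollary and supplied here by the full-support hypothesis on $\mathcal{P}$.
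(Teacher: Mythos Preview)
Your proposal is correct and follows essentially the same route as the paper: factor $\Phi=\mu_\Phi\circ F$ via Lemma~\ref{thm:factor_out}, sandwich the $\mathcal{X}$-loss between biLipschitz multiples of the $\mathcal{H}(K)$-loss, and then invoke the linear bottleneck (Proposition~\ref{thm:LinEquiv_bottleneck} and its corollary) on $\mu_\Phi$ with pushforward data $F_*\mathcal{P}$. Your treatment is in fact more careful than the paper's in two places: for case~(1) you give an explicit function-composition argument $\Phi_1=\alpha\circ\Phi_2$ rather than appealing loosely to the sigma-field monotonicity of conditional expectation, and for the upper bound in case~(2) you construct an explicit nearest-point decoder landing in $\mathcal{X}$, whereas the paper simply substitutes $\hat\Psi=F\circ\Psi$ and invokes the linear case without addressing that $\hat\Psi$ is then constrained to $F(\mathcal{X})$ --- precisely the obstacle you flagged.
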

%  \begin{proof}
%  If $\mathcal{I}_{\Phi_1} \supset \mathcal{I}_{\Phi_2}$, we can use the exact same logic as for Proposition \ref{thm:LinEquiv_bottleneck}. 
% If $\mathcal{I}_{\Phi_1} \not \subset \mathcal{I}_{\Phi_2}$, then
% by Lipschitz property of $F$, we have
% \begin{align}
%     \mathcal{L}(\Phi, \Psi| \mathcal{P}, \mathcal{P}_G) =  \mathbb{E}[ \|g \circ X - \Psi \Phi( g \circ X)  \|^2 ] &\geq  c_F   \mathbb{E}[ \|F (g \circ X)  - F (\Psi \Phi( g \circ X))   \|^2 ] \\
%     &= c_F \mathbb{E}[ \|F (g \circ X)  - F \Psi \mu_\Phi  F(g \circ X))   \|^2 ]
% \end{align}
% where the integrating distributions $(\mathcal{P}, \mathcal{P}_G)$ in the suffix of $\mathbb{E}$ are omitted for brevity.  
% By the biLipschitz assumption, we have a bound in other direction as well;  
% $$ c_F^{-1} \mathbb{E}[ \|F (g \circ X)  - F \Psi \mu_\Phi  F(g \circ X))   \|^2 ] \geq \mathcal{L}(\Phi, \Psi| \mathcal{P}, \mathcal{P}_G) $$ 
 
% Recall that $\mathcal{H}(K)$ is a linear representation space of $G$ on its own. Thus, if we write $\hat{X} = F(X)$, $F \circ \Psi = \hat{\Psi}$ and $\mu_\phi = \hat \Phi$, then we have 
% \begin{align}
% \mathbb{E}[ \|F (g \circ X)  - F \Psi \mu_\Phi  F(g \circ X))   \|^2 ] := \mathbb{E}[ \| g \circ \hat{X} - \hat{\Psi} \hat{\Phi}  (g \circ \hat{X} ))  \|^2 ]
% \end{align}
% which is precisely the objective that we studied in Proposition \ref{thm:LinEquiv_bottleneck}. 
% The claim therefore follows with the same logic as in \ref{thm:LinEquiv_bottleneck} with the upper and the lower bound in eq.(\ref{eq:ulbound} scaled by $c_F^{-1}$ and $c_F$ respectively.
% \end{proof}

\begin{proof}
Using the similar logic as in the linear case, 
we establish the claim by showing that $L(\Phi) := \min_\Psi \mathcal{L}(\Phi, \Psi| \mathcal{P}, \mathcal{P}_G)$
is bounded from above and below by 
\begin{align}
\sum c_u(\mathcal{I} (\Phi)) \mathbb{E}[\|V_b\|^2 ]  \geq  L(\Phi) \geq \sum c_l(\mathcal{I} (\Phi)) \mathbb{E}[\|V_b\|^2 ]
\end{align}
for some $c_l$ and $c_u$.  
Assume the same $F$ used in Lemma \ref{thm:factor_out}, and let $F_b$ be the $b$th isotypic component of $F$. 
To show the lower bound, by BiLipschitz property of $F$ we have 
\begin{align}
\min_\Psi \mathcal{L}(\Phi, \Psi| \mathcal{P}) 
&\geq c_F \min_\Psi \mathbb{E}[\|F (g \circ X)  - F(\Psi \Phi(g \circ X) \|^2 ]  \\
&\geq c_F \min_\Psi \sum_b \mathbb{E}[\|F_b (g \circ X)  - F_b (\Psi \Phi(g \circ X) \|^2 ]  \\
&\geq c_F \sum_b \min_\Psi  \mathbb{E}[\|F_b (g \circ X)  - F_b (\Psi \Phi(g \circ X) \|^2 ] \\
&\geq c_F \sum_{b \not \in \mathcal{I}_\Phi}  \mathbb{E}[\|g \circ V_b  - \mathbb{E}[g \circ V_b |  V_b ; b  \in \mathcal{I}_\Phi ]   \|^2 ]
\end{align}
where in the last inequality, we used the optimality of conditional expectation and the fact that in the factorization $\Phi = \mu_\Phi F$, $\mu_\Phi(X)$ has the same sigma algebra as $\{V_b ; b  \in \mathcal{I}_\Phi\}$ under the multiplicity unlimited setting. 
Because the formula above is of the same form as eq.(\ref{eq:LinearLPhi}), we are done with the lower bound here with the same logic as in the linear case. 
\begin{figure}
\[
 \xymatrix{
     \mathcal{X}  \ar@{^{(}->}[r] &  \mathcal{H}(K) \cong F_\Phi \oplus F_\Phi^c \ar@{->>}[r]_{\mu_\Phi}  & F_\Phi  \cong V  
  }
\]
\end{figure}
As for the Upper bound, again we have from the BiLipschitz property that 
\begin{align}
L(\Phi) =  \min_\Psi \mathcal{L}(\Phi, \Psi| \mathcal{P}) 
&\leq c_F \min_\Psi \mathbb{E}[\|F (g \circ X)  - F(\Psi \Phi(g \circ X) \|^2 ]  \\
&= c_F \min_\Psi \sum_b \mathbb{E}[\|F_b (g \circ X)  - F_b (\Psi \Phi(g \circ X))  \|^2 ] 
\end{align}
Now, noting that $\Phi = \mu_\Phi F$ under multiplicity unlimited setting is injective on its image and writing $F_{\Phi} = P_\Phi F$ to be the projection of $F$ onto the representations corresponding to $\mathcal{I}_\Phi$, 
$\Psi$ can be chosen to an invertible map that maps the image of $\Phi$ back to its preimage 
so that, by using the fact that $\mu_\Phi$ is invertible on $\mathcal{I}_\Phi$ component, we have  $F_{\Phi}(x) =F_{\Phi} \Psi \Phi(x)$ for $x \in \mathcal{X}$.

   To construct such $\Psi$, write $\Phi = \mu_\Phi F$, and for some $x \in \mathcal{X}$, consider $\mu_\Phi^{-1} (\Phi(x))$, which is possibly a set. Then we may take a specific section $A_\phi(x) \in F(x)$ of $\mu_\Phi$, and define $\Psi$ so that $\Psi(\Phi(x)) = F^{-1} A_\phi(x)$ because $F$ maps $\mathcal{X}$ to its image injectively. 
This way, $F \Psi \Phi (x) = A_\phi(x)$ and $F_\Phi \Psi \Phi (x) = P_\Phi A_\phi(x)$.
Because of the choice of the section for each $x$ is arbitrary, let us take the section such that the norm of its projection $P_\Phi^c  A_\phi(x) := P_\Phi^c F \Psi \Phi(x) $ is minimal amongst all members of $\mu^{-1}_\Phi(\Phi(x))$, where $P_\Phi^c$ is the projection to $\{ V_b ; b \not \in \mathcal{I}_\Phi\}$. 
%Because this section is deterministically decided based solely on $\Phi(x)$,  this $\Psi \Phi(x)$ by the definition is dependent only on $\Phi(x)$  and $\Psi \Phi(X)$ is measurable with respect to $\sigma(\Phi(X)).$ 

At the same time, by the definition $\mu_\Phi A_\phi(x) = \Phi(x) = \mu_\Phi F(x) = \mu_\Phi F_\Phi(x)$ and by Schur's lemma, $\mu_\Phi A_\phi(x) = \mu_\Phi P_\Phi A_\phi(x)$, so $\mu_\Phi P_\Phi A_\phi(x) = \mu_\Phi F_\Phi(x)$. 
Because $\mu_\Phi$ is invertible on $F_\Phi(\mathcal{X})$ by Schur's lemma and multiplicity-free condition, this shows that $F_\Phi(x) = P_\Phi A_\phi(x)$ because $P_\Phi A_\phi(x) \in F_\Phi(\mathcal{X})$.  Altogether we have $F_\Phi \Psi \Phi (x) = F_\Phi(x)$ as desired.  

With such $\Psi$, we have $F_b \Psi (\Phi(x)) = F_b(x)$ whenever $b  \in \mathcal{I}_\Phi$ as well. 
Thus, for such a chosen $\Psi$, $L(\Phi)$ may be bounded 
from above by 
\begin{align}
c_F \sum_{b \not \in \mathcal{I}_\Phi}  \mathbb{E}[\|g \circ V_b  - F_b (\Psi \Phi(g \circ X))  \|^2 ]
\end{align}

% Now, with unitary action and assumed lower bound $\delta$ on $\mathbb{E}[\|V_b \|^2]$, there exists some constant $c_b$ such that 
Now, note that 
\begin{align}
\sum_{b \not \in \mathcal{I}_\Phi}  \mathbb{E}[\|g \circ V_b  - F_b (\Psi \Phi(g \circ X))  \|^2 ] 
 & \leq \sum_{b \not \in \mathcal{I}_\Phi} \mathbb{E}[\|g \circ V_b \|^2]    + \mathbb{E} [\|F_b (\Psi \Phi(g \circ X))  \|^2 ] \\
&= 2 \sum_{b \not \in \mathcal{I}_\Phi} \mathbb{E}[\|g \circ V_b \|^2] \\
& = 2 \sum_{b \not \in \mathcal{I}_\Phi} \mathbb{E}[\| V_b \|^2]  
\end{align}
by the choice of the section we chose to construct $\Psi$. 
This would define the upper bound $ \sum_{b \not \in \mathcal{I}_\Phi} 2 c_F\mathbb{E}[\| V_b \|^2]$ and the same logic as in the linear case follows from here.
\end{proof}

This statement tells us that, if a certain set of isotypes captures significantly more dominant information than others, then NFT will prefer the very set of isotypes over others.  Indeed, if  $\ell(\mathcal{I}_{\Phi_1}^c)$ is sufficiently larger than $\ell(\mathcal{I}_{\Phi_2}^c)$, it will mean that the set of isotypes that $\Phi_1$ is excluding from $H(K)$ is \textit{so} much larger in the dataset than those excluded by $\Phi_2$ that it is more preferable to choose $\Phi_2$ than $\Phi_1$. This way, the solution to the autoencoding problem is determined by the contribution of each frequency space to the dataset. 
\vfill

\pagebreak

\section{Details of experiments} \label{sec:experiment_app}

\subsection{Compression of deformed data}
\label{sec:appendix_compression}

In this section we provide the details of the signal compression experiment in Sec \ref{sec:1d}, along with the additional visualization of the reconstructed results.

%We provide the data generation process in the main manuscript. 
%We also provide the details of the model architecture below. 
{\bf Dataset:} The dataset was constructed as described in Sec \ref{sec:1d}, and the velocity of the shift was chosen from the range $[0, 64]$.

{\bf Models:} For the encoder and decoder, MLP with two hidden layers is used.  The intermediate dimensions and the activations of the network are as follows: \\
$g$-NFT ($g$ known): Encoder 128-256-256-32, Decoder 32-256-256-128.  activation=ReLU. \\
$G$-NFT ($G$ known, but $g$ unknown): Encoder 128-512-512-32, Decoder 32-512-512-128.  activation=Tanh.

{\bf Visualization of the reconstructed signals:}
\begin{figure}[h!]
    \centering
        (A) Blue: Reconstruction with NFT ($g$ known). Red: noisy training data. \\
    \includegraphics[width = 14cm]{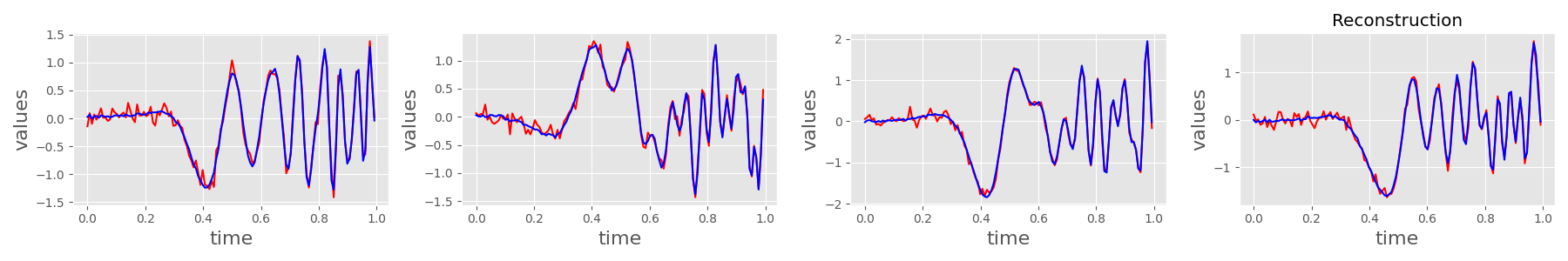}\\

    \vspace{1em}
    (B) Blue: Reconstruction with DFT.  Red: Noiseless test data.\\
    \includegraphics[width = 14cm]{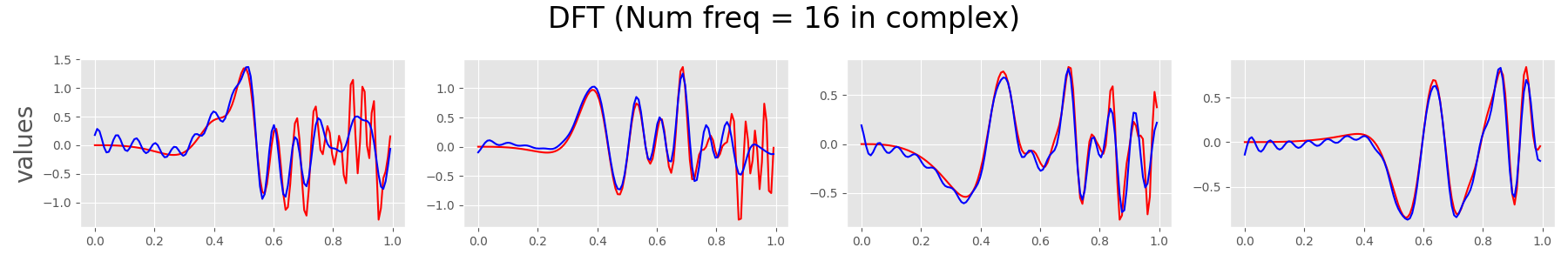}

    \caption{Reconsturction of NFT and DFT for nonlinearly transformed functions. Times in these figures are scaled by $1/128$.}
    \label{fig:1D-compression}
\end{figure}

As we can see in Fig \ref{fig:1D-compression}, by the nonlinear transform $t \mapsto t^3$, the left area around the origin is flattened, and the right area around $t=1$ is compressed.  As a result, the deformed signal has a broader frequency distribution than the original latent signal (see Fig \ref{fig:DFTonDeform} also).  The results in Fig \ref{fig:1D-compression} depict that the standard DFT, which is applied directly to the deformed signal, fits the left area with undesirable higher frequencies, while fitting the right area with an over smoothed signal. The NFT accurately reconstructs the signal in all the areas. 

\subsection{1D signal Harmonic analysis Sec \ref{sec:1d}} \label{sec:appendix_1d}

In this section we provide more details of the 1d signal experiment, along with the more detailed form of the objective we used to train the encoder and decoder. 
We also provide more visualizations of the spectral analysis (Fig \ref{fig:spectrum_plus}).

{\bf Dataset:} The dataset was constructed as described in Sec \ref{sec:1d}, and the velocity of the shift was chosen from the range $[0, 64]$, which suffices for the computation of the character inner product for real valued sequence of length 128.

\textbf{Model}: For both encoder and decoder, we used the same architecture as in \ref{sec:appendix_compression}. However, the latent space dimension was set to be $10 \times 16$, which is capable of expressing at most $10/2=5$ frequencies.  
\begin{figure}
\begin{center}
\includegraphics[scale = 0.4]{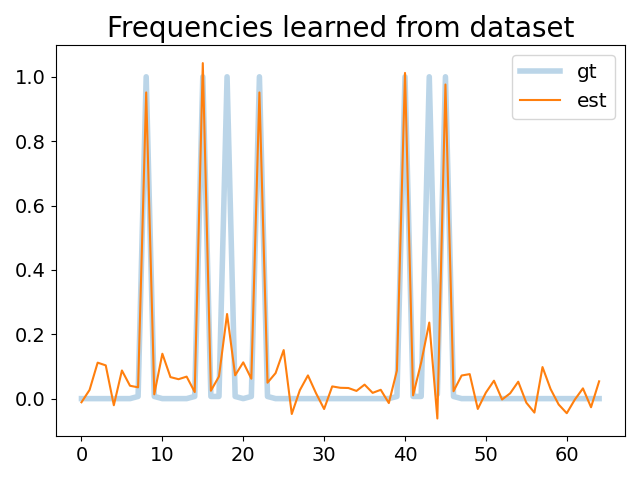}
\includegraphics[scale = 0.4]{1dNFT_figures/blockwise.png}
\caption{The plot of $\mathbb{E}[\langle M |  \rho_f \rangle ]$ (Left) and the plot of $\mathbb{E}[\langle B_i |  \rho_f \rangle ]$ for each block of the block-diagonalized $Ms$(right) when the major frequencies are $\{8, 15, 22, 40, 45\}$ and minor frequencies with weak coefficients are $\{18, 13\}.$
We can confirm in the plot that major spikes only occur at major frequencies, and that each block corresponds to a single major frequency.
Note that, when there are noise frequencies, they are slightly picked up in the overall spectrum of $M$ (Left). However, as seen in Sec \ref{sec:1d}, each block in the block diagonalization of $M$ contributes only slightly to the noise frequencies, distributing the noise over the entire $M$ instead of corrupting a specific dominant frequency.  
With this character analysis, we can identify the major frequencies almost perfectly by using an appropriate threshold (Fig \ref{fig:spectrum_plus}). }
\end{center}
\end{figure}

% \textbf{Objective of U-NFT}

% We implemented U-NFT in the way of \citep{MSP}.
% \citep{MSP} learns the symmetry from the dataset of sequences $(x, g\circ x, \dots  g^T \circ x)$  by using the least square solution of $M$ based on the encoder output $\Phi(s_{1:T_c})$  with $T_c< T$ and training the decoder to be able to predict $s_{T_c+1:T}$ by applying linear actions to $\Phi(s_{T_c})$:
% \begin{align}
% \begin{split}
% \mathcal{L}^p(\Phi, \Psi) &:= {\textstyle\sum}_\bs {\textstyle\sum}_{t={\Tc+1}}^{T} \left\| \Psi ( \bfM_*(\bsc|\Phi)^{t-\Tc} \Phi(s_{\Tc}) )-s_t \right\|_2^2 \\
% \textrm{where~} & \bfM_*(\bsc|\Phi)= \argmin{}_\bfM {\textstyle\sum}_{t=1}^{\Tc-1}  \left\| \bfM \Phi(s_{t}) - \Phi(s_{t+1})\right\|_F^2.
% \end{split} \label{eq:MSP_loss}
% \end{align}
% In this objective, $\bfM_*(\bsc|\Phi)$ can be analytically solved as a function of $\Phi$, allowing us to directly optimize $(\Phi, \Psi)$ without writing an explicit inner optimization loop. 
% We chose $T_c = 2$ for all experiments. 

\begin{figure}
\begin{center}
\includegraphics[scale = 0.25]{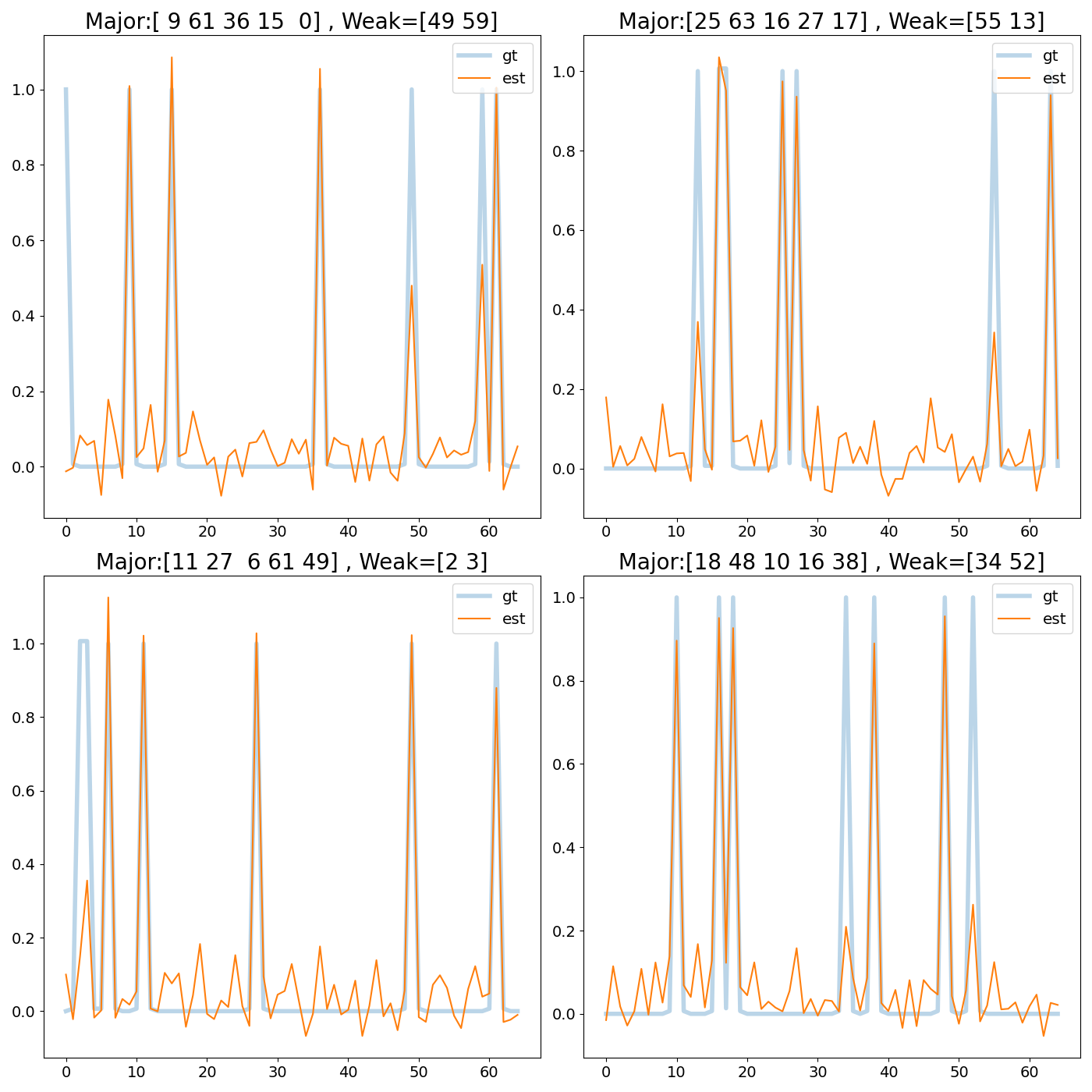}
\includegraphics[scale = 0.3]{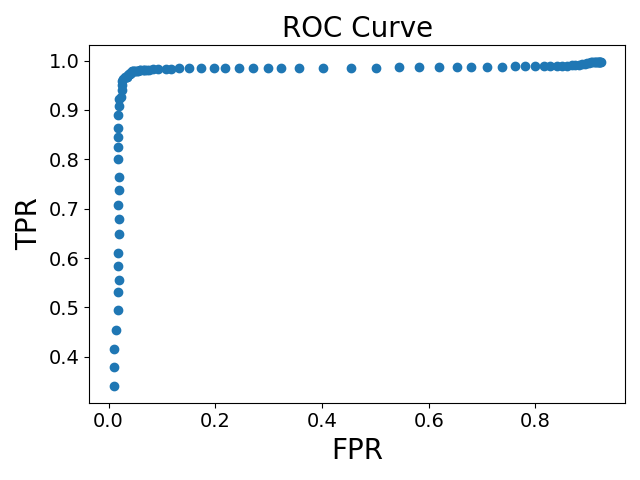}
\end{center}
\caption{(Left)Other visualizations of $\mathbb{E}[\langle M |  \rho_f \rangle ]$ plotted against $f$ on the deformed signal datasets generated with different sets of major/minor frequencies. On any one of these plots, we can identify the major frequencies by simply looking for the set of $f$s for which $\mathbb{E}[\langle M |  \rho_f \rangle ]$ is above some threshold. 
(Right) ROC curve of the major-frequency identification with different thresholding values, computed over 100 datasets with randomly selected 5 major frequencies and 2 minor frequencies. Note that, on this setting of frequency detection, there are always 5/64 frequencies that are "positive". When normalized by (1-5/64), AUC was 0.97. } 
\label{fig:spectrum_plus}
\end{figure}

\subsection{Prediction of the shift under Fisheye deformation in Sec \ref{sec:1d}}
We provide the details of the experiment on fisheye-deformed shift in Sec \ref{sec:1d}. 
We also provide more visualizations of the predicted sequence (Fig \ref{fig:deformed_images2}).

\textbf{Dataset}: For the fisheye prediction, the sequence of images was constructed from Cifar100 by first applying a sequence of horizontal shifts with random velocity in the range of $0:16$ pixels, and by applying the fisheye transformation \citep{defisheye}.  

\textbf{Model}: For the encoder and decoder, ViT architecture was used with mlp-ratio=4, depth=8, number of heads=6 and embed dimension = 256, and the model was trained over 200000 iterations with Adam (lr=0.01). We decayed the learning rate linearly from 80000-th iteration.    
In the training of encoder and decoder in U-NFT, the transition matrix $M$ from the first to the second frame was validated at the third and the fourth frame to compute the loss (Same as in eq.(\ref{eq:MSP}) with $T=2$, but used $\mathbb{E}[\|x_2 - \Psi(M_*(x_1))\|^2 + \|x_3 - \Psi(M_*^2(x_1))\|^2 ]$ to optimize ($\Phi, \Psi$)). 
\begin{figure}
    \centering
    \includegraphics[scale=0.2]{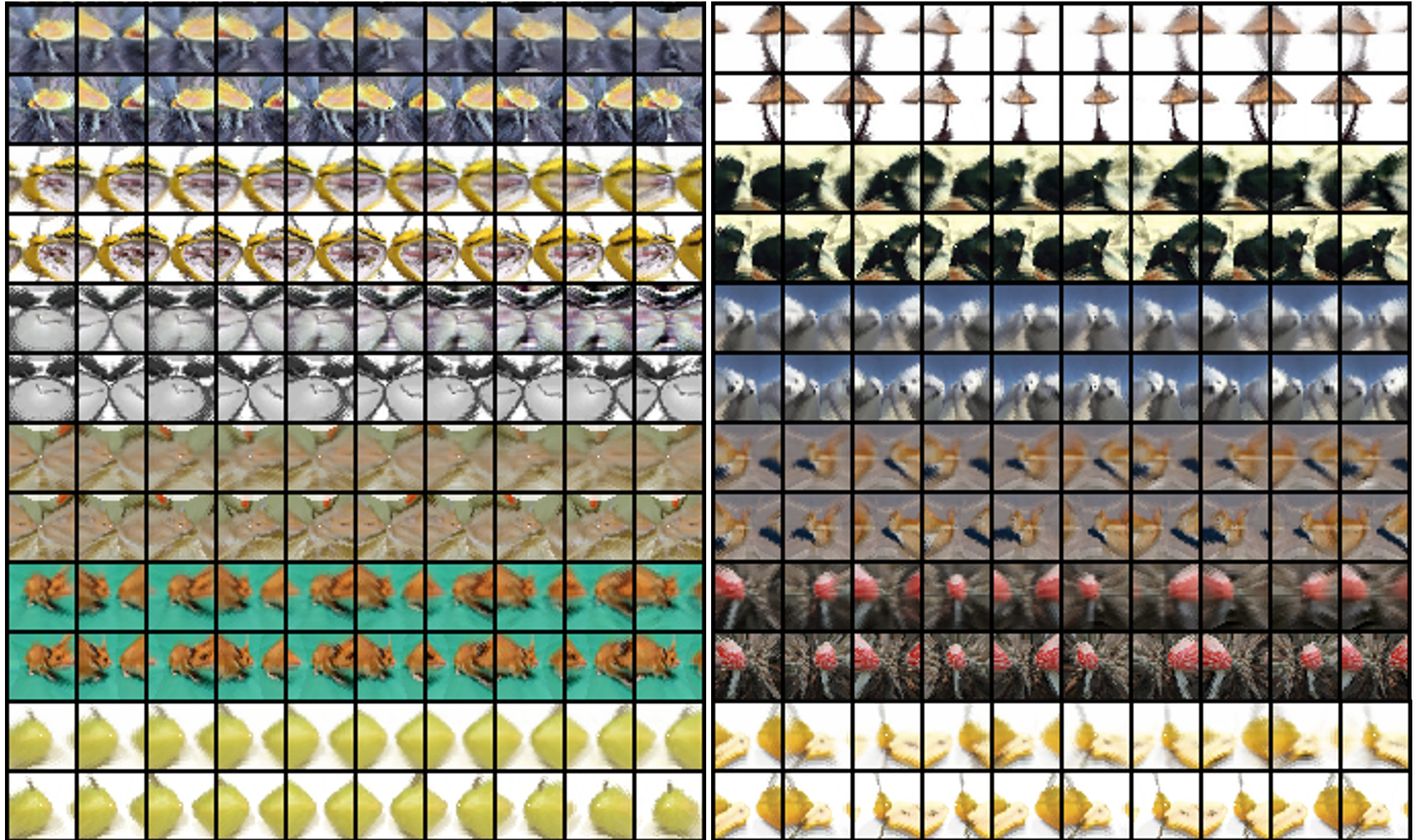}
    \caption{Prediction of shift under fish-eye deformation.
    Each predicted sequence is placed above the ground truth sequence. The predictions are computed as $\Psi ( M^k \Phi(x_0))$, where $M$ is being regressed from the encoding of the first two frames ($x_0, x_1$). }
    \label{fig:deformed_images2}
\end{figure}

\subsection{Details of SO(2) experiments (Section~\ref{sec:Ggknown})}
\label{AppSec:so2}

In this section we provide the details of the rotated MNIST experiment in Section~\ref{sec:Ggknown}. 
In this set of experiments, we used OOD task to compare the representation learned by the encoder $\Phi$ against other representation learning method as well as supervised methods. 
As representation learning methods, we trained an autoencoder (AE), SimCLR~\citep{chen2020simple}, the invariant autoencoder (IAE)~\citep{winter2022unsupervised} and MSP~\citep{MSP}.
We used two-layer CNN for both the encoder (and decoder) of these models. 
To evaluate these methods, we first trained the encoder $\Phi$ on the in-distribution training set, and then trained the linear classifier on the out of distribution dataset by optimizing 
$$CrossEntropy(Softmax(W^T \Phi(X)), Y)$$
with respect to $W$ using \textbf{the fixed} $\Phi$, where $Y$ is the label of $X$. 
In Figure \ref{fig:rotmnist}, we reported the size of the out-of-distribution dataset used to train $W$ as \textit{num-finetune-data}. 

For the supervised methods to compare against NFT, we used the two-layer CNN, $C_n$ (cyclic group) steerable CNN~\citep{SteerableCNN}, and $SO(2)$ steerable CNN~\cite{weiler2018learning}. 
For the training of these models, we simply trained the cross entropy loss on the in-distribution training set, and fine-tuned the final softmax layer on the out-of-distribution dataset to evaluate the OOD performance. 

% Baseline methods for this experiment include an autoencoder (AE), SimCLR~\citep{chen2020simple}, the invariant autoencoder (IAE)~\citep{winter2022unsupervised}, MSP~\citep{MSP}, and supervised networks. As the encoder of the supervised networks, we used two-layer CNN, $C_n$ (cyclic group) steerable CNN~\citep{SteerableCNN}, and $SO(2)$ steerable CNN~\citep{weiler2018learning}. 
%
\kh{For steerable networks, we implemented them based on escnn library~\citep{cesa2022a}}
We used the same architecture described in \raggedright\url{https://github.com/QUVA-Lab/escnn/blob/master/examples/model.ipynb} for $C_n$ CNN and \raggedright\url{https://uvadlc-notebooks.readthedocs.io/en/latest/tutorial_notebooks/DL2/Geometric_deep_learning/tutorial2_steerable_cnns.html#SO(2)-equivariant-architecture} for $SO(2)$ CNN.
For g-NFT and MSP, we reshaped the $d_a d_m$-dimensional output of the encoder into a $d_a \times d_m$ matrix so that it is compatible with the group representation matrix $M\in \mathbb{R}^{d_a \times d_a}$ acting from left. 
\kh{For g-NFT, because we know the irreducible representations of SO(2), we modeled the transition matrix in the latent space as representation $M(\theta)$ with frequency up to $l_{\max}=2$. 
 We thus used the same parametrization as in the compression experiment of Sec \ref{sec:1d} except that we used $1\times 1$ identity matrix for the frequency 0, producing $(2l_{\max}+1)\times (2l_{\max}+1)$ matrix. 
%  that is, $M(\theta)=1\bigoplus_{l=1}^{l_{\max}} \begin{pmatrix}
%     \cos l \theta &-\sin l \theta\\
%     \sin l \theta &\cos l \theta
% \end{pmatrix}\in\mathbb{R}^{(2l_{\max}+1)\times (2l_{\max}+1)}$.
We then trained $(\Phi, \Psi)$ with $\Phi$ having the  latent dimension $\mathbb{R}^{(2l_{\max}+1) \times d_m}$.}
We trained each model for 200 epochs, which took less than 1 hour with a single V100 GPU.
We used AdamW optimizer~\citep{loshchilov2017decoupled} with $\beta_1=0.9$ and $\beta_2=0.999$.

The hyperparameter space and the selected hyperparameters for each method were as follows:
\begin{itemize}
    \item \textbf{autoencoder} learning rate (LR): 1.7393071138601898e-06, weight decay (WD): 2.3350294269455936e-08
    \item \textbf{supervised} LR: 0.0028334155436987537, WD: 4.881328098770217e-07
    \item \textbf{SimCLR}~\citep{chen2020simple} projection head dim: 176, temperature: 0.3296851654086481, LR: 0.0005749009519935808, WD: 2.7857915542790035e-08
    \item \textbf{IAE}~\citep{winter2022unsupervised} LR: 0.0013771092749156428, WD: 1.2144233629665408e-06
    \item \textbf{Steerable(C\_N)} angular resolution $n$: 28, LR: 0.002736272679599058, WD: 6.569644997770058e-06
    \item \textbf{Steerable(SO2)} maximum frequency: 4, LR: 0.003013815048663727, WD: 7.33786837736026e-06    
    \item \textbf{MSP}~\citep{MSP} $d_a$: 9, latent dimension $d_m$: 252, LR: 1.2124794217519062e-05, WD: 0.016388489985789633
    \item \textbf{g-NFT} maximum frequency $l_{\max}$: 2, $d_m$: 180, LR: 0.000543795556795646, WD: 0.0006491627240310113
\end{itemize}
The hyperparameters of each baseline, such as the learning rate and $l_{\max}$, were selected by Optuna~\citep{optuna} based on the test prediction error on MNIST.

Although IAE is designed to be able to estimate each group action $g$, we supervised $g$ in this experiment to make a fair comparison to other $g$-supervised baselines. 
As the loss function for g-NFT, we used the sum of the autoencoding loss defined in Theorem~\ref{thm:bottleneck} together with the \emph{alignment loss}: $\|\Phi(g\circ X) - M(g)\Phi(X)\|^2$. This loss function promotes the equivariance of the encoder $\Phi$.  See Fig \ref{fig:rotmnist_morefreqs} for more visualizations of the reconstruction with different values of $l_{\max}$.

\subsection{Details of 3D experiments (Section~\ref{sec:Ggknown})}
In this section we provide the details of the experiment on the rendering of 3D objects (ModelNet10-SO3, Complex BRDFs, ABO-Material). All the experiments herein were conducted with 4 V100 GPUs. 
%Please see the videos folder for the movie rendition of the results in this section.

\paragraph{ModelNet10-SO3} The dataset in this experiment contains the object-centered images of 4,899 CAD models\footnote{\url{https://github.com/leoshine/Spherical_Regression/tree/master/S3.3D_Rotation}}. Each image was rendered with the uniformly random camera position $g\in SO(3)$. The CAD models were separated into a training set (100 images for each CAD model) and a test set (20 images for each). We downscaled the image resolution to $64\times 64$. We used the vision Transformer (ViT) for the encoder and the decoder. The encoder consisted of 9 blocks of 512-dimensional embedding and 12 heads. The decoder consisted of 3 blocks of 256-dimensional embedding and 6 heads. The patch size was $4\times 4$ and MLP-ratio was $4$ for both networks. We set $d_m=64$ and $l_{\max}=8$. The encoder output was converted to the $d_a\times d_m$ matrix by EncoderAdapter module defined by the following PyTorch pseudocode. 
\begin{lstlisting}{language=python}
from torch import nn
from einops.layers.torch import Rearrange

class EncoderAdapter(nn.Module):
    def __init__(self, num_patches, embed_dim, d_a, d_m):
        self.net = nn.Sequential(
            nn.Linear(embed_dim, embed_dim // 4),
            Rearrange('b n c -> b c n'),
            nn.Linear(num_patches, num_patches // 4),
            nn.GELU(),
            nn.LayerNorm([embed_dim // 4, num_patches // 4]),
            Rearrange('b c n -> b (c n)'),
            nn.Linear(embed_dim * num_patches // 16, d_a * d_m),
            Rearrange('b (m a) -> b m a', m=d_m),
        )
    
    def forward(self, encoder_output):
        return self.net(encoder_output)
\end{lstlisting}
We also used a similar network before the decoder to adjust the latent variable. We used AdamW optimizer with batch size $48$, learning rate $10^{-4}$, and weight decay $0.05$. We didn't use the alignment loss described in Sec \ref{AppSec:so2}. We trained the model for 200 epochs, which took 3 days.

\paragraph{Complex BRDFs}: The dataset in this experiment contains the renderings of ShapeNet objects from evenly placed 24 views\footnote{\url{https://github.com/google-research/kubric/tree/main/challenges/complex_brdf}}. The camera positions in this experiment are constrained on a circle with a fixed radius so the group action of moving the camera position can be interpreted as the action of $SO(2)$ rather than $SO(3)$. We used 80\% of the objects for training and 20\% for testing. Following the terminology of the ViT family\footnote{\url{https://github.com/google-research/vision_transformer}}, We used ViT-B/16 for the encoder and ViT-S/16 for the decoder. The learning rate was 3e-4. We trained the model for 100 epochs, which took 1.5 days. Other settings were the same as the ones we used in the ModelNet experiment. 

\paragraph{ABO-Material}: The dataset in this experiment contains 2.1 million rendered images of 147,702 product items such as chairs and sofas\footnote{\url{https://amazon-berkeley-objects.s3.amazonaws.com}}. The images were rendered from 91 fixed camera positions along the upper icosphere. We reduced the original $512\times 512$ resolution to $224\times 224$. The dataset was randomly partitioned into training (80\%), validation (10\%), and test (10\%). The encoder and decoder architectures were the same as for the BRDFs experiment. We trained the model for 400 epochs with batch size 36, which took 12 days.

% \subsection{other figures} 
% \begin{figure}[h!]
%     \centering
%     \includegraphics[scale=0.7]{}
%     \caption{How the encoder output changes with $g$. Each curve indicates the $\ell_2$ distance $\|\Phi_i(Y) - \Phi_i (g \circ Y) \|$ in the $i$th frequency subspace of the latent space, plotted for frequencies up to 4 for each MNIST digits(0--9).} 
%     % \caption{How encoder output changes with $g$. Each curve indicates the $\ell_2$ distance in the latent space $\|\Phi_i(Y) - \Phi_i(g\circ Y)\|_2$ for frequencies up to 4 for each MNIST digits 0--9. NOTE: the results are using occlusion.}
%     \label{fig:my_label}
% \end{figure}

% \begin{figure}[h!]
%     \centering
%     \includegraphics[scale=0.7]{figures/rotmnist_frequencies.png}
%     \caption{Rotated MNIST constructed from different frequencies. 
%     For the reconstruction from $k$th frequency,  we zeroed out all frequencies other than the dimensions corresponding to the $k$th frequency before decoding the latent.  }
%     \label{fig:rotmnist_freqs}
% \end{figure}

\begin{figure}[h!]
    \centering
    \includegraphics[scale=0.4]{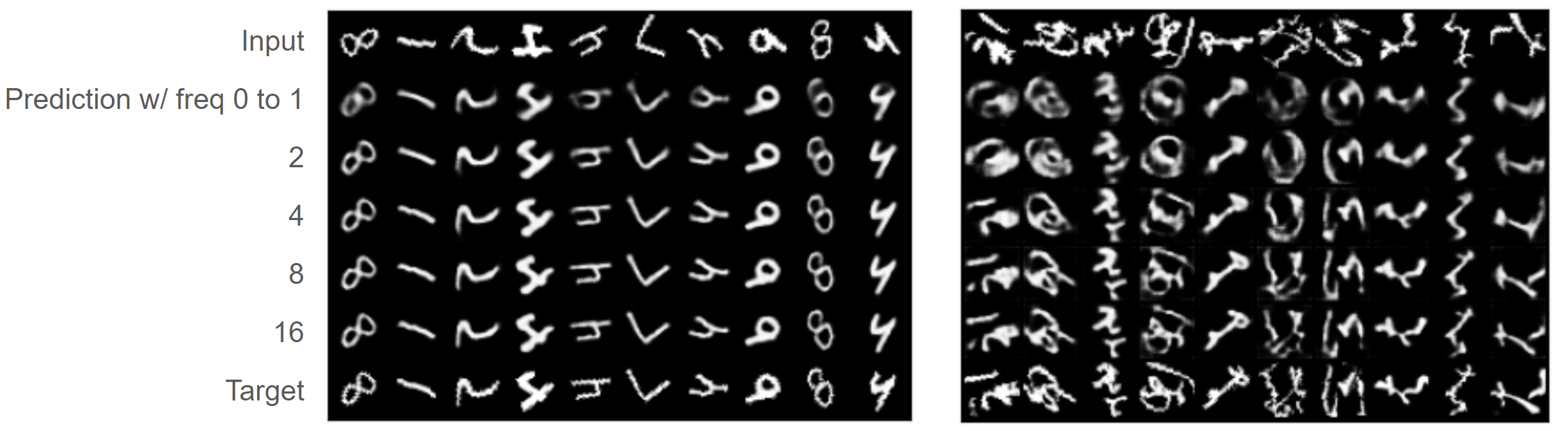}
    \caption{Reconstruction conducted with different values of $l_{max}$ ($1\sim~16$). Higher frequencies promote sharper reconstruction of the images.}
    \label{fig:rotmnist_morefreqs}
\end{figure}

\section{Supplementary Experiments for Rotated MNIST}
\label{sec:MNIST}
% In Section~\ref{sec:Ggknown}, we explored experiments of Rotated MNIST where the group action (rotation) operates linearly on the observation space.
To further investigate the capability of NFT, we also extended our experiments of 
\ref{AppSec:so2} to consider the cases of image occlusion as well. Specifically, we zeroed out three quadrants of the input images, so that only one quarter of the image is visible to the models at all time. This is a more challenging task because the actions are not invertible at pixel level.
We trained the encoder and decoder in the NFT framework on this dataset, and conducted the same experiment as in \ref{AppSec:so2}.  As illustrated in Fig \ref{fig:occluded_mnist}, the NFT-trained feature consistently demonstrated competitive performance~(g-NFT, MSP).
%original (unoccluded) images from the half-visible images

\begin{figure}[htbp]
\centering
\includegraphics[scale=0.8]{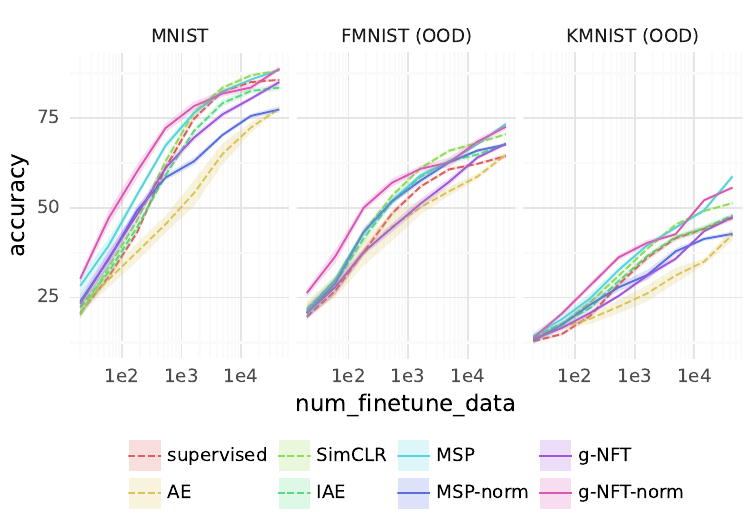}
\caption{Linear probe results on Rotated MNIST with image occlusions.}
\label{fig:occluded_mnist}
\end{figure}

\begin{figure}
\centering
\includegraphics[scale=0.4]{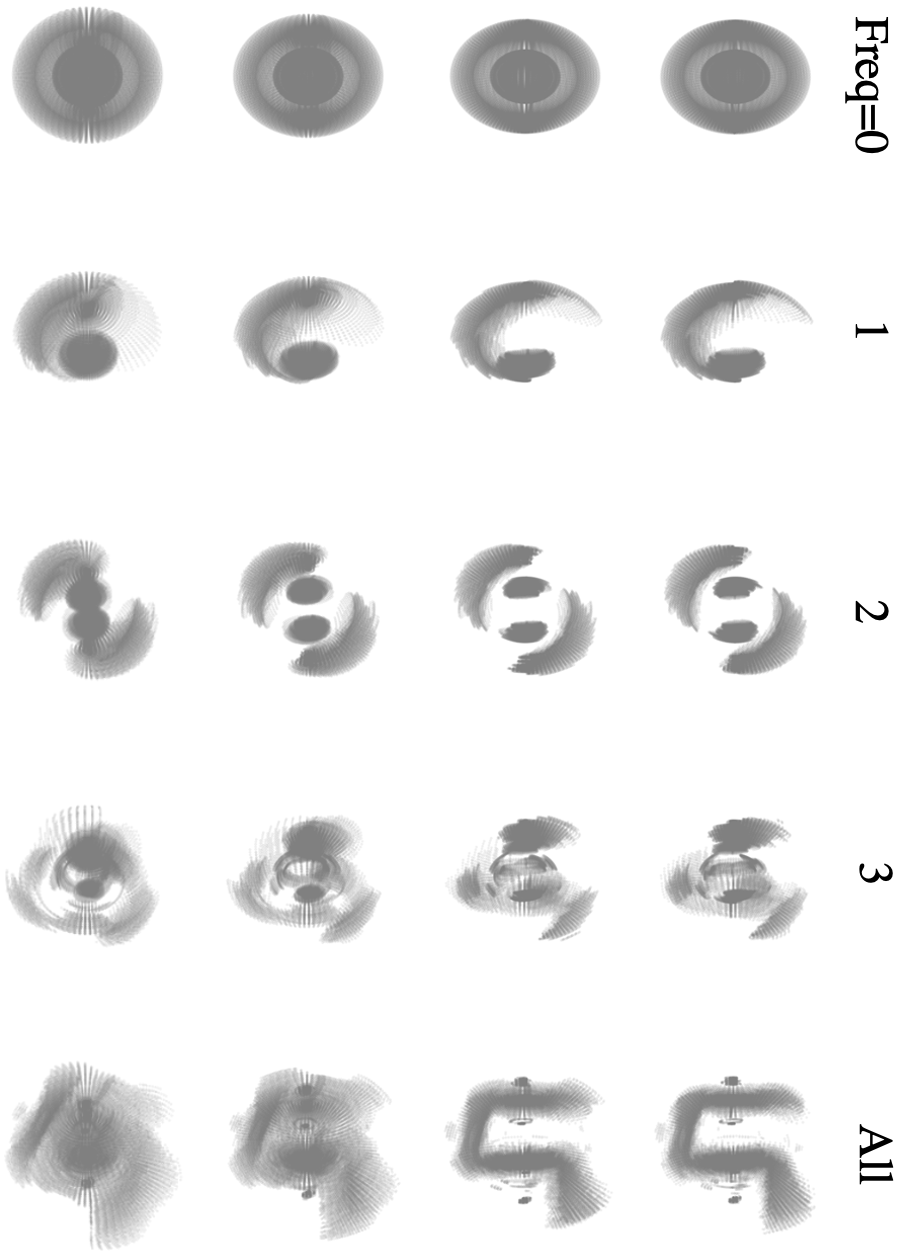}
\caption{Spherical Decomposition of 3D point cloud representations of the same chair object in the leftmost panel of Fig \ref{fig:NVS}, conducted in the way of \citep{henrikFT}. Although our experiment on the ModelNet dataset does not access the 3D voxel information nor include such structure in the latent space, each frequency in Fig \ref{fig:NVS} has much resemblance to the result of the point-cloud derived harmonic decomposition. See the movie folder in the Supplementary material for the movie rendition of this visualization (modelnet\_chair\_spherical.gif). } 
\label{fig:spherical}
\end{figure}

\begin{figure}
\includegraphics[scale=1.0]{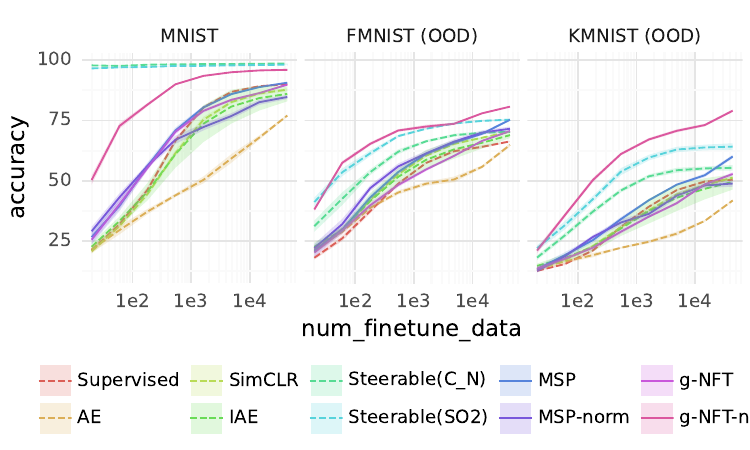}
\caption{The result of the rotMNIST experiment (Fig.\ref{fig:rotmnist}) including the MSP and MSP-norm.}
\end{figure}

% \begin{figure}
%     \centering
%     \includegraphics[scale=0.4]{}
%     \caption{The NFT framework. The NFT framework consists of (1) the step of learning an encoder-decoder pair $(\Phi, \Psi)$ that linearlizes all transitions in the sequence, and (2) the step of finding a linear encoder and decoder $(P, P^{-1})$ that decomposes the latent space into equivariant subspaces (Simultaneous block diagonalization, see also Fig \ref{fig:sbd}. )  }
%     \label{fig:NFTframework}
% \end{figure}

% \begin{figure}
%     \centering
%     \includegraphics[scale=0.3]{}
%     \caption{The forward and inverse Neural Fourier Transform ($P \Phi$, $P^{-1} \Psi$.) The forward transform decomposes the data into equivariant linear subspaces, or irreducible representations/frequencies. This scheme agrees with FT when all frequencies are one dimensional.   }
%     \label{fig:NFTframeworkINFT}
% \end{figure}

\begin{figure}
    \centering
    \includegraphics[scale=0.04]{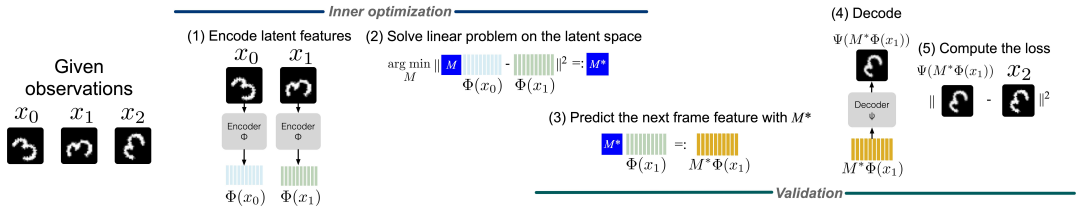}
    \caption{The meta sequential prediction (MSP) method used for U-NFT. }
    \label{fig:U-NFT}
\end{figure}

\begin{figure}
    \centering
    \includegraphics[scale=0.04]{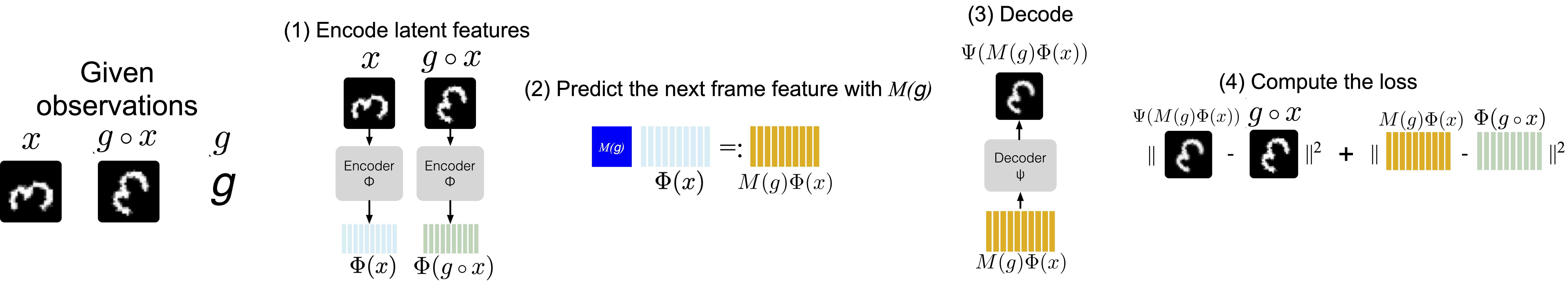}
    \caption{The training algorithm for g-NFT.}
    \label{fig:g-NFT}
\end{figure}

% \begin{figure}
%     \centering
%     \includegraphics[scale=0.8]{}
%     \caption{Performance comparison of steerable CNNs on RotMNIST datasets.}
%     \label{fig:vsSteerable}
% \end{figure}

\end{document}